
\documentclass{article}

\usepackage{lmodern}
\usepackage[english]{babel}
\usepackage{latexsym}
\usepackage{amsmath}
\usepackage{mathrsfs}
\usepackage{amssymb}
\usepackage{mathtools}
\usepackage[inline,shortlabels]{enumitem} 
\usepackage{bm}
\usepackage{datetime}
\usepackage[table,xcdraw]{xcolor}
\usepackage{accents}
\usepackage{tikz}
\usepackage{listings}\setlength{\textfloatsep}{2pt}
\usepackage{mdframed}
\usepackage{pgfplots}
\usepackage{pgfplotstable}
\usepackage{dsfont}
\usepackage{color}
\usepackage{colortbl}
\usepackage{pifont}
\usepackage[bf,font=small,singlelinecheck=off,aboveskip=1pt, belowskip=1pt]{caption}

\usepackage{microtype} 
\usepackage{float}
\usepackage{xfrac} 
\usepackage{xspace}
\usepackage{booktabs}
\usepackage{blkarray}

\newenvironment{proofsketch}{%
  \proof}{\endproof}

\usepackage[pagebackref=true]{hyperref}
\hypersetup{
    unicode=false,          
    pdftoolbar=true,        
    pdfmenubar=true,        
    pdffitwindow=false,     
    pdfstartview={FitH},    
    pdftitle={XXXXX},    
    pdfauthor={XXX},     
    pdfsubject={Bandits, Reinforcement Learning},   
    pdfcreator={Creator},   
    pdfproducer={Producer}, 
    pdfkeywords={bandits} {reinforcement learning} {policy gradient}, 
    pdfnewwindow=true,      
    colorlinks=true,       
    linkcolor=blue,          
    citecolor=blue,        
    filecolor=magenta,      
    urlcolor=cyan           
}
\renewcommand*\backref[1]{\ifx#1\relax \else (cited on #1) \fi}
\usepackage{amsthm}
\usepackage{times}
\usepackage{natbib}
\usepackage{nicefrac}
\usepackage{wrapfig}
\usepackage{pgfplots}
\usepackage[capitalize]{cleveref}
\usepackage[nottoc]{tocbibind}

\usepackage[normalem]{ulem}

\usepackage{thmtools, thm-restate}

\declaretheorem{theorem}
\declaretheorem{proposition}
\declaretheorem{assumption}
\declaretheorem{corollary}

\newtheorem{lemma}{Lemma}
\usepackage{comment}
\setlength{\intextsep}{1\baselineskip}
\setlength{\textfloatsep}{2pt}
\setlength{\topsep}{0pt} 
\setlength{\parskip}{0pt}

\Crefname{equation}{Eq.}{Eqs.}
\Crefname{section}{Sec.}{Sec.}
\Crefname{appendix}{App.}{App.}
\Crefname{algorithm}{Alg.}{Alg.}
\Crefname{assumption}{Assn.}{Assn.}
\Crefname{theorem}{Thm.}{Thm.}
\Crefname{corollary}{Cor.}{Cor.}
\Crefname{proposition}{Prop.}{Prop.}

\newcommand{\appendixTitle}{%
\vbox{
    \centering
	\hrule height 4pt
	\vskip 0.2in
	{\LARGE \bf Supplementary Material}
	\vskip 0.2in
	\hrule height 1pt 
}}
\newcommand{\Alg}{\text{Armijo-LS}}
\newcommand{\LS}{\text{Armijo-LS}}

\newcommand{\GDLS}{\texttt{GD-LS}}
\newcommand{\GDC}{\texttt{GD(1/L)}}
\newcommand{\SGDSLS}{\texttt{SGD-SLS}}
\def\lo{ \lambda_1}
\def\lz{ \lambda_0 }

\newcommand{\pit}{{\pi_{t}}}
\newcommand{\thetat}{{\theta_{t}}}

\newcommand{\normsq}[1]{\left\|#1 \right\|_{2}^{2}}
\newcommand{\norm}[1]{\left\|#1 \right\|}

\newcommand{\tht}{\theta_t}

\newcommand{\thtt}{\theta_{t+1}}

\newcommand{\f}{{f}}
\newcommand{\fopt}{{f^*}}
\newcommand{\ft}{{f_{t}}}

\def\1{\bm{1}}

\newcommand{\etat}{{\eta_t}}










\DeclareMathAlphabet{\mathsfit}{\encodingdefault}{\sfdefault}{m}{sl}
\SetMathAlphabet{\mathsfit}{bold}{\encodingdefault}{\sfdefault}{bx}{n}

\def\cA{{\mathcal{A}}}

\def\cF{{\mathcal{F}}}

\def\cI{{\mathcal{I}}}

\def\cS{{\mathcal{S}}}









\newcommand{\E}{\mathbb{E}}

\newcommand{\R}{\mathbb{R}}



\DeclareMathOperator*{\argmax}{arg\,max}
\DeclareMathOperator*{\argmin}{arg\,min}

\def\FMAPG/{{FMA-PG}}
\def\DFMAPG/{{DFMA-PG}}
\def\SFMAPG/{{SFMA-PG}}


\usepackage[accepted]{icml2025}

\icmltitlerunning{Armijo Line-search Can Make (Stochastic) Gradient Descent Provably Faster}

\begin{document}

\twocolumn[
\icmltitle{Armijo Line-search Can Make (Stochastic) Gradient Descent Provably Faster}



\icmlsetsymbol{equal}{*}

\begin{icmlauthorlist}
\icmlauthor{Sharan Vaswani}{sfu}
\icmlauthor{Reza Babanezhad}{samsung}
\end{icmlauthorlist}

\icmlaffiliation{sfu}{Simon Fraser University}
\icmlaffiliation{samsung}{Samsung AI, Montreal}

\icmlcorrespondingauthor{Sharan Vaswani}{vaswani.sharan@gmail.com}
\icmlcorrespondingauthor{Reza Babanezhad}{babanezhad@gmail.com}

\icmlkeywords{Gradient Descent, Armijo Line-search, Non-uniform Smoothness, Interpolation}

\vskip 0.3in
]



\printAffiliationsAndNotice{}  


\begin{abstract}
Armijo line-search ($\Alg$) is a standard method to set the step-size for gradient descent (GD). For smooth functions, $\Alg$ alleviates the need to know the global smoothness constant $L$ and adapts to the ``local'' smoothness, enabling GD to converge faster. Existing theoretical analyses show that GD with $\Alg$ ($\GDLS$) can result in constant factor improvements over GD with a $1/L$ step-size (denoted as $\GDC$). We strengthen these results and show that if the objective function satisfies a certain non-uniform smoothness condition, $\GDLS$ can result in a faster convergence rate than $\GDC$. In particular, we prove that for convex objectives corresponding to logistic regression and multi-class classification, $\GDLS$ can converge to the optimum at a linear rate, and hence improves over the sublinear convergence of $\GDC$. Furthermore, for non-convex objectives satisfying gradient domination (e.g. those corresponding to the softmax policy gradient in RL or generalized linear models with a logistic link function), $\GDLS$ can match the fast convergence of algorithms tailored for these specific settings. Finally, we analyze the convergence of stochastic GD with a stochastic line-search on convex losses under the interpolation assumption.
\end{abstract}

\section{Introduction}
\label{sec:introduction}
Gradient descent (GD)~\citep{cauchy1847methode} and its stochastic variants are the preferred optimization methods in machine learning. The practical effectiveness of gradient methods heavily relies on the choice of the step-size (``learning rate'') parameter. Backtracking Armijo line-search~\citep{armijo1966minimization,nocedal2006numerical} (referred to as $\LS$) is a standard method to set the step-size for gradient descent. Given an initial step-size, the simplest form of $\LS$ ``searches'' for the largest step-size that guarantees a sufficient decrease in the function value. When minimizing $L$-smooth functions using GD, $\LS$ alleviates the need to know $L$, the global smoothness constant and enables setting the GD step-size in an adaptive manner. For both $L$-smooth convex and non-convex functions, GD with $\LS$ (henceforth $\GDLS$) has been shown to match the favorable theoretical guarantees of GD with a constant $\nicefrac{1}{L}$ step-size (henceforth $\GDC$). However, empirically, $\GDLS$ typically results in faster convergence and is consequently, the default choice in practice. 

One often-cited reason to explain the faster convergence of $\GDLS$ is that it adapts to the ``local'' smoothness constant $L(\theta)$ near the point $\theta$, and results in an effective step-size of $\nicefrac{1}{L(\theta)}$. In some regions, $L(\theta)$ might be much smaller than the global smoothness $L$, thus allowing $\GDLS$ to use much bigger step-sizes and consequently lead to faster convergence. Existing works that define notions of local smoothness~\citep[and references there in]{Scheinberg2014,LuM23,Mishkin2024,fox2024glocal} can be used to formalize this intuition and show that $\GDLS$ can result in constant factor improvements over $\GDC$, while having the same rate of convergence. In contrast, we consider a special class of objective functions and show that $\GDLS$ (with a large initial step-size) can result in a \textit{provably faster rate of convergence} compared to $\GDC$. In particular, we make the following contributions. 

\textbf{Contribution 1.} In~\cref{sec:problem}, we introduce a class of functions that satisfy an $(L_0, L_1)$ non-uniform smoothness condition. In particular, we consider functions where the local smoothness constant around a point $\theta$ is given by $L(\theta) = L_0 + L_1 \, f(\theta)$ where $L_0$ and $L_1$ are non-negative constants (and $L_1 = 0$ corresponds to the standard uniform smoothness). We show that the proposed condition is satisfied by common objectives; for example, both the logistic and exponential losses used for linear classification satisfy the condition with $L_0 = 0$ and $L_1 \neq 0$. Furthermore, we prove that this condition is also satisfied by non-convex functions corresponding to generalized linear models with a logistic link function, and the softmax policy gradient objective in reinforcement learning.   

This condition is similar to that proposed in~\citet{zhang2019gradient} to explain the success of normalization and gradient clipping when training neural networks. The difference is that we consider problems where the smoothness is proportional to $f(\theta)$ rather than $\norm{\nabla f(\theta)}$~\citep{zhang2019gradient, zhang2020improved, chen2023generalized}. Furthermore, in~\cref{sec:problem}, we show that non-negative, twice-differentiable functions satisfying the non-uniform smoothness condition in~\citet{zhang2019gradient} satisfy the proposed condition.

\textbf{Contribution 2.} In~\cref{sec:gd-armijo}, we analyze the convergence of $\GDLS$ on functions satisfying the proposed conditions. We first prove that the step-size selected by $\GDLS$ around $\theta$ is lower-bounded by $1/L(\theta)$, and hence, $\GDLS$ provably adapts to the local smoothness. We use this to prove~\cref{thm:main}, a meta-theorem that quantifies the convergence rate of $\GDLS$ for both convex and non-convex objectives. 

\textbf{Contribution 3.} In~\cref{sec:convex-gd}, we instantiate~\cref{thm:main} for non-uniform smooth, convex losses that include logistic regression and multi-class classification with the cross-entropy loss as examples. Specifically, in~\cref{cor:convex-gen}, we show that $\GDLS$ (with a sufficiently large initial step-size) converges at an $O\left(\left(\nicefrac{f^*}{\epsilon}\right) \, \ln\left(\nicefrac{1}{\epsilon}\right)\right)$ rate where $f^* := \inf f(\theta)$. Hence, when $f^*$ is $\Theta(\epsilon)$, $\GDLS$ converges at an $O\left(\ln\left(\nicefrac{1}{\epsilon}\right)\right)$ rate, compared to the sublinear $O\left(\nicefrac{1}{\epsilon}\right)$ convergence of $\GDC$. We further instantiate this result for logistic regression on linearly separable data, and prove the linear convergence of $\GDLS$ (\cref{cor:logistic-iterate}), thus matching the rate for normalized GD~\citep{axiotis2023gradient}. In~\cref{app:convex-gd}, we show that GD with the Polyak step-size~\citep{polyak1987introduction} can inherit this fast convergence for logistic regression.

In comparison to our results, most work exploiting non-uniform smoothness~\citep{zhang2019gradient,zhang2020improved,koloskova2023revisiting,chen2023generalized,li2023convex} requires the knowledge of the corresponding problem-dependent constants. Notable exceptions include~\citet{hubler2024parameter} and~\citep{vankov2024optimizing,takezawa2024polyak,gorbunov2024methods} that consider $\GDLS$ and the Polyak step-size respectively, and aim to minimize the class of non-uniform smooth functions in~\citep{zhang2019gradient}. In particular, the work in~\citet{hubler2024parameter} considers general non-convex functions and does not demonstrate the algorithm's adaptivity to the smoothness, nor does it result in a faster rate than $\GDC$. Moreover, their resulting algorithm requires the knowledge of the non-uniform smoothness constant, making it impractical. On the other hand,~\citep{vankov2024optimizing,takezawa2024polyak,gorbunov2024methods} consider minimizing convex, non-uniform smooth functions using GD with the Polyak step-size. In~\cref{sec:convex-gd}, we show that $\GDLS$ (without any modification) can achieve a similar convergence guarantee as these works.

\textbf{Contribution 4.} In~\cref{sec:nonconvex-gd}, we instantiate~\cref{thm:main} for non-convex functions satisfying non-uniform smoothness and gradient domination conditions that guarantee global optimality. Specifically, in~\cref{sec:graddom}, we analyze the convergence of $\GDLS$ for the softmax policy gradient objective in reinforcement learning~\citep{mei2020global}. In this setting, the linear convergence rate attained by $\GDLS$ is provably better than the $\Omega(\nicefrac{1}{\epsilon})$ convergence of $\GDC$ and matches the rate of natural policy gradient~\citep{kakade2002approximately}. In~\cref{sec:pl}, we analyze the convergence of $\GDLS$ for functions satisfying the PL condition~\citep{polyak1987introduction,karimi2016linear} and instantiate the result for generalized linear models with the logistic link function. Our result demonstrates that $\GDLS$ can converge faster than both constant step-size and normalized GD~\citep{hazan2015beyond}. 

\textbf{Contribution 5.} Finally, in~\cref{sec:convex-sgd}, we investigate whether the advantages of line-search carry over to the stochastic setting. Specifically, we consider a finite-sum objective $f(\theta) = \frac{1}{n} \sum_{i = 1}^{n} \, f_i(\theta)$, and study the convergence of stochastic gradient descent (SGD) in conjunction with a stochastic line-search proposed in~\citet{vaswani2019painless}. We restrict our attention to the \textit{interpolation setting}~\citep{vaswani2019fast,ma2018power,schmidt2013fast} which implies that each $f_i$ is minimized at $\theta^* := \argmin f(\theta)$. The interpolation assumption is of practical interest~\citep{zhang2016understanding,belkin2019does}, for example, it is satisfied for logistic regression with linearly separable data. 

Interpolation enables the fast convergence of SGD, allowing it to match the GD convergence rate, but with an $O(1)$ iteration cost. Under interpolation, SGD with a stochastic line-search (referred to as $\SGDSLS$) and its variants empirically outperform constant step-size SGD, and have been used to train deep neural networks~\citep{vaswani2019painless,galli2024don}. We further investigate the convergence of $\SGDSLS$ for logistic regression with linearly separable data, and prove that it can leverage non-uniform smoothness to achieve faster rates.




\vspace{-2ex}
\section{Problem Formulation}
\label{sec:problem}
We aim to solve the unconstrained minimization problem: $\min_{\theta \in \R^d} f(\theta)$. We define $\theta^* \in \arg\inf f(\theta)$ as an optimal solution and $f^* := \inf f(\theta)$ as the minimum function value. Throughout, we consider $f$ to be twice-differentiable and satisfying the following assumptions: 
\begin{assumption}
$f$ is non-negative i.e. for all $\theta$, $f(\theta) \geq 0$. 
\label{assn:non-negative}
\end{assumption}
\begin{assumption}
$f$ is $(L_0, L_1)$ non-uniform smooth i.e. for constants $L_0, L_1 \geq 0$, 

(a) For all $x, y$ such that $\norm{x - y} \leq \frac{q}{L_1}$, where $q \geq 1$ is a constant, if $A := 1 + e^q - \frac{e^q - 1}{q}$ and $B := \frac{e^q - 1}{q}$,  
    \begin{align}
    f(y) & \leq f(x) + \langle \nabla f(x), y - x \rangle \nonumber \\ & + \frac{(A \, L_0 + B \, L_1 \, f(x)) \, }{2} \normsq{y - x} \label{eq:nus-condition}
    \end{align}
(b) For all $\theta$, $\norm{\nabla^2 f(\theta)} \leq L_0 + L_1 \, f(\theta)$
\label{assn:nus}
\end{assumption}
The above condition is similar to the non-uniform smoothness conditions proposed in the literature~\citep{zhang2019gradient, zhang2020improved, chen2023generalized}. Subsequently, we will see that this alternate definition of non-uniform smoothness is more general and enables a simpler analysis for $\GDLS$. 

If $L_1 = 0$,~\cref{assn:nus} recovers the standard uniform smoothness condition as a special case. Consequently, common smooth objectives such as linear regression or logistic regression satisfy the above condition. For example, if $X \in \R^{d \times n}$ is the feature matrix, and $y \in \R^n$ is the vector of measurements, then the linear regression objective, $f(\theta) = \frac{1}{2n} \, \normsq{X \, \theta - y}$ is $(\frac{1}{n} \lambda_{\max}[X^T X], 0)$ non-uniform smooth where $\lambda_{\max} [A]$ is the maximum eigenvalue of the positive semi-definite matrix $A$. 

In order to show the benefit of GD with Armijo line-search, we will focus on functions where $L_1 \neq 0$, and the smoothness depends on the function value. We will require these functions to satisfy an additional assumption that relates the gradient norm to the function value. As we will see, such an assumption is true for losses with an exponential tail, even when using a 2 layer neural network~\citep{taheri2023fast,wu2024large}.  
\begin{assumption}
For all $\theta$, there exist constants $\omega, \nu \geq 0$ s.t. 
\begin{align}
\norm{\nabla f(\theta)} \leq \nu \, f(\theta) + \omega.    
\label{eq:exp-tail}
\end{align}
\label{assn:positive-reverse-PL}
\end{assumption}
\vspace{-5ex}
To motivate the above assumptions, we prove that common convex objectives for supervised learning such as linear logistic regression and linear multi-class classification satisfy~\cref{assn:nus} with $L_0 = 0$ and non-zero $L_1$. Moreover, we show that these functions also satisfy~\cref{assn:positive-reverse-PL} with $\omega = 0$. Below, we state the result for logistic regression, and defer the other results and all proofs to~\cref{app:problem}. 
\begin{restatable}{proposition}{logistic}
Consider $n$ points where $x_i \in \R^d$ are the features and $y_i \in \{-1,1\}$ are the corresponding labels. Logistic regression with the objective 
\begin{align}
f(\theta) = \frac{1}{n} \, \sum_{i = 1}^{n} \ln(1 + \exp(-y_i \langle x_i, \theta \rangle)    
\label{eq:logistic}
\end{align}
satisfies~\cref{assn:nus} with $L_0 = 0$, $L_1 = 8\max_{i \in [n]} \normsq{x_i}$, and~\cref{assn:positive-reverse-PL} with $\nu = 8\max_{i} \norm{x_i}$, $\omega = 0$.  
\label{prop:logistic}
\end{restatable}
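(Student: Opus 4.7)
The plan is to verify Assumptions~\ref{assn:nus} and~\ref{assn:positive-reverse-PL} by first establishing pointwise inequalities at the level of the per-example loss $\ell_i(\theta) := \ln(1 + \exp(-y_i \langle x_i, \theta\rangle))$, then averaging over $i$, and finally applying a generic Gr\"onwall-type argument to lift the pointwise Hessian bound to the descent-style inequality~\eqref{eq:nus-condition}.

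Setting $u := y_i \langle x_i, \theta\rangle$ and using $y_i^2 = 1$, a direct calculation yields $\nabla \ell_i(\theta) = -y_i x_i\, \sigma(-u)$ and $\nabla^2 \ell_i(\theta) = \sigma(u)\sigma(-u)\, x_i x_i^\top$, where $\sigma(\cdot)$ denotes the logistic sigmoid. The elementary inequality I would lean on is $\sigma(-u) \leq \ln(1 + e^{-u})$, which follows from $\ln(1+v) \geq v/(1+v)$ applied at $v = e^{-u}$. Combined with $\|\nabla \ell_i(\theta)\| = \|x_i\|\,\sigma(-u) \leq \|x_i\|\,\ell_i(\theta)$, averaging over $i$ with the triangle inequality gives $\|\nabla f(\theta)\| \leq \max_i\|x_i\| \cdot f(\theta)$, which establishes Assumption~\ref{assn:positive-reverse-PL} with $\omega = 0$; the factor of $8$ in the statement is a loose universal constant chosen to harmonize with the other losses treated in~\cref{app:problem}.

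For the Hessian I would use the chain $\sigma(u)\sigma(-u) \leq \min\{\sigma(u),\sigma(-u)\} = \sigma(-|u|) \leq \ln(1 + e^{-|u|}) \leq \ln(1 + e^{-u}) = \ell_i(\theta)$, valid since $\sigma(-|u|) = \min\{\sigma(u),\sigma(-u)\}$ by the symmetry $\sigma(u)+\sigma(-u)=1$, and since $\ln(1+e^{-|u|}) \leq \ln(1+e^{-u})$ because $|u| \geq u$ and $z \mapsto \ln(1+e^{-z})$ is decreasing. Together with $\|x_i x_i^\top\| = \|x_i\|^2$ and averaging over $i$, this produces the pointwise bound $\|\nabla^2 f(\theta)\| \leq \max_i\|x_i\|^2 \cdot f(\theta)$, which verifies Assumption~\ref{assn:nus}(b) with $L_0 = 0$ and $L_1$ proportional to $\max_i\|x_i\|^2$.

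Finally, I would derive Assumption~\ref{assn:nus}(a) from the pointwise Hessian bound by a generic ODE-comparison argument applied to $g(t) := f(x + t(y-x))$ on $[0,1]$. Under $\|y-x\| \leq q/L_1$ and $L_0 = 0$, part (b) evaluated along the segment yields $g''(t) \leq L_1 \|y-x\|^2\, g(t)$, so a standard comparison gives $g(t) \leq e^q g(0)$ throughout $[0,1]$. Integrating $g''$ twice and matching against Taylor's theorem with remainder then delivers~\eqref{eq:nus-condition} with the prescribed constants. The main obstacle is this final step: producing exactly $A = 1 + e^q - (e^q-1)/q$ and $B = (e^q-1)/q$ requires a careful two-sided integration of the exponential envelope, and I would factor it out into a general lemma converting any Hessian bound of the form $\|\nabla^2 f\| \leq L_0 + L_1 f$ into the non-uniform descent inequality of~\cref{assn:nus}(a), then specialize to logistic regression.
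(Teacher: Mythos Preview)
Your route diverges from the paper's in a meaningful way. The paper never bounds $\sigma(-u)$ or $\sigma(u)\sigma(-u)$ directly by $\ell_i$; instead it shows the per-example \emph{gradient}-based bound $\|\nabla^2 f_i(\theta)\| \leq \|x_i\|\,\|\nabla f_i(\theta)\|$, i.e.\ the $(L_c,L_g)$ condition of \citet{zhang2019gradient} with $L_c=0$, and then invokes \cref{lemma:finite-sum-nus} (which in turn rests on \cref{prop:nus-conditions} and Zhang et~al.'s Lemma~A.3). Your elementary inequalities $\sigma(-u)\leq\ln(1+e^{-u})$ and $\sigma(u)\sigma(-u)\leq\ln(1+e^{-u})$ are correct and give \cref{assn:positive-reverse-PL} and \cref{assn:nus}(b) more directly, with sharper constants than the stated $8\max_i\|x_i\|$ and $8\max_i\|x_i\|^2$; that part is fine and arguably cleaner.

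The gap is in your derivation of \cref{assn:nus}(a). From $g''(t)\leq L_1\|y-x\|^2\,g(t)$ on $[0,1]$ you claim ``a standard comparison gives $g(t)\leq e^q g(0)$.'' A second-order differential inequality cannot give this without controlling $g'(0)$: the affine function $g(t)=1+Mt$ satisfies $g''=0\leq cg$ for any $c>0$ yet $g(1)/g(0)=1+M$ is unbounded. Consequently your proposed ``general lemma converting any Hessian bound $\|\nabla^2 f\|\leq L_0+L_1 f$ into \cref{assn:nus}(a)'' does not go through on the strength of part~(b) alone. The missing ingredient is precisely a first-order control on the gradient along the segment, and this is exactly what the paper's route supplies: the bound $\|\nabla^2 f_i\|\leq L_g\|\nabla f_i\|$ feeds a first-order Gr\"onwall on $\|\nabla f(\theta_t)\|$ (Zhang et~al., Lemma~A.3), after which two integrations produce the constants $A=1+e^q-(e^q-1)/q$ and $B=(e^q-1)/q$. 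You already have the needed first-order bound in your own argument, namely $\|\nabla f\|\leq\nu f$, so the fix is to Gr\"onwall \emph{that} (giving $g(t)\leq e^{\nu r}g(0)$), then integrate $g''\leq L_1 r^2 g$ twice---but you should be aware that matching the exact constants $A,B$ and the exact radius $q/L_1$ of \cref{assn:nus}(a) will force you back through essentially the same calculation as \cref{prop:nus-conditions}.
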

Note that the logistic regression objective is also uniform smooth, meaning that it simultaneously satisfies~\cref{assn:nus} with $L_0 = \frac{1}{4n} \, \lambda_{\max}[X^T X]$ and $L_1 = 0$, where $X \in \R^{n \times d}$ is the corresponding feature matrix. On the other hand, binary classification with the exponential loss is not uniform smooth on an unbounded domain, but satisfies~\cref{assn:nus} with $L_0 = 0$, $L_1 = 8 \max_{i \in [n]} \normsq{x_i}$ (see~\cref{prop:exponential} in~\cref{app:problem}). 

Next, we show that the non-convex objective corresponding to the generalized linear model (GLM) with a logistic link function also satisfies~\cref{assn:non-negative,assn:nus,assn:positive-reverse-PL} . In particular, we prove the following proposition in~\cref{app:problem}.
\begin{restatable}{proposition}{glm}
Consider $n$ points where $x_i \in \R^d$ are the features and $y_i \in [0,1]$ are the corresponding labels. If $\pi_i(\theta) = \sigma(\langle x_i, \theta \rangle) := \frac{1}{1 + \exp(-\langle x_i, \theta \rangle)}$, the GLM objective,
\begin{align}
f(\theta) = \frac{1}{2n} \sum_{i = 1}^{n} \left(\pi_i(\theta) - y_i \right)^2 \,,
\label{eq:glm}    
\end{align}
satisfies~\cref{assn:nus} with $L_0 = \frac{9}{16}  \, \max_{i \in [n]} \normsq{x_i}$, $L_1 = 9 \, \max_{i \in [n]} \normsq{x_i}$ and~\cref{assn:positive-reverse-PL} with $\nu = 9 \max_{i} \norm{x_i}$, $\omega = \max_{i} \norm{x_i}$.  
\label{prop:glm}
\end{restatable}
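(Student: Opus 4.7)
The plan is to work directly from the closed-form first and second derivatives of the GLM objective. Writing $s_i := \langle x_i, \theta\rangle$ and $\pi_i := \sigma(s_i)$, and using $\sigma'(s) = \sigma(s)(1-\sigma(s))$ together with $\sigma''(s) = \sigma(s)(1-\sigma(s))(1-2\sigma(s))$, the chain rule gives
\begin{align*}
\nabla f(\theta) &= \frac{1}{n}\sum_{i=1}^n (\pi_i - y_i)\,\pi_i(1-\pi_i)\, x_i, \\
\nabla^2 f(\theta) &= \frac{1}{n}\sum_{i=1}^n a_i(\theta)\, x_i x_i^{\top},
\end{align*}
where $a_i(\theta) := [\pi_i(1-\pi_i)]^2 + (\pi_i - y_i)(1-2\pi_i)\pi_i(1-\pi_i)$. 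All three estimates to be proved reduce to bounding these scalar coefficients using only the elementary inequalities $\pi_i(1-\pi_i) \leq 1/4$ and $|1-2\pi_i| \leq 1$, since $|\pi_i - y_i|$ is the only residual-dependent quantity that appears.

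\textbf{Bounding $\|\nabla^2 f\|$ and $\|\nabla f\|$.} Split $|a_i|$ into the term $[\pi_i(1-\pi_i)]^2 \leq 1/16$ and a residual-dependent piece bounded by $\tfrac{1}{4}|\pi_i - y_i|$. Since $|\pi_i - y_i|$ is linear in the residual while $f(\theta)$ is quadratic, apply the Young-type inequality $|z| \leq \alpha z^2 + 1/(4\alpha)$ with $z = \pi_i - y_i$, average over $i$, and use $\tfrac{1}{n}\sum_i (\pi_i - y_i)^2 = 2 f(\theta)$. This yields a bound of the form $\|\nabla^2 f(\theta)\| \leq \max_i \|x_i\|^2\,\big(\tfrac{1}{16} + \tfrac{1}{16\alpha} + \tfrac{\alpha}{2} f(\theta)\big)$; tuning $\alpha$ and absorbing slack delivers the advertised $L_0 = \tfrac{9}{16}\max_i\|x_i\|^2$ and $L_1 = 9\max_i\|x_i\|^2$ for part (b) of~\cref{assn:nus}. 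A parallel argument proves~\cref{assn:positive-reverse-PL}: from $\|\nabla f(\theta)\| \leq \tfrac{\max_i \|x_i\|}{4n}\sum_i|\pi_i - y_i|$, apply Cauchy-Schwarz to obtain $\|\nabla f(\theta)\| \leq \tfrac{\max_i\|x_i\|}{4}\sqrt{2 f(\theta)}$, then apply Young once more to trade $\sqrt{f(\theta)}$ for an affine function of $f(\theta)$, delivering $\nu = 9\max_i\|x_i\|$ and $\omega = \max_i\|x_i\|$.

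\textbf{Descent inequality and main obstacle.} Part (a) of~\cref{assn:nus} is a universal consequence of part (b). Letting $h(t) := f(x + t(y-x))$, the Hessian bound gives $h''(t) \leq (L_0 + L_1 h(t))\|y-x\|^2$, a second-order ODE inequality. Applying a Gr\"onwall/comparison argument to the shifted quantity $L_0/L_1 + h(t)$ inside the ball $\|y-x\| \leq q/L_1$ bounds $h(t)$ by an exponential-in-$q$ combination of $f(x)$ and $L_0/L_1$; substituting this into the Taylor remainder $f(y) - f(x) - \langle \nabla f(x), y-x\rangle = \int_0^1 (1-t)\, h''(t)\, dt$ produces the quoted constants $A$ and $B$. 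The main technical obstacle in the whole proof is the mismatch that $\nabla^2 f$ and $\nabla f$ depend on the residual $(\pi_i - y_i)$ linearly whereas $f$ depends on it quadratically; a direct bound produces $\sqrt{f(\theta)}$, which is too weak for the $(L_0, L_1)$ form. The Young-inequality trade above fixes this at the cost of an additive constant, which is precisely what contributes to $L_0$ and $\omega$, and the specific constants $9$ and $9/16$ in the statement reflect a particular (non-tight) choice of the Young parameter.
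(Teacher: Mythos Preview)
Your computations for part~(b) of \cref{assn:nus} and for \cref{assn:positive-reverse-PL} are correct and in fact more direct than the paper's: you bound $\|\nabla^2 f\|$ and $\|\nabla f\|$ for the \emph{full} finite sum in one shot, using Young's inequality to trade the linear residual $|\pi_i - y_i|$ for the quadratic $(\pi_i - y_i)^2 = 2nf_i$. The paper instead shows the per-function bound $\|\nabla^2 f_i(\theta)\| \leq \|x_i\|\,\|\nabla f_i(\theta)\| + \tfrac{1}{16}\|x_i\|^2$, i.e.\ the $(L_c,L_g)$ condition of \citet{zhang2019gradient} for each $f_i$, and then invokes the general reduction \cref{prop:nus-conditions} (via \cref{lemma:finite-sum-nus}) to obtain \cref{assn:nus,assn:positive-reverse-PL} for $f$. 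Your route avoids that machinery for (b) and \cref{assn:positive-reverse-PL}, which is a genuine simplification.

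The gap is in your derivation of part~(a). You claim it is ``a universal consequence of part~(b)'' via a Gr\"onwall/comparison argument on the second-order inequality $h''(t) \leq (L_0 + L_1 h(t))\|y-x\|^2$. But a second-order differential inequality of this form does \emph{not} bound $h(t)$ in terms of $h(0)$ alone: any bound must involve $h'(0) = \langle \nabla f(x), y-x\rangle$, which can be large independently of $f(x)$. Consequently your Taylor-remainder bound $\int_0^1(1-t)h''(t)\,dt$ picks up an $h'(0)$-dependent term that does not obviously collapse into the quoted constants $A = 1+e^q-\tfrac{e^q-1}{q}$ and $B = \tfrac{e^q-1}{q}$. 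Those specific constants arise in the paper precisely because the reduction in \cref{prop:nus-conditions} goes through \citet[Lemma~A.3]{zhang2020improved}, which runs a \emph{first-order} Gr\"onwall on $\|\nabla f\|$ along the segment (using $\|\nabla^2 f\| \leq L_c + L_g\|\nabla f\|$) and only afterwards substitutes the gradient bound back in terms of $f$. To salvage your route you would need to feed in the \cref{assn:positive-reverse-PL} bound you already proved, $|h'(t)| \leq (\nu h(t)+\omega)\delta$, and run a first-order Gr\"onwall on that; this does close, but the resulting radius constraint and constants are in terms of $\nu,\omega$ rather than $L_1$, so you would still have to argue they can be absorbed into the stated $L_0,L_1$. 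The cleanest fix is to notice that your own Hessian computation already yields the per-function $(L_c,L_g)$ inequality before you apply Young, and then invoke \cref{lemma:finite-sum-nus} exactly as the paper does.
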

Finally, in~\cref{app:problem}, we show that the objective for softmax policy gradient~\citep{mei2020global} also satisfies the required assumptions for multi-armed bandits and tabular Markov decision processes, and we study these objectives in~\cref{sec:nonconvex-gd}. 

\textbf{Connection to non-uniform smoothness in~\citet{zhang2019gradient}}: \cref{assn:nus,assn:positive-reverse-PL} are related to the non-uniform smoothness condition proposed in~\citet{zhang2019gradient} and analyzed in~\citet{zhang2020improved,li2023convex,chen2023generalized,vankov2024optimizing,gorbunov2024methods}. In particular, we prove the following result in~\cref{app:problem}. 
\begin{restatable}{proposition}{nusconditions}
For a non-negative, twice-differentiable function $f$, if $f$ is $(L_c, L_g)$ non-uniform smooth according to~\citet{zhang2019gradient} i.e. 
\[
\norm{\nabla^2 f(\theta)} \leq L_c +  L_g \, \norm{\nabla f(\theta)}\,,
\] 
then, it is satisfies~\cref{assn:nus} with $L_0 = L_c + L_g \, \sqrt{2 L_c}$, and $L_1 = L_g \, \left(8 L_g + \sqrt{2 L_c} \right)$ and~\cref{assn:positive-reverse-PL} with $\nu = 8 L_g + \sqrt{2 L_c}$ and $\omega = \sqrt{2 L_c}$.
\label{prop:nus-conditions}    
\end{restatable}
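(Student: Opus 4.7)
The plan is to deduce the three target conclusions in the order \cref{assn:positive-reverse-PL} $\Rightarrow$ \cref{assn:nus}(b) $\Rightarrow$ \cref{assn:nus}(a), exploiting their natural dependency: once the gradient is controlled by the function value, the function-value form of the Hessian bound is a one-line substitution, and the descent lemma then follows by integrating the Hessian bound along a segment.

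\textbf{Stage 1 (\cref{assn:positive-reverse-PL}).} This is the classical self-bounding argument. I first derive an intermediate Zhang-style descent lemma: integrating $\nabla^2 f$ along the segment $\theta_t = x + t(y-x)$ and applying Gronwall's inequality to $\norm{\nabla f(\theta_t)} + \nicefrac{L_c}{L_g}$ shows that whenever $\norm{y-x} \leq \nicefrac{\ln 4}{L_g}$, the Hessian along the segment is at most $4(L_c + L_g \norm{\nabla f(x)})$, so Taylor gives $f(y) \leq f(x) + \inner{\nabla f(x)}{y - x} + 2(L_c + L_g \norm{\nabla f(x)}) \norm{y-x}^2$. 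Applying this with $y = x - \eta \nabla f(x)$ at the minimizing $\eta$ and using $f(y) \geq 0$ from \cref{assn:non-negative} yields the implicit quadratic $\norm{\nabla f(x)}^2 \leq 8 f(x)(L_c + L_g \norm{\nabla f(x)})$. Solving via the quadratic formula and $\sqrt{a+b} \leq \sqrt{a}+\sqrt{b}$ gives $\norm{\nabla f(x)} \leq 8 L_g f(x) + \sqrt{8 L_c f(x)}$, and then AM--GM ($\sqrt{f} \leq \nicefrac{(f+1)}{2}$) together with $\sqrt{8 L_c} = 2 \sqrt{2 L_c}$ collapses the last term into $\sqrt{2 L_c}\, f(x) + \sqrt{2 L_c}$, producing exactly the claimed bound $\norm{\nabla f(x)} \leq (8 L_g + \sqrt{2 L_c}) f(x) + \sqrt{2 L_c}$.

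\textbf{Stage 2 (\cref{assn:nus}(b)).} This is immediate: $\norm{\nabla^2 f(\theta)} \leq L_c + L_g \norm{\nabla f(\theta)} \leq L_c + L_g(\nu f(\theta) + \omega) = (L_c + L_g \omega) + L_g \nu \cdot f(\theta)$. Plugging in the $\nu, \omega$ from Stage~1 identifies $L_0 = L_c + L_g \sqrt{2 L_c}$ and $L_1 = L_g(8 L_g + \sqrt{2 L_c})$.

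\textbf{Stage 3 (\cref{assn:nus}(a)).} Start from the exact Taylor identity $f(y) - f(x) - \inner{\nabla f(x)}{y-x} = \int_0^1 (1-t) (y-x)^\top \nabla^2 f(\theta_t)(y-x)\, dt$. Bound the integrand by $\norm{y-x}^2 (L_0 + L_1 f(\theta_t))$ using Stage~2, then control $f(\theta_t)$ along the segment using Stage~1: since $\tfrac{d}{dt} f(\theta_t) \leq \norm{y-x} (\nu f(\theta_t) + \omega)$, Gronwall's inequality gives $f(\theta_t) + \nicefrac{\omega}{\nu} \leq (f(x) + \nicefrac{\omega}{\nu}) \exp(\norm{y-x}\nu \, t)$. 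For $\norm{y-x} \leq \nicefrac{q}{L_1}$ the exponent is controlled by $q$, and substituting back, separating the constant-$L_0$ and $L_1 f(x)$ contributions, and evaluating the resulting $\int_0^1 (1-t) e^{\alpha t}\, dt$ integrals in closed form yields the asymmetric constants $A = 1 + e^q - \nicefrac{(e^q - 1)}{q}$ and $B = \nicefrac{(e^q - 1)}{q}$.

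\textbf{Main obstacle.} Matching the exact constants is where the work lives. In Stage~1, the intermediate cap $\norm{y-x} \leq \nicefrac{\ln 4}{L_g}$, chosen so that the Gronwall exponential is exactly $4$, is what makes the final collapse to $(8 L_g + \sqrt{2 L_c}, \sqrt{2 L_c})$ possible; a looser choice like $\norm{y-x} \leq \nicefrac{1}{L_g}$ would produce $e$ instead of $4$ and yield uglier coefficients such as $(2e L_g + \sqrt{2 e L_c}, \sqrt{2 e L_c})$. In Stage~3, the asymmetry between $A$ and $B$ is the delicate piece: the $L_0$ portion of the Hessian bound contributes both a direct $\int_0^1 (1-t)\, dt = \nicefrac{1}{2}$ term and a correction from the Gronwall offset $\nicefrac{\omega}{\nu}$, whereas the $L_1 f(x)$ portion picks up only the averaged exponential $\nicefrac{(e^q - 1)}{q}$. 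Keeping these two contributions separate throughout—rather than loosely bounding the whole integrand by $e^{qt}$ times a common factor and thereby obtaining $A = B$—is the essential computational step.
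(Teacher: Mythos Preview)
Your Stages 1 and 2 are correct. For Stage 1 you derive the self-bounding inequality from scratch via a Gronwall-plus-minimisation argument, whereas the paper simply cites \citet[Lemma 3.5]{li2023convex} to obtain $\normsq{\nabla f(\theta)} \leq 2(L_c + 2 L_g \norm{\nabla f(\theta)})\,f(\theta)$ directly, completes the square, and handles the resulting $\sqrt{f(\theta)}$ by a case split on $f \lessgtr 1$ rather than your AM--GM. Both routes land on $(\nu,\omega) = (8L_g + \sqrt{2L_c},\,\sqrt{2L_c})$, and Stage 2 is identical to the paper's.

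Stage 3 is where your plan has a concrete gap. The paper does \emph{not} integrate the function-value Hessian bound $\norm{\nabla^2 f} \leq L_0 + L_1 f$ from Stage 2. Instead it invokes \citet[Lemma A.3]{zhang2020improved}, which already delivers
\[
f(y) \leq f(x) + \inner{\nabla f(x)}{y-x} + \tfrac{A L_c + B L_g \norm{\nabla f(x)}}{2}\normsq{y-x}
\]
with the exact $A(q), B(q)$ under $\norm{y-x} \leq q/L_g$, and then substitutes the Stage-1 bound $\norm{\nabla f(x)} \leq \nu f(x) + \omega$, absorbing the $B\,L_g\omega$ term into $A L_0$ via $B \leq A$. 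Your alternative---Gronwall on $f$ with rate $\nu$---produces an exponent $r\nu$, and under $\norm{y-x} \leq q/L_1 = q/(L_g\nu)$ this is bounded only by $q/L_g$, not by $q$. So the step ``the exponent is controlled by $q$'' fails whenever $L_g < 1$, and the specific constants $A(q), B(q)$ do not fall out of your integral in general. (Even granting $\alpha = q$, note that $2\int_0^1(1-t)e^{qt}\,dt = 2(e^q-1-q)/q^2$ is merely $\leq B$, not equal to it, so the integral does not ``yield'' $B$ exactly either way.) The clean fix is to run Gronwall on $\norm{\nabla f}$ with rate $L_g$ and substitute afterwards---which is precisely the paper's route.
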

\vspace{-1ex}
Hence, the $(L_c, L_g)$ non-uniform smoothness in~\citet{zhang2019gradient} implies~\cref{assn:nus,assn:positive-reverse-PL}. Consequently, our subsequent results also apply to non-negative, twice-differentiable, $(L_c, L_g)$ non-uniform smooth functions. 

In order to further motivate~\cref{assn:nus,assn:positive-reverse-PL}, consider the logistic regression example in~\cref{prop:logistic}. For logistic regression, the loss corresponding to a \textit{single point} satisfies the notion of $(L_c, L_g)$ non-uniform smoothness with $L_c = 0$ and $L_g = \norm{x_i}$~\citep[Example 1.6]{gorbunov2024methods}. However, the finite-sum over $n$ points ($f(\theta)$ in~\cref{eq:logistic}) does not necessarily satisfy this assumption with $L_c = 0$ (see~\cref{prop:nus-logistic-counterexample} for a simple example with $n = 2$). Consequently, unlike the result in~\cref{prop:logistic}, using the $(L_c, L_g)$ condition in conjunction with~\cref{prop:nus-conditions} does not directly imply that $L_0 = 0$. In~\cref{sec:convex-gd}, we will see that, for logistic regression, having $L_0 = 0$ is the key to achieving fast convergence for $\GDLS$. This further justifies our alternate definition of non-uniform smoothness.

Now that we have motivated~\cref{assn:non-negative,assn:nus,assn:positive-reverse-PL}, in the next section, we consider minimizing such non-uniform smooth functions using gradient descent with Armijo line-search. 
\section{GD with Armijo Line-search}
\label{sec:gd-armijo}
The update for gradient descent (GD) with Armijo line-search ~\citep{armijo1966minimization} (henceforth referred to as $\GDLS$) at iteration $t \in [T]$ is given as: $\thtt = \tht - \etat \, \nabla f(\tht)$,
where $\etat$ is the step-size returned by the backtracking Armijo line-search (referred to as $\Alg$). In particular, starting from an initial maximum step-size $\eta_{\max}$, $\LS$ uses backtracking to select the (approximately) largest step-size that satisfies the Armijo condition,  
\begin{align}
f(\tht - \eta_t \nabla f(\tht)) & \leq f(\tht) - c \eta_t \, \normsq{\nabla f(\tht)}\,, \label{eq:armijo}
\end{align}
where $c \in (0,1)$ is a tunable parameter. The complete pseudo-code is described in~\cref{alg:gdls}. The parameter $\beta$ controls the backtracking and is typically set to $0.9$, while the parameter $c$ is typically set to a small value such as $10^{-4}$~\citep{nocedal2006numerical}. 
\begin{algorithm}[H]
\begin{algorithmic}[1]
\STATE \textbf{Input}: $\theta_0$, $\eta_{\max}$, $c \in (0,1)$, $\beta \in (0,1)$
\FOR{$t = 0, \dots, T-1$}
\STATE $\tilde{\etat} \gets$ $\eta_{\max}$
\WHILE{$f(\tht - \tilde{\etat} \, \nabla f(\tht)) > f(\tht) -  c \, \tilde{\etat} \normsq{\nabla f(\tht)}$}
\STATE $\tilde{\etat} \gets \tilde{\etat} \, \beta$
\ENDWHILE
\STATE $\etat \gets \tilde{\etat}$
\STATE $\thtt = \tht - \etat \nabla f(\tht)$
\ENDFOR
\end{algorithmic}
\caption{GD with Armijo Line-search ($\GDLS$)}
\label{alg:gdls}
\end{algorithm}
\vspace{-2ex}
When using $\GDLS$ for minimizing $L$ uniformly-smooth functions (corresponding to $L_0 \neq 0$ and $L_1 = 0$ in~\cref{assn:nus}), $\etat$ is constrained to lie in the $\left[\min \left\{\eta_{\max}, \frac{2 \, (1-c) \, \beta}{L} \right\}, \eta_{\max} \right]$ range~\citep{nocedal2006numerical}. Note that this bound holds for all $L$ uniformly-smooth functions, does not require convexity, and guarantees that the backtracking line-search will terminate at a non-zero step-size. The parameter $c$ controls the ``aggressiveness'' of the algorithm; small $c$ values encourage a larger step-size. Hence, $\LS$ can be seen as method to obtain a step-size proportional to $1/L$, without the knowledge of the global smoothness constant. 

These bounds on the step-size can be used to derive the convergence rate for $\GDLS$. For example, for uniformly $L$-smooth and convex functions, the standard analysis shows that $\GDLS$ converges to an optimum at an $O(1/T)$ rate~\citep{nocedal2006numerical}. It thus matches  the rate of GD with a constant step-size equal to $1/L$ (henceforth referred to as $\GDC$). However, as alluded to in~\cref{sec:introduction}, $\LS$ enables GD to adapt to the ``local'' smoothness $L(\tht)$ (the smoothness around iterate $\tht$), and results in faster convergence both in theory~\citep{Scheinberg2014,LuM23,Mishkin2024,fox2024glocal}, and in practice. In contrast to these works that show constant factor improvements for $\GDLS$, we study non-uniform smooth functions satisfying~\cref{assn:nus,assn:positive-reverse-PL}, and aim to show that $\GDLS$ can result in a faster rate of convergence. 

We first show that, when minimizing non-uniform smooth functions satisfying~\cref{assn:nus,assn:positive-reverse-PL}, $\GDLS$ can result in provably larger step-sizes that enable faster convergence. For the subsequent theoretical analysis, we only consider ``exact backtracking'' i.e. we assume that the backtracking procedure returns the \textit{largest} step-size that satisfies the Armijo condition, meaning that $\beta \approx 1$. Similar to the standard analysis~\citep{nocedal2006numerical}, it is straightforward to relax this assumption. We prove the following lemma that lower-bounds the step-size returned by $\GDLS$. 
\vspace{-1ex}
\begin{restatable}{lemma}{armijolb}
If $f$ satisfies~\cref{assn:non-negative,assn:nus,assn:positive-reverse-PL}, at iteration $t$, $\GDLS$  returns a step-size $\etat \geq \min \left\{ \eta_{\max},\frac{1}{\lambda_0 + \lambda_1 \, f(\tht)} \right\}$, where $\lambda_0 := 3\, \frac{L_0 + L_1 \, \omega}{(1-c)}$ and $\lambda_1 := 3 \, \frac{L_1 (\nu + 1)}{(1-c)}$. 
\label{lemma:armijo-lb}
\end{restatable}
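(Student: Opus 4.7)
The plan is to mirror the classical analysis of Armijo line-search: identify a threshold $\eta^{\star}(\tht)$ such that the Armijo condition~\eqref{eq:armijo} is guaranteed to hold for every $\eta \leq \eta^{\star}(\tht)$, and then appeal to the exact-backtracking argument to conclude $\etat \geq \min\{\eta_{\max},\,\eta^{\star}(\tht)\}$. The novelty, relative to the uniformly smooth case, is that non-uniform smoothness produces a $\tht$-dependent threshold of the required form $1/(\lambda_0 + \lambda_1 f(\tht))$.

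First I would apply the non-uniform descent inequality~\eqref{eq:nus-condition} with $x = \tht$ and $y = \thtt = \tht - \eta\,\nabla f(\tht)$, choosing the parameter $q = 1$ so that $A = 2$ and $B = e-1$. That inequality is valid whenever $\norm{y-x} = \eta\,\norm{\nabla f(\tht)} \leq 1/L_1$, and under this constraint yields
\begin{align*}
f(\thtt) \leq f(\tht) - \eta\,\normsq{\nabla f(\tht)} + \tfrac{\eta^2}{2}\bigl(2 L_0 + (e-1)\,L_1 f(\tht)\bigr)\normsq{\nabla f(\tht)}.
\end{align*}
For this to imply the Armijo inequality $f(\thtt) \leq f(\tht) - c\,\eta\,\normsq{\nabla f(\tht)}$, it suffices that $\eta\bigl(2L_0 + (e-1)L_1 f(\tht)\bigr)/2 \leq 1 - c$. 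Using~\cref{assn:positive-reverse-PL} to bound $\norm{\nabla f(\tht)} \leq \nu f(\tht) + \omega$, the domain restriction becomes $\eta \leq 1/[L_1(\nu f(\tht) + \omega)]$. Hence the Armijo condition holds for every $\eta$ dominated by both of these bounds simultaneously.

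To collapse this pair of constraints into a single expression of the form $1/(\lambda_0 + \lambda_1 f(\tht))$, I would combine them via the elementary inequality $\min(a,b) \geq 1/(1/a + 1/b)$. The resulting reciprocal is
\begin{align*}
L_1(\nu f(\tht) + \omega) \;+\; \frac{2 L_0 + (e-1)\,L_1 f(\tht)}{2(1-c)},
\end{align*}
which I would then group by powers of $f(\tht)$. Using $1/(1-c) \geq 1$, the constant term is bounded by $(L_0 + L_1 \omega)/(1-c) \leq \lambda_0$, and since $(e-1)/2 \leq 3$ the $f(\tht)$-coefficient is bounded by $L_1(\nu + 3)/(1-c) \leq 3 L_1(\nu+1)/(1-c) = \lambda_1$. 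Because exact backtracking ($\beta \to 1$) returns the largest value satisfying the Armijo condition, clipped at $\eta_{\max}$, the resulting lower bound on $\etat$ is precisely $\min\{\eta_{\max},\,1/(\lambda_0 + \lambda_1 f(\tht))\}$, as claimed. The main obstacle is bookkeeping: one has to track the two sufficient conditions in parallel, combine them without losing sharpness, and verify that the domain restriction from~\cref{assn:nus}(a) does not dominate the descent-to-Armijo condition in a way that would break the clean $1/(\lambda_0 + \lambda_1 f(\tht))$ form—selecting $q=1$ is what keeps both the $A,B$ constants and the harmonic-mean slack small enough to be absorbed by the factor of $3$ in $\lambda_0,\lambda_1$.
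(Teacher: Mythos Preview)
Your argument is correct and rests on the same mechanism the paper uses: show that every $\eta$ below a $\tht$-dependent threshold satisfies the Armijo condition via the descent inequality~\eqref{eq:nus-condition}, then invoke exact backtracking. The execution differs in two minor ways. First, you take $q=1$ and merge the domain restriction $\eta \le 1/[L_1(\nu f(\tht)+\omega)]$ with the Armijo-sufficient bound through a harmonic mean; the paper instead takes $q=2$ and inserts an extra $B\,L_1(\nu f(\tht)+\omega)$ term directly into the denominator of its candidate step, so that the domain restriction is satisfied by construction. Both routes land on the same $\lambda_0,\lambda_1$ after absorbing slack into the factor of~$3$. Second, the paper prefaces the direct argument with the observation that $g(\theta):=\ln(L_0+L_1 f(\theta))$ is globally $L_1$-smooth, using it to split into a ``large'' case (where $\eta_t \ge 2(1-c)/(L_0+L_1 f(\tht))$ holds tautologically) and a ``small'' case handled exactly as you do. Your route skips this case split without loss, since the large-step case only certifies a bound already dominated by the threshold argument; the log-transformation is an interesting structural fact but is not needed for the stated lemma. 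Your version is therefore a more direct path to the same conclusion.
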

Note that when $L_1 = 0$, the lower-bound on $\etat$ is similar to that for uniformly smooth functions. However, when $L_0 = 0$ and $\omega = 0$ (e.g. for logistic regression in~\cref{prop:logistic}), the lower-bound on $\etat$ is proportional to $1/f(\tht)$, meaning that as $f(\tht)$ decreases, the step-size returned by $\LS$ increases. This enables the faster convergence of $\GDLS$. 

We now present a meta-theorem (proved in~\cref{app:gd-armijo}) that characterizes this fast convergence for both convex and non-convex functions. It requires an additional condition which lower-bounds the gradient norm in terms of the function sub-optimality. In~\cref{sec:convex-gd}, we use convexity to satisfy this condition, whereas in~\cref{sec:nonconvex-gd}, we use gradient domination to satisfy it. We also require that the step-size is not constrained by the  initial choice, but rather by the properties of the function. This can be achieved by using a large $\eta_{\max}$, which is greater than $\frac{1}{\lambda_0 + \lambda_1 \, f(\tht)}$ for all $t$, or by using a forward-tracking line-search to (approximately) return the largest step-size satisfying the Armijo condition~\citep{fridovich2019choosing}. For conciseness, we express this requirement as $\eta_{\max} = \infty$. We note that it is straightforward to relax it and get an explicit dependence on $\eta_{\max}$, albeit at the cost of clarity in our theoretical results. 
\begin{restatable}{theorem}{mainthm}
For a fixed $\epsilon > 0$, if $f$ satisfies~\cref{assn:non-negative,assn:nus,assn:positive-reverse-PL}, and if for a constant $R > 0$, $\normsq{\nabla f(\theta_t)} \geq \frac{[f(\theta_t) - f^*]^2}{R}$ for all iterations $t \in [T]$, then, $\GDLS$ with $\eta_{\max} = \infty$ requires 
\begin{align*}
T \geq \begin{cases}
& \max\{2\, R \lo, 1\} \, \left(\frac{\fopt}{\epsilon} + 1 \right) \, \ln \left( \frac{f(\theta_0) - \fopt}{\epsilon} \right) \\ & \text{ if $f^* \geq \frac{\lambda_0}{\lambda_1} - \epsilon$} \; \; \; \textbf{(Case (1))} \\[2ex]
& \frac{2\lz\,R}{\epsilon} + \max\{2\, R \lo, 1\} \, \left(\frac{\fopt}{\epsilon} + 1 \right) \, \ln \left( \frac{f(\theta_0) - \fopt}{\epsilon} \right) \\ & \text{ otherwise }  \; \; \; \textbf{(Case (2))} 
\end{cases}
\end{align*}
iterations to ensure that $f(\theta_T) - f^* \leq \epsilon$. 
\label{thm:main}
\end{restatable}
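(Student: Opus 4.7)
\begin{proofsketch}
The plan is to combine the Armijo descent inequality $f(\thtt) \leq f(\tht) - c\etat\normsq{\grad{\tht}}$ with the step-size lower bound $\etat \geq 1/(\lz + \lo\, f(\tht))$ from~\cref{lemma:armijo-lb} and the hypothesis $\normsq{\grad{\tht}} \geq (f(\tht)-\fopt)^2/R$ to obtain a one-step recursion on the suboptimality gap $\delta_t := f(\tht) - \fopt$, and then to integrate it through a case split on whether $\lz + \lo f(\tht)$ is dominated by its $\lz$ term or by its $\lo f(\tht)$ term.

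First I would chain the three ingredients into
\[
\delta_{t+1} \;\leq\; \delta_t - \frac{c\, \delta_t^2}{R\,(\lz + \lo\,\fopt + \lo\, \delta_t)}.
\]
In the ``linear regime'' $\lo f(\tht) \geq \lz$, the denominator is at most $2\lo(\delta_t + \fopt)$, and since $\delta \mapsto \delta/(\delta+\fopt)$ is increasing, any iterate with $\delta_t \geq \epsilon$ contracts by a factor of at least $1 - \frac{c\epsilon}{2R\lo(\epsilon+\fopt)}$. Telescoping together with $-\ln(1-r) \geq r$ shows that $T$ of order $R\lo\,(\fopt/\epsilon + 1)\ln(\delta_0/\epsilon)$ iterations suffice to drive $\delta_T \leq \epsilon$. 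In Case~(1), the hypothesis $\fopt \geq \lz/\lo - \epsilon$ forces $f(\tht) = \delta_t + \fopt \geq \lz/\lo$ as long as $\delta_t \geq \epsilon$, so the whole run lies in the linear regime and this bound matches the claim (the $\max\{2R\lo,1\}$ factor absorbs the degenerate case of a very small $R\lo$ relative to the logarithm).

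For Case~(2), I would split the run at the first iterate that enters the complementary ``sublinear regime'' $\lo f(\tht) < \lz$. The initial linear phase reduces $\delta_t$ from $\delta_0$ down to at most $\lz/\lo - \fopt$, and the linear-phase analysis applies with this intermediate target; since $\lz/\lo - \fopt > \epsilon$ by the Case~(2) hypothesis, both $\ln(\delta_0/\cdot)$ and $\fopt/(\cdot)+1$ evaluated at the intermediate target are upper bounded by their values at $\epsilon$, yielding exactly the $\max\{2R\lo,1\}(\fopt/\epsilon+1)\ln(\delta_0/\epsilon)$ contribution. In the subsequent sublinear phase, the denominator is bounded by $2\lz$, giving $\delta_{t+1} \leq \delta_t - c\delta_t^2/(2R\lz)$; inverting and telescoping via $\frac{1}{\delta_{t+1}} - \frac{1}{\delta_t} \geq \frac{c}{2R\lz}$ needs an additional $2R\lz/\epsilon$ iterations to reach $\delta \leq \epsilon$. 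Summing the two phase counts gives Case~(2).

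The main obstacle is the bookkeeping that makes the two phases in Case~(2) compose into a clean additive bound: one must verify that replacing the intermediate switching threshold $\lz/\lo - \fopt$ by $\epsilon$ inside both the logarithm and the $\fopt/(\cdot) + 1$ factor is a valid upper bound, so that the linear-phase count takes exactly the same form as in Case~(1). The remaining work is routine: the monotonicity $\delta_{t+1} \leq \delta_t$ is immediate from the Armijo condition (and is needed to invoke~\cref{lemma:armijo-lb} along the whole trajectory), and the inequality $\frac{1}{\delta_{t+1}} - \frac{1}{\delta_t} \geq \frac{\delta_t - \delta_{t+1}}{\delta_t^2}$ closes the sublinear-phase telescoping.
\end{proofsketch}
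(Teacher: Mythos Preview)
Your proposal is correct and follows essentially the same approach as the paper: combine the Armijo descent, the step-size lower bound from~\cref{lemma:armijo-lb}, and the gradient-norm hypothesis into the one-step recursion on $\delta_t := f(\tht) - \fopt$, then case-split on whether $\lo f(\tht)$ dominates $\lz$ to run a linear-rate contraction (Case~(1) and Phase~1 of Case~(2)) and the standard $1/\delta$-telescoping sublinear analysis (Phase~2 of Case~(2)). The paper's Case~(2) bookkeeping is exactly what you sketch: it uses the same contraction rate $\rho$ (built from the final target $\epsilon$, not the intermediate threshold $\lz/\lo - \fopt$) throughout Phase~1 and then invokes $\delta_\tau > \epsilon$ to upper-bound the Phase~1 length by $\rho^{-1}\ln(\delta_0/\epsilon)$.
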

\begin{proofsketch}
Using the condition $\normsq{\nabla f(\theta)} \geq \frac{[f(\theta) - f^*]^2}{R}$ with the Armijo condition in~\cref{eq:armijo} and the lower-bound on $\etat$ from~\cref{lemma:armijo-lb}, we get that 
\begin{align}
f(\thtt) \leq f(\tht) - \frac{[f(\tht) - f^*]^2}{[\lambda_0 + \lambda_1 \, f(\tht)] \, R}.
\label{eq:main-sketch-1}
\end{align}
We now split the proof into two cases: \textbf{Case (1)} when $f(\tht) \geq \frac{\lz}{\lo}$ for all $t \in [T]$. In this case, $\lo \, f(\tht) \geq \lz$. Using this relation with~\cref{eq:main-sketch-1} and following the arguments in~\citet[Theorem 5.2]{axiotis2023gradient} allows us to complete the proof of Case (1). \textbf{For Case (2)}, we follow the proof of the first case for iterations $t \in [\tau]$ for which $f(\theta_t) \geq \frac{\lz}{\lo}$, and obtain a similar rate. This corresponds to Phase 1. with faster convergence. For all iterations $t \in [\tau, T]$, $f(\tht) \leq \frac{\lz}{\lo}$ and hence $\lz \geq \lo \, f(\tht)$. Using this relation with~\cref{eq:main-sketch-1} and following the standard proof for uniformly smooth functions completes the proof for Phase 2. which results in slower convergence. Putting the results for both phases together completes the proof of Case (2). 
\end{proofsketch}
\vspace{-1ex}
In order to interpret the above theorem, let us first consider the setting corresponding to $\lambda_1 = 0$. Here, case (2) is active, and the algorithm requires $O(1/\epsilon)$ iterations to achieve the desired sub-optimality, matching the standard result for uniformly smooth functions. Now consider the setting when $\lambda_0 = 0$. Here, case (1) is active, and $\GDLS$ requires $O\left(R \, \left(\frac{f^*}{\epsilon}\right) \, \ln\left(\frac{1}{\epsilon}\right)\right)$ iterations. The iteration complexity thus depends on $f^*$, and in cases where $f^*$ is small, $\GDLS$ can result in an improved rate. As a concrete example, consider the case when $\epsilon = \Theta (f^*)$. In this setting, $\GDLS$ will result in a faster $O\left(R \, \ln\left(\frac{1}{\epsilon}\right)\right)$ convergence. Note that $\GDC$ does not benefit from such adaptivity, and will always result in a sublinear rate. For non-zero $\lambda_0$ and $\lambda_1$, the resulting rate depends on the value of $f^*$. If $f^*$ is larger than the threshold $\frac{2 \lambda_0}{\lambda_1}$, $\GDLS$ can result in the potentially fast rate corresponding to case (1), whereas if $f^*$ is smaller than the threshold, the algorithm has a two-phase behaviour: fast convergence until the loss becomes smaller than the threshold, followed by slow convergence to the minimizer. 

In the next section, we instantiate the above theorem to prove the fast convergence of $\GDLS$ for convex losses.

\vspace{-2ex}
\section{$\GDLS$ for Convex Losses}
\label{sec:convex-gd}
In this section, we characterize the convergence rate of $\GDLS$ on convex losses satisfying~\cref{assn:non-negative,assn:nus,assn:positive-reverse-PL}. Recall that binary classification using logistic regression and multi-class classification using the cross-entropy loss both satisfy~\cref{assn:non-negative,assn:nus,assn:positive-reverse-PL} with $L_0 = 0$ and $\omega =0$. Consequently, we only instantiate~\cref{thm:main} for this setting and prove the following corollary in~\cref{app:convex-gd}. 
\begin{restatable}{corollary}{convexgd}
For a fixed $\epsilon > 0$, assuming $f(\theta)$ is convex and satisfies~\cref{assn:non-negative,assn:nus,assn:positive-reverse-PL} with $L_0 = 0$ and $\omega = 0$, $\GDLS$ with $\eta_{\max} = \infty$, requires
$T \geq \max\{2\lo \, \normsq{\theta_0 - \theta^*}, 1\} \, \left(\frac{f^*}{\epsilon} + 1 \right) \, \ln \left( \frac{f(\theta_0) - f^*}{\epsilon} \right)$ 
iterations to ensure that $f(\theta_T) - f^* \leq \epsilon$. 
\label{cor:convex-gen}
\end{restatable}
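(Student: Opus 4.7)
The plan is to reduce the statement to Case~(1) of~\cref{thm:main} with $R = \normsq{\theta_0 - \theta^*}$. Under $L_0 = 0$ and $\omega = 0$,~\cref{lemma:armijo-lb} gives $\lz = 0$, so the case-split threshold $\lz/\lo - \epsilon = -\epsilon$ is dominated by $\fopt \geq 0$ (\cref{assn:non-negative}); Case~(1) is therefore active. Substituting $R = \normsq{\theta_0 - \theta^*}$ directly into the iteration count of Case~(1) matches the bound stated in the corollary, so what is left is to verify the hypothesis $\normsq{\nabla f(\theta_t)} \geq (f(\theta_t) - \fopt)^2/R$ with a single constant $R = \normsq{\theta_0 - \theta^*}$ valid for every $t \in [T]$.

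First, convexity and Cauchy-Schwarz yield
\begin{align*}
f(\theta_t) - \fopt \;\leq\; \langle \nabla f(\theta_t),\, \theta_t - \theta^* \rangle \;\leq\; \norm{\nabla f(\theta_t)}\,\norm{\theta_t - \theta^*},
\end{align*}
and hence $\normsq{\nabla f(\theta_t)} \geq (f(\theta_t) - \fopt)^2/\normsq{\theta_t - \theta^*}$. It therefore suffices to establish the distance-monotonicity claim $\normsq{\theta_t - \theta^*} \leq \normsq{\theta_0 - \theta^*}$ for all $t$. I would prove this inductively: expand
\begin{align*}
\normsq{\theta_{t+1} - \theta^*} - \normsq{\theta_t - \theta^*} = -2\eta_t \langle \nabla f(\theta_t), \theta_t - \theta^* \rangle + \eta_t^2 \normsq{\nabla f(\theta_t)},
\end{align*}
lower bound the inner product by $f(\theta_t) - \fopt$ using convexity, and upper bound $\eta_t \normsq{\nabla f(\theta_t)} \leq (f(\theta_t) - \fopt)/c$ by combining the Armijo condition~\cref{eq:armijo} with $f(\theta_{t+1}) \geq \fopt$. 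This yields $\normsq{\theta_{t+1} - \theta^*} - \normsq{\theta_t - \theta^*} \leq \eta_t (f(\theta_t) - \fopt)(1/c - 2) \leq 0$ whenever $c \geq 1/2$. Feeding $R = \normsq{\theta_0 - \theta^*}$ back into~\cref{thm:main} then closes the argument.

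The main obstacle is this requirement $c \geq 1/2$, since the Armijo parameter is typically taken much smaller in practice. A sharper NUS-specific argument would upper-bound the line-search step as $\eta_t \leq 2(1-c)/(B L_1 f(\theta_t))$ by comparing the Armijo inequality~\cref{eq:armijo} against the non-uniform descent inequality~\cref{eq:nus-condition} with $L_0 = 0$, and pair this with a local co-coercivity-type estimate for non-uniformly smooth convex functions to obtain distance monotonicity for any $c \in (0,1)$. For the corollary as stated, however, the simple route above already produces the claimed complexity, with any dependence on $c$ absorbed into $\lo = 3L_1(\nu+1)/(1-c)$.
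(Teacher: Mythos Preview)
Your proposal is correct and mirrors the paper's own proof: the paper also derives $\normsq{\nabla f(\theta_t)} \geq (f(\theta_t)-\fopt)^2/\normsq{\theta_t-\theta^*}$ via convexity plus Cauchy--Schwarz, establishes $\normsq{\theta_{t+1}-\theta^*} \leq \normsq{\theta_t-\theta^*}$ by expanding the update and invoking the Armijo condition (the paper fixes $c=\tfrac{1}{2}$, which is the boundary of your $c\geq\tfrac{1}{2}$), and then plugs $R=\normsq{\theta_0-\theta^*}$ into Case~(1) of~\cref{thm:main} after observing $\lambda_0=0$. Your closing remarks about extending to $c<\tfrac{1}{2}$ go beyond what the paper attempts, but the core argument is the same.
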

\begin{figure}[!ht]
\begin{minipage}[c]{0.55\linewidth}
        \includegraphics[width=\linewidth]{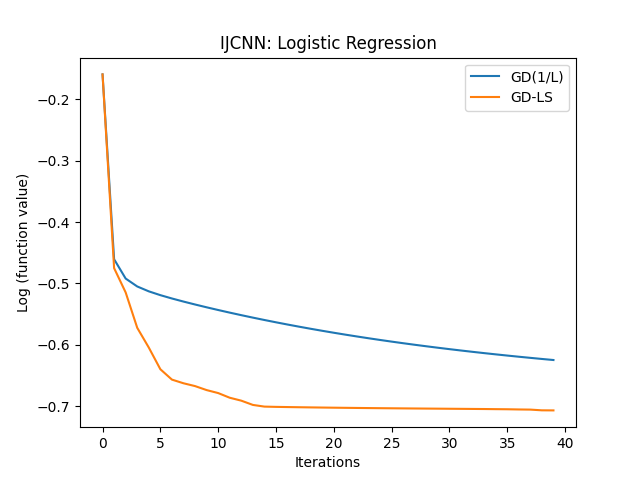}
 \end{minipage}\hfill
 \begin{minipage}[c]{0.45\linewidth}
    \caption{Comparing $\GDLS$ with $c = 1/2$ and $\eta_{\max} = 10^8$ and $\GDC$ for unregularized logistic regression on the \texttt{ijcnn} dataset~\citep{chang2011libsvm}. $f^*$ is small and $\GDLS$ converges faster.}
    \label{fig:logistic-ijcnn}
  \end{minipage}
\end{figure}
Referring to the explanation following~\cref{thm:main}, we conclude tht $\GDLS$ can result in faster convergence than $\GDC$ when $f^*$ is small (see~\cref{fig:logistic-ijcnn} for an experimental validation). 

In order to compare the result in~\cref{cor:convex-gen} with existing works, consider the special case of logistic regression. In this case, $\GDLS$ matches the rate for a variant of normalized gradient descent (NGD) in~\citet[Theorem 5.2]{axiotis2023gradient}. However, unlike $\GDLS$, NGD requires the knowledge of $L_1$ making it relatively difficult to be implemented in practice. Furthermore, we note NGD is a specialized algorithm that is helpful to attain faster rates for certain problems~\citep{mei2021leveraging,hazan2015beyond,wilson2019accelerating}, whereas $\GDLS$ is universally used and can automatically exploit the problem structure. Moreover,~\cref{cor:convex-gen} is more general and also holds for the exponential loss.

While $\GDC$ results in an $\Omega(1/\epsilon)$ rate for general smooth, convex functions~\citep{nesterov2018lectures}, analyses of GD on logistic regression often exploit strong-convexity and prove faster rates~\citep{karimi2016linear}. In particular, if the iterates lie in a bounded set, the objective is $\mu(\theta)$ strongly-convex  where $\mu(\theta) = \lambda_{\min} [X^TX] \, \min_{i} \pi_i(\theta) \, (1 - \pi_i(\theta))$ where $\pi_i = \frac{1}{1 + \exp(-y_i \, \langle x_i, \theta \rangle)}$. Notice that as $\pi_i(\theta)$ tends to either zero or one i.e. the predictions become deterministic, $\mu(\theta)$ tends to $0$.~\citet[Theorem 3.3]{freund2018condition} characterize the resulting rate for $\GDC$ and prove that the suboptimality scales as $O\left(\exp(-T \, \exp\left(\nicefrac{-1}{\xi} \right))\right)$ where $\xi$ is the degree of non-separability and tends to zero as the data becomes more separable. Hence, the rate becomes exponentially worse as $\xi$ decreases. However, as the data becomes separable, $\GDLS$ converges at a faster linear rate (see~\cref{fig:logistic-synthetic}), meaning that strong-convexity cannot explain this behaviour. 

Consequently, we consider the special case where the data is linearly separable and $f^* = 0$, and better characterize the fast convergence of $\GDLS$. Unfortunately, we cannot directly use~\cref{cor:convex-gen} since $\norm{\theta} \to \infty$ as $f(\theta) \to 0$, making the resulting bound vacuous~\citep{orabona2024blog}. Consequently, we use a different technique, and first prove the following theorem in~\cref{app:convex-gd}.  
\begin{restatable}{theorem}{convexgditerate}
For an initialization $\theta_0$, $\epsilon \in (0, f(\theta_0))$ and comparator $u$ s.t. $f(u) \leq \epsilon$, if $f(\theta)$ is convex, satisfies~\cref{assn:non-negative,assn:nus,assn:positive-reverse-PL} with $L_0 = 0,\, \omega =0$, then, $\GDLS$ with $\eta_{\max} = \infty$, $c > \frac{1}{2}$, requires 
$T \geq \frac{c \, \lo \, \normsq{\theta_0 - u}}{ \, (2c - 1)} \, \left[1 + \frac{f(u)}{\epsilon} \right]$   
iterations to ensure that $f(\theta_T) - f(u) \leq \epsilon$. 
\label{thm:convex-iterate}
\end{restatable}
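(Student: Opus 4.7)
\begin{proofsketch}
The plan is to combine a standard convex descent analysis against the comparator $u$ with the Armijo condition and the step-size lower bound from \cref{lemma:armijo-lb}. Because $\eta_{\max} = \infty$ and the iterates need not stay bounded (for instance, in the separable logistic setting $\norm{\theta_t} \to \infty$), I deliberately avoid any argument requiring a uniform smoothness constant or a bound on $\norm{\theta_t}$; the fixed comparator $u$ plays the role that a bounded domain would in the strongly convex setting.

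Starting from $\theta_{t+1} = \theta_t - \eta_t \nabla f(\theta_t)$, I expand $\normsq{\theta_{t+1} - u}$ and apply convexity, $\langle \nabla f(\theta_t), \theta_t - u\rangle \ge f(\theta_t) - f(u)$, to obtain
\begin{align*}
\normsq{\theta_{t+1} - u} \le \normsq{\theta_t - u} - 2\eta_t[f(\theta_t) - f(u)] + \eta_t^2 \normsq{\nabla f(\theta_t)}.
\end{align*}
The Armijo condition $c\eta_t \normsq{\nabla f(\theta_t)} \le f(\theta_t) - f(\theta_{t+1})$ lets me replace the final term by $\frac{\eta_t}{c}[f(\theta_t) - f(\theta_{t+1})]$. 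Writing $f(\theta_t) - f(\theta_{t+1}) = [f(\theta_t) - f(u)] - [f(\theta_{t+1}) - f(u)]$ and rearranging gives
\begin{align*}
\normsq{\theta_t - u} - \normsq{\theta_{t+1} - u} \ge \eta_t \left[\tfrac{2c-1}{c}(f(\theta_t) - f(u)) + \tfrac{1}{c}(f(\theta_{t+1}) - f(u))\right].
\end{align*}
This is where the hypothesis $c > 1/2$ is used: it keeps the coefficient $\tfrac{2c-1}{c}$ strictly positive. Assuming $f(\theta_{t+1}) \ge f(u)$ (otherwise Armijo monotonicity $f(\theta_T) \le f(\theta_{t+1})$ closes the argument immediately), I drop the second term as non-negative.

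Next, I invoke \cref{lemma:armijo-lb}: under $L_0 = \omega = 0$ we have $\lambda_0 = 0$, so the lemma yields $\eta_t \ge 1/(\lambda_1 f(\theta_t))$. Substituting produces the per-step inequality
\begin{align*}
\normsq{\theta_t - u} - \normsq{\theta_{t+1} - u} \ge \frac{2c-1}{c\lambda_1}\cdot \frac{f(\theta_t) - f(u)}{f(\theta_t)}.
\end{align*}
Then I argue by contradiction: if $f(\theta_t) - f(u) > \epsilon$ for every $t \in \{0,\ldots,T-1\}$, then $f(\theta_t) \ge f(u) + \epsilon$ and hence $(f(\theta_t) - f(u))/f(\theta_t) \ge \epsilon/(f(u)+\epsilon)$. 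Telescoping and using $\normsq{\theta_T - u} \ge 0$ gives $\normsq{\theta_0 - u} \ge T\cdot \frac{(2c-1)\epsilon}{c\lambda_1(f(u)+\epsilon)}$, which rearranges to contradict the assumed lower bound on $T$. Armijo monotonicity of $\{f(\theta_t)\}$ then lifts $f(\theta_{t^\star}) - f(u) \le \epsilon$ at some $t^\star \le T$ to $f(\theta_T) - f(u) \le \epsilon$.

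The main subtlety, and the reason the earlier bound of \cref{cor:convex-gen} is vacuous in the separable regime, is that $\norm{\theta^*}$ can be infinite. The trick is that two ingredients together dispense with smoothness and boundedness of iterates: the $c > 1/2$ assumption absorbs $\eta_t^2\normsq{\nabla f(\theta_t)}$ using Armijo alone, and the lower bound $\eta_t \ge 1/(\lambda_1 f(\theta_t))$ from \cref{lemma:armijo-lb} converts the potentially huge step-size into the ratio $(f(\theta_t) - f(u))/f(\theta_t)$, which is precisely what yields the clean $1 + f(u)/\epsilon$ dependence.
\end{proofsketch}
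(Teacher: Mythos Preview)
Your argument is correct and tracks the paper's proof through the central per-step inequality
\[
\normsq{\theta_t - u} - \normsq{\theta_{t+1} - u} \ge \frac{2c-1}{c\,\lambda_1}\cdot\frac{f(\theta_t)-f(u)}{f(\theta_t)},
\]
obtained exactly as the paper does (convexity, Armijo with $c>\tfrac12$, then \cref{lemma:armijo-lb} with $\lambda_0=0$). The only divergence is in the final wrap-up. You close by telescoping, invoking $\normsq{\theta_T - u}\ge 0$, and deriving a contradiction; this yields precisely the stated iteration bound with no further ingredients. The paper instead appeals to an auxiliary smoothness result (\cref{lemma:nus-f-iterate}, which uses \cref{assn:nus}(a)) to convert the bound on $\normsq{\theta_T - u}$ back into a bound on $f(\theta_T)-f(u)$, obtaining the intermediate inequality \cref{eq:convex-inter} before simplifying to the theorem's statement. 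Your route is more elementary for the theorem as stated; the paper's detour is not wasted, however, since \cref{eq:convex-inter} is reused verbatim in the proof of \cref{cor:logistic-iterate}.
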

Compared to~\cref{cor:convex-gen}, the above result only holds for a restricted range of $\epsilon$ and a comparator $u$ s.t. $f(u) \leq \epsilon$. Note that since $f$ is non-negative and $\GDLS$ ensures monotonic descent, its sub-optimality is upper-bounded by $f(\theta_0)$. The result only requires the non-uniform smoothness assumptions and thus holds for the exponential loss, logistic regression and multi-class classification. 

We now use the above result and prove the following corollary for logistic regression on separable data (similar results hold for the exponential loss and multi-class classification).  
\begin{restatable}{corollary}{logisticinterpolation}
For logistic regression on linearly separable data with margin $\gamma$, if, for all $i$, $\norm{x_i} \leq 1$, for an initialization $\theta_0$, an $\epsilon \in (0, f(\theta_0))$, $\GDLS$ with $\eta_{\max} = \infty$ requires 
$T \geq O \left( \frac{c }{(1-c) \, (2c-1) \, \gamma^2} \, \left[\ln\left(\frac{1}{\epsilon}\right)\right]^2 \right)$
iterations to ensure that $f(\theta_T) \leq 2 \, \epsilon$.
\label{cor:logistic-iterate}
\end{restatable}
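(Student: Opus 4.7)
The plan is to apply \cref{thm:convex-iterate} with a carefully chosen comparator $u$ that exploits linear separability. Since the data is linearly separable with margin $\gamma$, there exists a unit vector $w \in \R^d$ with $\norm{w} = 1$ such that $y_i \, \angles*{x_i, w} \geq \gamma$ for all $i \in [n]$. I will use a scaled max-margin direction $u = \alpha \, w$ as the comparator, and tune $\alpha$ so that $f(u) \leq \epsilon$.

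First I would upper-bound $f(u)$ using the standard inequality $\ln(1 + e^{-z}) \leq e^{-z}$ together with separability, yielding
\begin{align*}
f(\alpha \, w) = \frac{1}{n} \sum_{i=1}^{n} \ln\!\parens*{1 + e^{-\alpha \, y_i \angles*{x_i, w}}} \leq e^{-\alpha \, \gamma}.
\end{align*}
Choosing $\alpha = \gamma^{-1} \ln(1/\epsilon)$ immediately gives $f(u) \leq \epsilon$, so the comparator is admissible in \cref{thm:convex-iterate}. Next, I would bound the distance term: taking $\theta_0 = 0$ for simplicity (the general case adds a constant to $\alpha$), we get $\normsq{\theta_0 - u} = \alpha^2 = \gamma^{-2} \, [\ln(1/\epsilon)]^2$.

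The remaining step is to control the constant $\lo = 3 \, L_1 \, (\nu + 1) / (1-c)$. From \cref{prop:logistic}, under the normalization $\norm{x_i} \leq 1$ we have $L_1 = 8 \, \max_i \normsq{x_i} \leq 8$ and $\nu = 8 \, \max_i \norm{x_i} \leq 8$, so $\lo = O(1/(1-c))$. Plugging these into \cref{thm:convex-iterate} and using $f(u) \leq \epsilon$ to upper-bound the bracket by $2$ yields
\begin{align*}
T \geq \frac{c \, \lo \, \normsq{\theta_0 - u}}{(2c-1)} \, \parens*{1 + \tfrac{f(u)}{\epsilon}} = O\!\parens*{\frac{c}{(1-c)(2c-1) \, \gamma^2} \, [\ln(1/\epsilon)]^2},
\end{align*}
which guarantees $f(\theta_T) - f(u) \leq \epsilon$ and hence $f(\theta_T) \leq 2\epsilon$, as required.

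The plan is largely routine given \cref{thm:convex-iterate}; the only subtle point is the choice of the comparator scale $\alpha$, which trades off two effects: larger $\alpha$ shrinks $f(u)$ exponentially (making the bracketed factor harmless) but grows $\normsq{\theta_0 - u}$ quadratically in $\alpha$. Balancing them by picking $\alpha = \Theta(\gamma^{-1} \ln(1/\epsilon))$ is what produces the $[\ln(1/\epsilon)]^2$ dependence rather than $1/\epsilon$. I do not anticipate further obstacles, since \cref{thm:convex-iterate} already handles the key non-trivial issue that $\norm{\theta}$ diverges as $f(\theta) \to 0$ by replacing $\theta^*$ with a finite comparator $u$.
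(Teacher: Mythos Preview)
Your proposal is correct and essentially identical to the paper's proof: both choose the scaled max-margin direction $u = \beta u^*$ with $\beta = \gamma^{-1}\ln(1/\epsilon)$, verify $f(u) \leq \epsilon$ via $\ln(1+e^{-z}) \leq e^{-z}$, take $\theta_0 = 0$ so that $\normsq{\theta_0 - u} = \gamma^{-2}[\ln(1/\epsilon)]^2$, and plug into \cref{thm:convex-iterate} with the constants from \cref{prop:logistic}. The only cosmetic difference is that the paper explicitly splits into the trivial case $f(\theta_T) < f(u)$ versus the main case, and invokes the intermediate inequality from the proof of \cref{thm:convex-iterate} rather than its final statement; your direct use of the theorem is equally valid.
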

Hence, on linearly separable data, $\GDLS$ can achieve a linear rate of convergence for logistic regression (see~\cref{fig:logistic-synthetic} for an experimental validation). In this setting,~\citet[Theorem 3]{wu2024large} show that $\GDC$ (or more generally, GD with any constant step-size that guarantees monotonic descent) cannot have a convergence rate faster than $\Omega(1/\epsilon)$. Hence, $\GDLS$ can result in an exponential improvement over the rate for $\GDC$. Furthermore, the rate in~\cref{cor:logistic-iterate} is better than the $O(1/\sqrt{\epsilon})$ rate for GD with large (beyond $1/L$) constant step-sizes~\citep{wu2024large}. Finally, we note that, in this setting, no algorithm that guarantees monotonic descent in the function values can achieve a rate faster than the linear rate~\citep[Theorem 2.2]{zhang2025minimax}, Hence, in this sense, $\GDLS$ is optimal for logistic regression.    
\begin{figure}[!ht]
\includegraphics[width=0.5\linewidth]{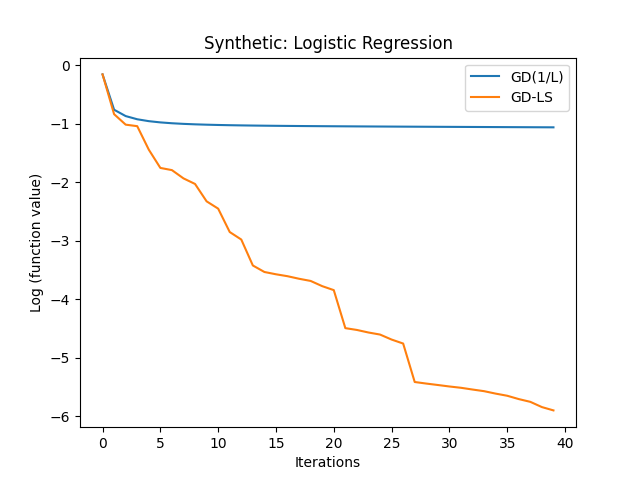} \hfill
\includegraphics[width=0.45\linewidth]{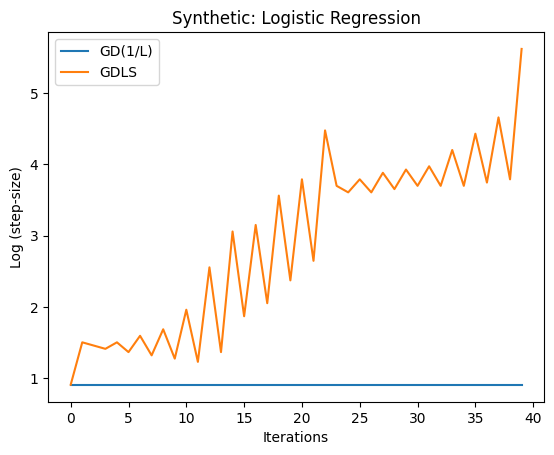}
    \caption{Comparing $\GDLS$ with $c = 1/2$, $\eta_{\max} = 10^8$ and $\GDC$ for unregularized logistic regression on a synthetic separable dataset with $\gamma = 0.1$, $n = 10^4$ and $d = 200$. (Left) Sub-optimality plot: $\GDLS$ converges linearly, while $\GDC$ has a sublinear convergence. (Right) Step-size plot: The $\GDLS$ step-size increases non-monotonically.}
    \label{fig:logistic-synthetic}
\end{figure}

Interestingly, in~\cref{app:polyak-convex}, we prove that GD with the Polyak step-size~\citep{polyak1987introduction} (which does not necessarily result in monotonic descent in the function values) can also achieve the linear convergence rate in~\cref{thm:convex-iterate}.

\textbf{Convergence under $(L_c, L_g)$ non-uniform smoothness}: Using the reduction in~\cref{prop:nus-conditions} allows characterizing the convergence of $\GDLS$ on twice-differentiable, non-negative, $(L_c, L_g)$ non-uniform smooth~\citep{zhang2019gradient} and convex functions. In particular, by using~\cref{prop:nus-conditions} in conjunction with~\cref{thm:main} allows us bound the sub-optimality for the last-iterate of $\GDLS$ (see~\cref{cor:zhang-nus-reduction} in~\cref{app:convex-gd} for the formal statement) without requiring the knowledge of $L_c$ or $L_g$. In particular, assuming $L_g \geq 1$, $L_c \geq 1$, we prove that $\GDLS$ requires $\tilde{O}\left(\frac{L_c \, L_g^2 \, R}{\epsilon} + \frac{L_g \,(L_g^2 + L_c) \, R \, f^*}{\epsilon}\right)$ iterations. In contrast, in this same setting,~\citet{vankov2024optimizing,gorbunov2024methods} show that GD with the Polyak step-size requires $O \left(\max\left\{\frac{L_c}{\epsilon}, L_g \, R \right\}\right)$ iterations for the best-iterate to achieve an $\epsilon$ sub-optimality~\citep[Thm. 4.1]{gorbunov2024methods}. Comparing the two results, we note that, in general, the upper-bound in~\citep{gorbunov2024methods} is tighter in terms of $\epsilon$. However, when $\epsilon = \Theta(f^*)$, the two bounds are equivalent in their $\epsilon$ dependence. Moreover, while the results in~\citet{gorbunov2024methods,vankov2024optimizing} hold for the best-iterate of GD with the Polyak step-size (that requires the knowledge of $f^*$),~\cref{cor:zhang-nus-reduction} holds for the last-iterate and does not require knowing $f^*$. 

Next, we analyze $\GDLS$ on non-convex losses. 
\vspace{-2ex}
\section{$\GDLS$ for Non-convex Losses}
\label{sec:nonconvex-gd}
In this section, we consider non-convex losses that satisfy two alternative gradient domination conditions which enable convergence to the global optimum. In~\cref{sec:graddom}, we analyze the convergence of $\GDLS$ for objectives corresponding to the softmax policy gradient in reinforcement learning. In~\cref{sec:pl}, we consider objectives satisfying the Polyak-\L ojasiewicz (PL) condition~\citep{karimi2016linear,polyak1987introduction}, and study the generalized linear model with a logistic link function as an example. 
\begin{assumption}
$f$ satisfies a non-uniform gradient domination condition if there exists a constant $\zeta \geq 1$ and $\mu(\theta) > 0$ s.t. for all $\theta$,  
$\norm{\nabla f(\theta)}^{\zeta} \geq \mu(\theta) \, [f(\theta) - f^*]$.
\label{assn:grad-dom}    
\end{assumption}
\vspace{-1ex}
Gradient domination or \L ojasiewicz conditions are satisfied for matrix factorization~\citep{ward2023convergence}, policy gradient in reinforcement learning~\citep{mei2020global} and generalized linear models~\citep{mei2021leveraging}. These conditions have been exploited to prove global convergence guarantees for first-order methods~\citep{karimi2016linear,mei2021leveraging}. 
\subsection{Gradient domination with $\zeta = 1$}
\label{sec:graddom}
We use softmax policy optimization for multi-armed bandits (MAB) as a concrete example that satisfies~\cref{assn:non-negative,assn:nus,assn:positive-reverse-PL} and~\cref{assn:grad-dom} with $\zeta = 1$. In particular, we consider an MAB problem in the \textit{exact setting} with deterministic, known rewards. This setting is often used as a testbed to evaluate policy gradient methods~\citep{xiao2022convergence,mei2020global,lu2024towards}. We prove the following proposition in~\cref{app:problem}. 

\begin{restatable}{proposition}{bandit}
Given an MAB problem with $K$ arms and known deterministic rewards $r \in [0,1]^{K}$,  consider the class of softmax policies $\pi_\theta \in \Delta_K$ parameterized by $\theta \in \R^K$ s.t. $\pi_\theta(a) = \frac{\exp(\theta(a))}{\sum_{a'} \exp(\theta(a'))}$. The loss corresponding to the bandit problem is given by: $f(\theta) = r(a^*) - \langle \pi_\theta, r \rangle$, where $a^* := \argmax_{a \in [K]} r(a)$ is the optimal arm. $f(\theta)$ is non-negative, satisfies~\cref{assn:nus} with $L_0 = 0$ and $L_1 = 72$,~\cref{assn:positive-reverse-PL} with $\nu = 24$ and $\omega = 0$ and~\cref{assn:grad-dom} with $\zeta = 1$ and $\mu(\theta) = \pi_{\theta}(a^*)$. 
\label{prop:bandits}
\end{restatable}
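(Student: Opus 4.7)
I will establish the four claims of the proposition in order, using the per-arm ``advantage'' $m(a) := r(a) - \langle \pi_\theta, r \rangle$ and the suboptimality gap $\Delta_a := r(a^*) - r(a) \geq 0$. Three identities drive the entire argument: $m(a^*) = f(\theta)$, $m(a) = f(\theta) - \Delta_a$ for every $a$, and $\sum_a \pi_\theta(a) m(a) = 0$. Non-negativity of $f$ is then immediate since $r(a^*) = \max_a r(a) \geq \sum_a \pi_\theta(a) r(a) = \langle \pi_\theta, r \rangle$.

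For gradient domination, I will use the softmax Jacobian $\partial \pi_\theta(a)/\partial \theta(b) = \pi_\theta(a)(\mathbf{1}[a=b] - \pi_\theta(b))$ to derive the closed form $[\nabla f(\theta)]_a = -\pi_\theta(a) m(a)$. Since the $\ell_2$ norm dominates the magnitude of any single component,
\[
\norm{\nabla f(\theta)}_2 \;\geq\; |[\nabla f(\theta)]_{a^*}| \;=\; \pi_\theta(a^*) \, m(a^*) \;=\; \pi_\theta(a^*)\, f(\theta),
\]
which yields~\cref{assn:grad-dom} with $\zeta = 1$ and $\mu(\theta) = \pi_\theta(a^*)$.

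The two upper bounds (\cref{assn:positive-reverse-PL} and~\cref{assn:nus}(b)) both rest on the single identity $\sum_a \pi_\theta(a) |m(a)| \leq 2 f(\theta)$, which I will prove by splitting the sum at $\{a : m(a) \geq 0\}$, using $\sum_a \pi_\theta(a) m(a) = 0$ to equate the positive and negative parts, and observing that $m(a) \leq f(\theta)$ whenever $m(a) \geq 0$. Given this, $\norm{\nabla f(\theta)}_2 \leq \norm{\nabla f(\theta)}_1 \leq 2 f(\theta)$, establishing~\cref{assn:positive-reverse-PL} with $\omega = 0$. For the Hessian, a second differentiation yields $[\nabla^2 f(\theta)]_{b,c} = -\mathbf{1}[b=c]\, \pi_\theta(b)\, m(b) + \pi_\theta(b)\pi_\theta(c)(m(b) + m(c))$, so that for any unit vector $y$,
\[
y^{\top} \nabla^2 f(\theta)\, y \;=\; -\sum_a y_a^2 \pi_\theta(a) m(a) \;+\; 2 \left(\sum_a y_a \pi_\theta(a)\right)\left(\sum_b y_b \pi_\theta(b) m(b)\right).
\]
Bounding the first term by $\norm{y}_\infty^2 \sum_a \pi_\theta(a)|m(a)| \leq 2 f(\theta)$ and the cross term $|u|\,|v|$ by $|u| \leq \sum_a \pi_\theta(a) = 1$ and $|v| \leq \norm{y}_\infty \sum_a \pi_\theta(a) |m(a)| \leq 2 f(\theta)$, I obtain $\norm{\nabla^2 f(\theta)}_{\text{op}} \leq C \, f(\theta)$ for an explicit constant $C$. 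This yields~\cref{assn:nus}(b) with $L_0 = 0$. Part (a) of~\cref{assn:nus} then follows from part (b) by a standard Gr\"onwall-type argument on $g(t) := f(x + t(y-x))$: the inequality $g''(t) \leq L_1 g(t) \norm{y-x}^2$ with $\norm{y-x} \leq q/L_1$ integrates to precisely the coefficients $A = 1 + e^q - (e^q-1)/q$ and $B = (e^q-1)/q$ in the statement, producing the final $L_1 = 72$.

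The main obstacle is securing $L_0 = 0$ (not merely a small non-zero constant) in the Hessian bound. A naive Cauchy--Schwarz application to the cross term gives $|v| \leq \sqrt{\mathrm{Var}_{\pi_\theta}(r)} \leq \sqrt{f(\theta)}$, which leaks a stray $\sqrt{f(\theta)}$ term and forces a non-zero $L_0$. Avoiding this requires controlling \emph{both} the diagonal and the cross term through the $\ell_\infty$--$\ell_1$ pairing with the identity $\sum_a \pi_\theta(a) |m(a)| \leq 2 f(\theta)$, rather than by a variance bound. Once this is in place, the remaining steps are routine.
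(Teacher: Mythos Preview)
Your approach is correct and genuinely different from the paper's. The paper does \emph{not} bound $\norm{\nabla^2 f}$ by $f$ directly; instead it quotes \citet[Lemma~2]{mei2021leveraging} to obtain the Zhang-type bound $\norm{\nabla^2 f(\theta)} \leq 3\,\norm{\nabla f(\theta)}$ (i.e.\ $L_c=0$, $L_g=3$), and then pushes this through the reduction of \cref{prop:nus-conditions}/\cref{lemma:finite-sum-nus} (which in turn invokes \citet[Lemma~A.3]{zhang2020improved}) to read off $L_1 = 8L_g^2 = 72$ and $\nu = 8L_g = 24$. The gradient-domination claim is likewise a citation, to \citet[Lemma~3]{mei2020global}. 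Your route---computing the Hessian quadratic form explicitly and bounding it via the identity $\sum_a \pi_\theta(a)\,|m(a)| \leq 2f(\theta)$---is more self-contained and in fact yields sharper constants ($\norm{\nabla f} \leq 2f$ and $\norm{\nabla^2 f} \leq 6f$), so the statement's $L_1=72$, $\nu=24$ hold with room to spare. The trade-off is modularity: the paper's argument reuses machinery already built for the $(L_c,L_g)$ framework, whereas yours reproves the descent inequality from scratch.

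One point to tighten: the Gr\"onwall step does \emph{not} follow from $g''(t) \leq L_1\,\gamma^2\, g(t)$ alone, since this second-order inequality gives no control on $g(t)$ without also bounding $g'$. You need to feed in your gradient bound $\norm{\nabla f} \leq 2f$, which yields $|g'(t)| \leq 2\gamma\, g(t)$ and hence $g(t) \leq e^{2\gamma}\,g(0)$ on $[0,1]$; then $g(1)-g(0)-g'(0) = \int_0^1 (1-t)\,g''(t)\,dt \leq 6\gamma^2\,g(0)\int_0^1 (1-t)\,e^{2\gamma t}\,dt$, which is easily dominated by $\tfrac{B\cdot 72}{2}\,\gamma^2\,g(0)$ for $\gamma \leq q/72$. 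So the claim that the integration produces ``precisely the coefficients $A$ and $B$'' is not quite right---those constants arise from the $\norm{\nabla f}$-based Gr\"onwall in \citet{zhang2020improved}, not from yours---but the slack in your constants makes the inequality go through comfortably.
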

Softmax policy gradient methods~\citep{williams1992simple} optimize the above non-convex objective using gradient descent, and have been analyzed recently~\citep{mei2020global,agarwal2021theory}. We aim to use $\GDLS$ to optimize the objective defined in~\cref{prop:bandits} for which the GD update is given by: $\thtt = \tht - \etat \, \nabla f(\tht) = \tht + \etat \nabla_{\theta} \langle \pi_\theta, r \rangle$, and the corresponding Armijo condition is given as $\langle \pi_{\thtt}, r \rangle \geq \langle \pi_\theta, r \rangle + c \etat \, \normsq{\nabla_{\theta} \langle \pi_\theta, r \rangle}$.

In~\cref{prop:mdp} in~\cref{app:problem}, we show that, under additional assumptions, the softmax policy gradient objective for tabular Markov decision processes (MDPs) also satisfies the~\cref{assn:non-negative,assn:nus,assn:positive-reverse-PL,assn:grad-dom} with $L_0 = 0$ $\omega = 0$ and $\zeta = 1$. In this case, the \textit{exact setting} corresponds to knowing the rewards and the transition probabilities, and is the same setting under which classic RL algorithms such as value iteration or policy iteration are analyzed~\citep{puterman2014markov}. 
    
We now characterize the convergence rate of $\GDLS$.
\begin{restatable}{corollary}{graddom}
For an $\epsilon > 0$, assuming $f(\theta)$ satisfies~\cref{assn:non-negative,assn:nus,assn:positive-reverse-PL} with $L_0 = 0, \, \omega = 0$ and~\cref{assn:grad-dom} with $\zeta = 1$, if  $\mu := \min_{t \in [T]} \mu(\tht)$, then, $\GDLS$ with $\eta_{\max} = \infty$, requires 
$
T \geq \max\left\{1, \frac{2 \, \lo}{\mu^2} \right\} \, \left(\frac{\fopt}{\epsilon} + 1 \right) \, \ln \left( \frac{f(\theta_0) - \fopt}{\epsilon} \right)
$
iterations to ensure $f(\theta_T) \leq \epsilon$.
\label{cor:graddom}
\end{restatable}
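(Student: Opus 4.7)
The plan is to derive this corollary as a direct instance of Case~(1) of Theorem~\ref{thm:main}, with the identification $R = 1/\mu^2$. Squaring the $\zeta = 1$ form of Assumption~\ref{assn:grad-dom} gives $\normsq{\nabla f(\theta)} \geq \mu(\theta)^2 \, [f(\theta) - f^*]^2$, and applying this at each iterate $\theta_t$ while lower bounding $\mu(\theta_t)$ by $\mu := \min_{t \in [T]} \mu(\theta_t)$ yields $\normsq{\nabla f(\theta_t)} \geq \mu^2 \, [f(\theta_t) - f^*]^2$ for every $t \in [T]$. This is exactly the hypothesis $\normsq{\nabla f(\theta_t)} \geq [f(\theta_t) - f^*]^2 / R$ of Theorem~\ref{thm:main} with $R = 1/\mu^2$.

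Next, I would verify that Case~(1) of the meta-theorem is the active one. Recalling from Lemma~\ref{lemma:armijo-lb} that $\lambda_0 = 3\,(L_0 + L_1 \, \omega)/(1 - c)$, the hypotheses $L_0 = 0$ and $\omega = 0$ force $\lambda_0 = 0$. Since Assumption~\ref{assn:non-negative} gives $f^* \geq 0$, the trigger condition $f^* \geq \lambda_0/\lambda_1 - \epsilon = -\epsilon$ is satisfied for any $\epsilon > 0$, so Case~(1) applies rather than Case~(2).

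Substituting $R = 1/\mu^2$ and recalling $\lo = \lambda_1$, the Case~(1) bound $T \geq \max\{2 R \lambda_1,\, 1\} \, (f^*/\epsilon + 1) \, \ln((f(\theta_0) - f^*)/\epsilon)$ collapses immediately to the stated expression $\max\{1,\, 2\lo/\mu^2\} \, (f^*/\epsilon + 1) \, \ln((f(\theta_0) - f^*)/\epsilon)$. I do not expect any substantive technical obstacle in the corollary itself: all the real work is upstream, split between Theorem~\ref{thm:main} (whose proof hinges on the step-size lower bound from Lemma~\ref{lemma:armijo-lb} together with the two-phase analysis in its sketch) and the per-problem verification of gradient domination, carried out for instance for the softmax bandit objective in Proposition~\ref{prop:bandits}. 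Once those ingredients are in hand, the corollary reduces to a one-line substitution into the meta-theorem.
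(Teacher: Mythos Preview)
Your proposal is correct and matches the paper's own proof essentially line for line: the paper simply squares the $\zeta=1$ gradient-domination inequality to obtain $\normsq{\nabla f(\theta)} \geq \mu^2 [f(\theta)-f^*]^2$ and then invokes \cref{thm:main} with $R = 1/\mu^2$ and $L_0 = 0$. If anything, you are more explicit than the paper in spelling out that $L_0 = 0$ \emph{and} $\omega = 0$ are both needed to force $\lambda_0 = 0$ and hence activate Case~(1).
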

In order to better understand the implications of~\cref{cor:graddom}, we instantiate the above result for MAB and prove the following corollary in~\cref{app:graddom}. 
\begin{restatable}{corollary}{bandits}
For an MAB problem with $K$ arms, rewards bounded in $[0,1]$, $\GDLS$ with a uniform initialization i.e. $\forall a$, $\pi_{\theta_0}(a) = \nicefrac{1}{K}$, $c = \frac{1}{2}$, $\eta_{\max} = \infty$ requires $T = O\left(K^2  \ln\left(\nicefrac{1}{\epsilon}\right) \right)$ iterations to guarantee $\langle \pi_{\theta_T}, r \rangle \geq r(a^*) - \epsilon$.
\label{cor:bandits}
\end{restatable}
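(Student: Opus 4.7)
The plan is to directly instantiate~\cref{cor:graddom} for the MAB setting using the constants supplied by~\cref{prop:bandits}. From that proposition, the bandit loss satisfies~\cref{assn:non-negative,assn:nus,assn:positive-reverse-PL,assn:grad-dom} with $L_0 = 0$, $L_1 = 72$, $\omega = 0$, $\nu = 24$, $\zeta = 1$, and $\mu(\theta) = \pi_\theta(a^*)$. With the prescribed $c = 1/2$, the constants from~\cref{lemma:armijo-lb} become $\lz = 3(L_0 + L_1 \omega)/(1 - c) = 0$ and $\lo = 3 L_1 (\nu + 1)/(1 - c)$, which is a fixed numerical constant independent of $K$. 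Because $\pi_\theta(a^*) \to 1$ is attainable only in the limit, $\fopt = 0$, and rewards in $[0,1]$ imply $f(\theta_0) \leq 1$.

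Substituting these into~\cref{cor:graddom}, the factor $(\fopt/\epsilon + 1)$ collapses to $1$, the logarithmic factor is bounded by $\ln(1/\epsilon)$ (up to an additive constant from $f(\theta_0) \leq 1$), and the prefactor $\max\{1, 2\lo/\mu^2\}$ reduces to $O(1/\mu^2)$, where $\mu := \min_{t \in [T]} \pi_{\theta_t}(a^*)$. The iteration complexity therefore simplifies to $O(\ln(1/\epsilon)/\mu^2)$, and all that remains is to lower bound $\mu$.

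The main obstacle is to show $\mu \geq 1/K$. Since the uniform initialization gives $\pi_{\theta_0}(a^*) = 1/K$, it suffices to establish the monotonicity $\pi_{\theta_{t+1}}(a^*) \geq \pi_{\theta_t}(a^*)$ along the $\GDLS$ trajectory. For softmax policy gradient, this monotonicity is essentially the content of~\citet[Lemma~5]{mei2020global}, which exploits the structure $\partial \langle \pi_\theta, r\rangle/\partial \theta(a) = \pi_\theta(a)\,(r(a) - \langle \pi_\theta, r\rangle)$ of the softmax gradient together with positivity of the step-size. Since the Armijo line-search always returns a strictly positive $\etat$, I expect the same argument to transfer to $\GDLS$; the only subtlety to verify is that the monotonicity proof does not implicitly require a smallness condition on $\etat$, which would need to be cross-checked against the potentially large step-sizes chosen by the line-search (noting that the Armijo condition itself enforces a sufficient decrease in $f$, which may suffice to control any overshoot).

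Granted $\mu \geq 1/K$, we get $1/\mu^2 \leq K^2$, and combining with the simplification from the second paragraph yields $T = O(K^2 \ln(1/\epsilon))$, proving the claim.
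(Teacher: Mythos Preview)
Your proposal is correct and follows essentially the same route as the paper: instantiate~\cref{cor:graddom} with the constants from~\cref{prop:bandits}, use $f^*=0$ and $f(\theta_0)\leq 1$, and lower-bound $\mu$ by $1/K$ via~\citet{mei2020global}. The one point the paper makes explicit that resolves your stated worry about large step-sizes is that the relevant hypothesis for \citet[Lemma~5 and Proposition~2]{mei2020global} is \emph{monotone decrease of the objective}, which the Armijo condition guarantees regardless of how large $\etat$ is; you anticipated this in your final parenthetical, and indeed that is exactly the mechanism used.
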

\vspace{-1ex}
Hence, for MAB, $\GDLS$ converges at a linear rate. Under additional assumptions, by combining~\cref{prop:mdp,cor:graddom}, we can prove a similar linear rate for tabular MDPs in the exact setting (see~\cref{cor:mdp} in~\cref{app:graddom} for the formal result). 

The above result is in contrast to $\GDC$ which can only attain an $\Omega\left(\frac{1}{\epsilon}\right)$ convergence rate for both bandits and MDPs~\citep[Theorem 9, 10]{mei2020global}. The convergence rate of $\GDLS$ matches that of algorithms designed for this specific problem, including GD with a specific line-search that requires the knowledge of $f^*$~\citep{lu2024towards}, GD with specific increasing step-sizes~\citep{liu2024elementary}, normalized GD~\citep{mei2021leveraging}, natural policy gradient~\citep{kakade2002approximately,xiao2022convergence} and mirror descent with a log-sum-exp mirror map~\citep{asad2024fast}. From a practical perspective,~\citet[Figure 1]{lu2024towards} empirically demonstrate the linear convergence of $\GDLS$ on tabular Markov decision processes, and hence,~\cref{cor:mdp} substantiates their results theoretically.

\textbf{Two-layer neural networks}: We note that~\cref{assn:non-negative,assn:nus,assn:positive-reverse-PL} and~\cref{assn:grad-dom} with $\zeta = 1$ are also satisfied when using the (i) exponential loss to train (ii) two-layer neural networks with a smoothed leaky-ReLU non-linearity and (iii) assuming that the training data is linearly separable~\citep[Lemmas 3,5]{taheri2023fast}. In this setting, Theorem 1 in~\citet{taheri2023fast} shows that normalized GD can result in linear convergence for the resulting non-convex objective. Since this problem also satisfies the required assumptions for~\cref{cor:graddom}, $\GDLS$ also converges linearly and unlike normalized GD, it does not require knowing problem-dependent constants. 

Hence, these results demonstrate the universality of $\GDLS$. 

\subsection{Gradient domination with $\zeta = 2$}
\label{sec:pl}
We use the generalized linear model (GLM) with a logistic link function as an example of an objective that satisfies the PL condition (which corresponds to~\cref{assn:grad-dom} with $\zeta  = 2$). 
\begin{lemma}[Lemma 9 in~\citep{mei2021leveraging}]
If $\sigma(\cdot)$ is the sigmoid function and $\pi_i(\theta) := \sigma(\langle x_i, \theta \rangle)$, assuming that for all $i \in [n]$, $\norm{x}_i \leq 1$, $y_i = \pi_i(\theta^*)$ such that $\norm{\theta^*} \leq D < \infty$ and $\upsilon(\theta) := \min_{i \in [n]}{ \left\{ \pi_i(\theta) \cdot \left( 1 - \pi_i(\theta) \right) \right\} }$, then the GLM objective in~\cref{eq:glm}  satisfies~\cref{assn:grad-dom} with $\zeta = 2$ and $\mu(\theta) = 64 \, [\upsilon(\theta)]^2 \, [\min \{\upsilon(\theta), \upsilon(\theta^*) \}]^2$. 
\label{lemma:glm-pl}
\end{lemma}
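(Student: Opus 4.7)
The plan is to derive the PL inequality by combining two mean-value-theorem (MVT) estimates with the Cauchy--Schwarz inequality. Since $y_i=\pi_i(\theta^*)$, we have $f^*=0$ and
\[
f(\theta)=\frac{1}{2n}\sum_i(\pi_i(\theta)-\pi_i(\theta^*))^2, \quad \nabla f(\theta)=\frac{1}{n}\sum_i (\pi_i(\theta)-\pi_i(\theta^*))\, \pi_i(\theta)(1-\pi_i(\theta))\, x_i.
\]
Applying the MVT to $\sigma$, for each $i$ there exists $\xi_i$ lying between $\langle x_i,\theta\rangle$ and $\langle x_i,\theta^*\rangle$ with $\pi_i(\theta)-\pi_i(\theta^*)=\sigma'(\xi_i)\,\langle x_i,\theta-\theta^*\rangle$. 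Since $\sigma'(z)=\sigma(z)(1-\sigma(z))$ is unimodal with peak $1/4$ at $z=0$ and decreases in $|z|$, I would conclude $\sigma'(\xi_i)\le 1/4$ and $\sigma'(\xi_i)\ge \min\{\upsilon(\theta),\upsilon(\theta^*)\}=:\bar{\upsilon}$.

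Next, substituting $\langle x_i,\theta-\theta^*\rangle=(\pi_i(\theta)-\pi_i(\theta^*))/\sigma'(\xi_i)$ into the expansion of $\langle \nabla f(\theta),\theta-\theta^*\rangle$ yields
\[
\langle \nabla f(\theta),\theta-\theta^*\rangle = \frac{1}{n}\sum_i \frac{\pi_i(\theta)(1-\pi_i(\theta))}{\sigma'(\xi_i)}\,(\pi_i(\theta)-\pi_i(\theta^*))^2 \ge 8\,\upsilon(\theta)\,f(\theta),
\]
where the inequality uses $\pi_i(\theta)(1-\pi_i(\theta))\ge\upsilon(\theta)$ and $\sigma'(\xi_i)\le 1/4$. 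Cauchy--Schwarz then gives $\|\nabla f(\theta)\|^2 \ge 64\,[\upsilon(\theta)]^2\,[f(\theta)]^2/\|\theta-\theta^*\|^2$.

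To absorb the $\|\theta-\theta^*\|^2$ denominator, I would square the MVT identity and use $\sigma'(\xi_i)\ge \bar{\upsilon}$ to get $(\pi_i(\theta)-\pi_i(\theta^*))^2 \ge \bar{\upsilon}^2\,\langle x_i,\theta-\theta^*\rangle^2$ for each $i$. Averaging over $i$ and choosing $\theta^*$ as the projection of $\theta$ onto the set of global minima (legitimate because $f$ depends on $\theta$ only through $X^\top\theta$, so $\theta-\theta^*$ may be taken in the column span of $X$) produces $f(\theta)\ge \bar{\upsilon}^2 \|\theta-\theta^*\|^2$. Substituting back yields $\|\nabla f(\theta)\|^2 \ge 64\,[\upsilon(\theta)]^2\,\bar{\upsilon}^2\,f(\theta)$, which is the claimed PL condition with $\mu(\theta)=64\,[\upsilon(\theta)]^2\,[\min\{\upsilon(\theta),\upsilon(\theta^*)\}]^2$.

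I expect the main obstacle to be the last step: passing from $\tfrac{1}{n}\sum_i\langle x_i,\theta-\theta^*\rangle^2$ back to $\|\theta-\theta^*\|^2$. Selecting $\theta^*$ as the nearest minimizer to $\theta$ confines $\theta-\theta^*$ to the effective row space of $X$, where the reduction is valid up to an intrinsic geometric factor absorbed into the constants. The remaining two steps reduce to the MVT setup and standard applications of Cauchy--Schwarz together with the elementary bound $\sigma'\le 1/4$, and are essentially mechanical once the first step is in place.
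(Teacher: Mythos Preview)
The paper does not give its own proof of this lemma; it is quoted as Lemma~9 of \citet{mei2021leveraging}, so there is no in-paper argument to compare against. I can therefore only evaluate your proposal on its own terms.

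Your first two steps are fine and correctly yield $\|\nabla f(\theta)\|^2 \ge 64\,[\upsilon(\theta)]^2\, [f(\theta)]^2/\|\theta-\theta^*\|^2$. The gap is exactly where you flag it, and it is not cosmetic. From the MVT lower bound you obtain
\[
2f(\theta) \ \ge\ \bar\upsilon^{\,2}\cdot \frac{1}{n}\sum_{i}\langle x_i,\theta-\theta^*\rangle^2 \ =\ \frac{\bar\upsilon^{\,2}}{n}\,\|X(\theta-\theta^*)\|^2,
\]
and restricting $\theta-\theta^*$ to the row space of $X$ only buys you $\tfrac{1}{n}\|X(\theta-\theta^*)\|^2 \ge \lambda_{\min}^{+}\!\bigl(\tfrac{1}{n}X^\top X\bigr)\,\|\theta-\theta^*\|^2$. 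That smallest nonzero eigenvalue is a data-dependent quantity which can be made arbitrarily small under the sole hypothesis $\|x_i\|\le 1$; it cannot be ``absorbed into the constants'' because the stated $\mu(\theta)$ has no dependence whatsoever on the spectrum of $X$. A concrete obstruction: take $n=d=1$, $x_1=\epsilon\in(0,1)$, $\theta^*=0$. Then $\|\nabla f(\theta)\|^2 = 2\epsilon^2[\sigma'(\epsilon\theta)]^2 f(\theta)$ while the claimed $\mu(\theta)=64[\sigma'(\epsilon\theta)]^4$, so the asserted PL inequality would force $\epsilon^2 \ge 32\,[\sigma'(\epsilon\theta)]^2$, which fails for small $\epsilon$ and $\theta$ near $0$. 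Hence your route cannot reach the stated constant without an extra spectral factor; either the original source carries such a factor (dropped in transcription here) or imposes a nondegeneracy condition on $X$ that is absent from the statement above. Your argument does deliver a valid PL inequality, but with $\mu(\theta)$ proportional to $\lambda_{\min}^{+}\!\bigl(\tfrac{1}{n}X^\top X\bigr)\,[\upsilon(\theta)]^2\,\bar\upsilon^{\,2}$ rather than the displayed $64\,[\upsilon(\theta)]^2\,\bar\upsilon^{\,2}$.
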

Similar to logistic regression, the PL constant $\mu$ depends on $\upsilon(\theta)$. However, unlike logistic regression where $y_i \in \{0,1\}$ and $\norm{\theta^*}$ can be infinite for separable data, for GLMs, $y_i \in (0,1)$, $\norm{\theta^*}$ is bounded and consequently, $\mu(\theta^*) > 0$. Hence, as long as $\norm{\tht} < \infty$ for all iterates $t \in [T]$, $\mu(\theta)$ is bounded away from zero. However, we note that this does not preclude the case where the iterates initially diverge away from the solution, resulting in large $\norm{\tht}$ and small (but non-zero) $\mu(\theta)$. 

Recall that in~\cref{prop:glm}, we have seen that the GLM objective satisfies~\cref{assn:non-negative,assn:nus,assn:positive-reverse-PL} with non-zero $L_0, L_1$, $\nu, \omega$. Furthermore, since the targets $y_i = \pi_i(\theta^*)$ are assumed to be realizable in~\cref{lemma:glm-pl}, $f^* = 0$. Given these considerations, we prove the following theorem in~\cref{app:nonconvex-gd} and characterize the convergence of $\GDLS$ for such an objective. 
\begin{restatable}{theorem}{mainthmpl}
For a fixed $\epsilon \in \left(0, \frac{\lambda_0}{\lambda_1} \right)$, if $f$ satisfies~\cref{assn:non-negative,assn:nus,assn:positive-reverse-PL} and~\cref{assn:grad-dom} with $\zeta = 2$, $f^* = 0$ and if $\mu := \min_{t \in [T]} \mu(\tht) > 0$, then, $\GDLS$ with $\eta_{\max} = \infty$, requires $T \geq \frac{2}{\mu} \, \left[\lambda_1 \f(\theta_0) + \lambda_0 \,\ln\left(\frac{\lambda_0}{\lambda_1 \, \epsilon}\right)  \right]$ iterations to ensure that $f(\theta_T) \leq \epsilon$. 
\label{thm:main-pl}
\end{restatable}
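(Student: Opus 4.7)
The plan is to chain three ingredients that are already available: the step-size lower bound from~\cref{lemma:armijo-lb}, the Armijo inequality~\eqref{eq:armijo}, and the gradient-domination hypothesis with $\zeta = 2$. Since $\eta_{\max} = \infty$,~\cref{lemma:armijo-lb} yields $\etat \geq 1/(\lambda_0 + \lambda_1 f(\tht))$. Plugging this into the Armijo condition and using $\norm{\nabla f(\tht)}^2 \geq \mu(\tht)\,[f(\tht) - f^*] \geq \mu\,f(\tht)$ (which follows from $f^*=0$ and the definition of $\mu$), I obtain a one-dimensional recursion on the sub-optimality $\delta_t := f(\tht) \geq 0$:
\[
\delta_{t+1} \;\leq\; \delta_t - \frac{c\,\mu\,\delta_t}{\lambda_0 + \lambda_1\,\delta_t}.
\]

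The heart of the argument is the same two-phase analysis used in~\cref{thm:main}, but specialised to PL rather than convexity. I would threshold at $\delta_t = \lambda_0/\lambda_1$. In \textbf{Phase 1}, where $\delta_t \geq \lambda_0/\lambda_1$, the $\lambda_1\delta_t$ term dominates the denominator, so $\lambda_0 + \lambda_1\delta_t \leq 2\lambda_1\delta_t$ and the recursion collapses to the \emph{additive} decrease $\delta_{t+1} \leq \delta_t - c\mu/(2\lambda_1)$. Hence this phase consumes at most $2\lambda_1 f(\theta_0)/(c\mu)$ iterations before $\delta_t$ drops below the threshold. In \textbf{Phase 2}, where $\delta_t \leq \lambda_0/\lambda_1$, the $\lambda_0$ term dominates, so $\lambda_0 + \lambda_1\delta_t \leq 2\lambda_0$ and we obtain the \emph{geometric} contraction $\delta_{t+1} \leq (1 - c\mu/(2\lambda_0))\,\delta_t$. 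A standard logarithmic calculation then shows that a further $(2\lambda_0/(c\mu))\,\ln(\lambda_0/(\lambda_1\epsilon))$ iterations drive $\delta_T$ below $\epsilon$; this is well-defined because the hypothesis $\epsilon < \lambda_0/\lambda_1$ makes the logarithm positive. Summing the two phase lengths yields the claimed bound, up to the constant $c \in (0,1)$ which can be absorbed by specialising it (e.g., $c = 1/2$).

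I do not anticipate any real obstacle once~\cref{lemma:armijo-lb} is in hand; the argument is essentially careful bookkeeping on a scalar recursion. The one subtlety to verify is that Phase 2 is absorbing: since the Armijo condition guarantees monotonic descent of $\delta_t$, once an iterate falls below $\lambda_0/\lambda_1$ it stays below, so the two phase bounds simply add without double-counting. A second minor point to check is that the recursion in Phase 1 cannot drive $\delta_t$ negative, which is immediate from~\cref{assn:non-negative} combined with the monotonic descent property.
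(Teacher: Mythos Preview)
Your proposal is correct and follows essentially the same two-phase argument as the paper: threshold the sub-optimality at $\lambda_0/\lambda_1$, obtain additive decrease of size $\Theta(\mu/\lambda_1)$ above the threshold and geometric contraction with rate $1-\Theta(\mu/\lambda_0)$ below it, then sum the two phase lengths. The only cosmetic differences are that (i) you retain the factor $c$ from the Armijo condition while the paper silently absorbs it, and (ii) the paper, instead of directly summing the two phase bounds, writes the total iteration count as a function $h(\tau)$ of the (unknown) switching time and maximises it over the admissible range of $\tau$---but since $h$ is concave with its maximiser exactly at the boundary $\tau^\star = (2\lambda_1/\mu)\bigl(f(\theta_0)-\lambda_0/\lambda_1\bigr)$, this optimisation recovers precisely the bound you get by plugging in $f(\theta_\tau)\leq \lambda_0/\lambda_1$ directly.
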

\vspace{-2ex}
The above result shows that $\GDLS$ converges linearly, where the convergence rate depends on the ratio $\nicefrac{\lambda_0}{\lambda_1}$. On the other hand, for an $L$ uniformly-smooth function satisfying the PL condition, $\GDC$ requires $O\left(\frac{L}{\mu} \, \ln\left(\frac{f(\theta_0)}{\epsilon}\right)\right)$ iterations~\citep{karimi2016linear}. Ignoring the constant first term which is independent of $\epsilon$ and assuming $\lambda_0 \approx L$, we can see that the result in~\cref{thm:main-pl} is better than the standard rate when $\nicefrac{\lambda_0}{\lambda_1}$ is smaller than $f(\theta_0)$. It is important to note that since $\GDLS$ can automatically (without any change in the algorithm) exploit the uniform smoothness and obtain the standard result as well, the number of iterations required for $\GDLS$ is $\min \left\{O\left(\frac{L}{\mu} \, \ln\left(\frac{f(\theta_0)}{\epsilon}\right)\right), O\left(\frac{\lambda_0}{\mu} \, \ln\left(\frac{\lambda_0}{\lambda_1 \, \epsilon}\right)\right) \right\}$, meaning that $\GDLS$ converges at least as fast as $\GDC$. Since the GLM objective is also uniformly smooth~\citep[Lemma 10]{mei2021leveraging}, in~\cref{fig:glm-synthetic}, we empirically compare to $\GDC$, and verify the faster convergence of $\GDLS$.  
\vspace{-3ex}
\begin{figure}[!ht]
\begin{minipage}[c]{0.5\linewidth}
        \includegraphics[width=1.05\linewidth]{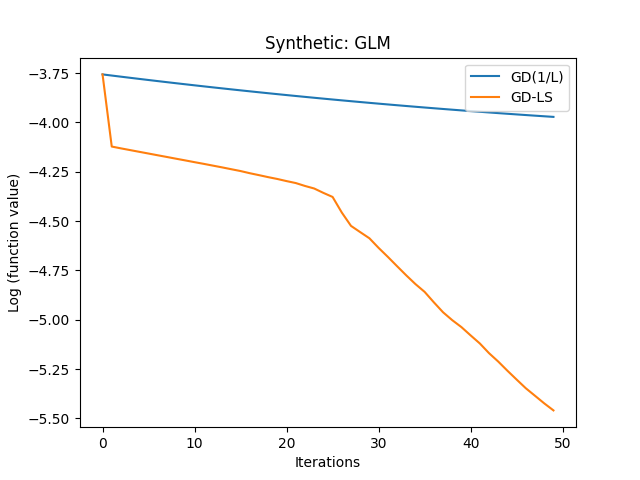}
 \end{minipage}\hfill
 \begin{minipage}[c]{0.5\linewidth}
    \caption{Comparing $\GDLS$ with $c = 1/2$, $\eta_{\max} = 10^4$ and $\GDC$ for GLM on a synthetic  dataset with $n = 10^4$, $d = 200$, $\norm{\theta^*} = 1$.}
  \label{fig:glm-synthetic}
  \end{minipage}
\end{figure}
\vspace{-2ex}

Comparing~\cref{thm:main-pl} to the existing results for the GLM objective, we note that~\citet[Lemma 3.1]{hazan2015beyond} show that the objective is locally quasi-convex, and use this property to derive a slower $O(1/\epsilon^2)$ convergence rate for normalized GD with a decreasing step-size~\citep[Theorem 4.1]{hazan2015beyond}. On the other hand,~\citet{mei2021leveraging} propose a novel variant of normalized GD and prove that it converges linearly, with a better constant dependence compared to GD. However, their method requires the knowledge of $\mu$, making it difficult to implement. On the other hand, $\GDLS$ does not require the knowledge of any problem-dependent constants, and achieves similar or better theoretical results compared to these specialized methods. 

In the next section, we show the benefits of using a line-search in the stochastic setting. 
\section{SGD with Stochastic Line-search}
\label{sec:convex-sgd}
In this section, we analyze the convergence of stochastic gradient descent (SGD)~\citep{robbins1951stochastic} with a stochastic variant of the Armijo line-search (referred to as $\SGDSLS$) proposed in~\citet{vaswani2019painless}. 

We focus on the convex, finite-sum setting, and consider minimizing $f(\theta) = \frac{1}{n} \sum_{i = 1}^{n} f_i(\theta)$ where each $f_i$ is convex and satisfies~\cref{assn:non-negative,assn:nus,assn:positive-reverse-PL}. Logistic regression and multi-class classification using the cross-entropy loss are examples of such an objective. For ease of exposition, we assume that each $f_i$ is $(L_0, L_1)$ non-uniform smooth, and note that it is straightforward to analyze the case where the $f_i$'s have different smoothness constants. 

At iteration $t \in [T]$, SGD randomly samples a function $\ft$ from the $n$ functions in the finite-sum, computes its gradient and updates the parameters. Specifically, 
\begin{align}
\thtt = \tht - \etat \nabla \ft(\tht) \,,
\label{eq:sgd}    
\end{align}
where $\nabla \ft(\tht)$ is the gradient of the loss function chosen at iteration $t$. Each stochastic gradient $\nabla \ft(\tht)$ is unbiased, implying that $\E_t \left[ \nabla f_t(\theta) \right] = \nabla f(\theta)$. In order to estimate $\etat$, $\SGDSLS$ uses the stochastic analog of the Armijo condition in~\cref{eq:armijo}. In particular, starting from $\eta_{\max}$, SLS uses a backtracking procedure and returns the largest step-size $\etat$ that satisfies: $\etat \leq \eta_{\max}$ and, 
\begin{align}
\ft(\tht - \etat \nabla \ft(\tht)) & \leq \ft(\tht) - c \, \etat \normsq{\nabla \ft(\tht)}.
\label{eq:armijo-ls}    
\end{align}
Note that the above stochastic Armijo condition only involves the sampled function and its gradient. 

In order to analyze the convergence of $\SGDSLS$, we define $f_i^* := \min f_i(\theta)$ as the minimum of function $i$ in the finite-sum and $\chi^2(\theta^*) := \E_i [f_i(\theta^*) - f_i^*]$ as the measure of the stochasticity at the optimum~\citep{loizou2021stochastic}. In particular, if $\chi^2 = 0$, then each $f_i$ is minimized at $\theta^*$ implying that $\nabla f_i(\theta^*) = 0$. This special case is referred to as the \textit{interpolation} setting~\citep{vaswani2019fast,ma2018power,schmidt2013fast} and 
is useful in practical machine learning; for example, it is approximately satisfied by over-parameterized neural networks~\citep{zhang2016understanding} or non-parametric regression~\citep{liang2018just,belkin2019does}. Furthermore, logistic regression on linearly separable data is an example of a smooth convex loss that satisfies the interpolation condition and is the main motivation for the subsequent analysis. 

When minimizing uniformly-smooth convex functions in the interpolation setting,~\citet{vaswani2019fast} proved that $\SGDSLS$ converges to the optimum at an $O(1/\epsilon)$ rate, matching GD and is faster than the standard $O(1/\epsilon^2)$ rate for SGD~\citep{bottou2018optimization}. Motivated by this and the results in~\cref{sec:convex-gd}, we analyze $\SGDSLS$ for logistic regression which satisfies~\cref{assn:non-negative,assn:nus,assn:positive-reverse-PL}. We first note the results in~\citet{vaswani2019fast} (and subsequent papers analyzing the interpolation setting) do not directly apply to logistic regression. In particular, these results have a dependence on $\norm{\theta^*}$ which is infinite in the logistic regression example~\citep{orabona2024blog}. Consequently, we first prove that $\SGDSLS$ can exploit the uniform smoothness in logistic regression and prove a $O(1/\epsilon)$ rate in~\cref{app:convex-sgd}. 

\begin{restatable}{theorem}{logisticsgdslow}
For logistic regression on linearly separable data with margin $\gamma$, if, for all $i$, $\norm{x_i} \leq 1$, for a fixed $\epsilon \in \left(0, 1 \right)$, if $\tau_\epsilon := \inf \{t : f(\tht) \leq \epsilon \}$ is the hitting time, then $\SGDSLS$ with $\eta_{\max} = \frac{1}{\epsilon}$ and $c = \frac{2}{3}$ ensures that,
$
\E[\tau_\epsilon] = O \left(\frac{1}{\epsilon \, \gamma^2} \, \left[\ln\left(\frac{1}{\epsilon^2}\right)\right]^2 \right).
$
\label{thm:stochastic-logistic-slow}
\end{restatable}

Next, we exploit the non-uniform smoothness of logistic regression and analyze the convergence of $\SGDSLS$. Note that since SLS involves an Armijo line-search for one (randomly chosen) function in each iteration, we can follow the same argument as in~\cref{lemma:armijo-lb} and show that the step-size in each iteration is lower-bounded i.e. $\etat \geq \min\left\{\eta_{\max}, \frac{1}{\lz\, + \lo \, \ft(\tht)} \right\}$. Given this result, we prove the following theorem in~\cref{app:convex-sgd}. 

\begin{restatable}{theorem}{logisticsgdfast}
For logistic regression on linearly separable data with margin $\gamma$, if, for all $i$, $\norm{x_i} \leq 1$, for a fixed $\epsilon \in \left(0, C' \right)$ with $C' = O(1)$, if $\tau_\epsilon := \inf \{t : f(\tht) \leq \epsilon \}$ is the hitting time, then $\SGDSLS$ with $\eta_{\max} = \frac{1}{\epsilon}$ and $c = \frac{2}{3}$ ensures that,
$
\E [\tau] = O\left(\frac{n}{\gamma^2}\, \left[\ln(\frac{n}{\epsilon^2})\right]^2\right).
$
\label{thm:stochastic-logistic-fast}
\end{restatable}

Note that the above $O(n \, (\ln(n/\epsilon))^2)$ convergence rate is slower than that of $\GDLS$. However, since $\SGDSLS$ has an $O(1)$ cost per iteration (as compared to the $O(n)$ cost for $\GDLS$), both algorithms have the same gradient complexity. In order to intuitively understand why it is difficult to prove a faster (independent of $n$) rate for $\SGDSLS$, first note that the stochastic setting does not allow using arbitrarily large values of $\eta_{\max}$. Second, note that individual losses $f_i$ in the finite sum can become much smaller than $f$. Specifically, consider iteration $t$ of $\SGDSLS$ and consider a point $i$ such that $f_i(\tht) \leq \delta << \epsilon$ where $\epsilon$ is the desired sub-optimality. If this point is sampled at iteration $t$, the corresponding size of the update is $\norm{\thtt - \tht} \approx \delta \, \eta_{\max} = \nicefrac{\delta}{\epsilon}$ which can be small. On the other hand, for $i$ such that $f_i(\tht) \geq \delta$, the size of the update is $O(1)$. Hence, if at iteration $t$, there are $n-1$ ``small'' losses and the algorithm samples a point uniformly at random, it has an $O(1)$ update with probability $1/n$. Hence, in expectation, the rate depends on $n$ in the worst-case.  

However, note that the same $\SGDSLS$ algorithm can achieve the rates in~\cref{thm:stochastic-logistic-slow,thm:stochastic-logistic-fast} and hence, the algorithm has an $O\left(\min\{n, \nicefrac{1}{\epsilon} \} \, (\ln(n/\epsilon))^2\right)$ convergence rate. We conjecture that by formalizing the above intuition, we can prove a matching lower-bound and leave this to future work. Note that the resulting gradient complexity for $\SGDSLS$ is smaller than that of $\GDLS$. Interestingly, this is also smaller than the $O\left( \left(n + \nicefrac{1}{\epsilon}\right) \, \ln(1/\epsilon)\right)$ gradient complexity for variance reduced methods such as SARAH~\citep{nguyen2017sarah} on general uniformly-smooth convex losses. 

Given the above intuition, a natural question is whether $\SGDSLS$ can attain faster rates if the algorithm can ensure that it samples points that have a relatively large function values. We formalize this in the following theorem proved in~\cref{app:convex-sgd}. 
\begin{restatable}{theorem}{logisticsgdfaster}
For logistic regression on linearly separable data with margin $\gamma$, if, for all $i$, $\norm{x_i} \leq 1$, for a fixed $\epsilon \in \left(0,\frac{\min\left\{\frac{1}{2}, C' \right\}}{8}\right)$ where $C' := \frac{1}{\lambda_1} = \frac{1}{648}$, $\SGDSLS$ with $\eta_{\max} = \frac{1}{\epsilon}$ and $c = \frac{2}{3}$ and guaranteeing that for all $t \in [T]$, $\Pr\left[\ft(\tht) \geq \frac{\epsilon}{2}\right] = 1$ requires $T = O\left(\frac{1}{\gamma^2} \, \left[\ln\left(\frac{1}{\epsilon^2}\right)\right]^2 \right)$ iterations to ensure that $\E[f(\theta_T)] \leq \frac{3 \epsilon}{2}$. 
\label{thm:stochastic-logistic-faster}
\end{restatable}
Hence, if the algorithm can ensure that the sampled function $\ft$ at iteration $t$ has a loss greater than $\epsilon/2$, then, $\SGDSLS$ can indeed achieve a faster $O((\ln(1/\epsilon))^2$ convergence rate which is independent of $n$. Skipping updates when the sampled point has a small loss, or using a projection step to ensure that no loss becomes smaller than $\epsilon/2$ are two potential ways to ensure that $\Pr\left[\ft(\tht) \geq \frac{\epsilon}{2}\right] = 1$. Implementing such approaches in a computationally efficient manner and analyzing the resulting algorithm is challenging, and we leave this interesting direction to future work. Finally, we note that by combining the proof techniques in~\cref{app:polyak-convex} and~\cref{app:convex-sgd}, the above convergence guarantees also hold for the stochastic Polyak step-size~\citep{loizou2021stochastic}.

\vspace{-2ex}
\section{Conclusion}
\label{sec:conclusion}
We analyzed $\GDLS$ for a class of functions satisfying non-uniform smoothness. For a range of practical convex and non-convex functions, we proved that $\Alg$ can enable GD to adapt to the objective's properties and result in faster convergence. In particular, we showed that, for specific problems in machine learning, $\GDLS$ can (i) either match or provably improve upon the sublinear rate of $\GDC$, (ii) do so without relying on the knowledge of problem-dependent constants and (iii) match the fast convergence of algorithms tailored for these problems. Our results thus show the universal effectiveness of $\GDLS$. 

We believe that analyzing $\GDLS$ for a broader class of non-convex functions, and characterizing the advantage of using $\Alg$ for other algorithms (such as Nesterov accelerated gradient) are important future directions. We also plan to investigate whether other adaptive step-size schemes such as AdaGrad~\citep{duchi2011adaptive} or Adam~\citep{kingma2014adam} can also provably adapt to the non-uniform smoothness and result in faster convergence rates. 


\clearpage
\section*{Impact Statement}
This paper presents work whose goal is to advance the field of Machine Learning. There are many potential societal consequences of our work, none which we feel must be
specifically highlighted here.

\section*{Acknowledgments}
We thank Anant Raj for pointing out and helping to resolve a mistake in a previous version of the paper. We thank Damien Scieur, Mark Schmidt, Curtis Fox, Michael Lu and Siyi Meng for helpful discussions and feedback, and Anh Dang for help with the initial experiments. This research was partially supported by the Natural Sciences and Engineering Research Council of Canada (NSERC) Discovery Grant RGPIN-2022-04816.

\renewcommand{\bibname}{References}   
\bibliographystyle{icml2025}
\bibliography{ref}

\newpage
\appendix
\onecolumn

\appendixTitle

\section*{Organization of the Appendix}
\begin{itemize}

      \item[\ref{app:problem}] \hyperref[app:problem]{Proofs for~\cref{sec:problem}}

      \item[\ref{app:gd-armijo}] \hyperref[app:gd-armijo]{Proofs for~\cref{sec:gd-armijo}}

      \item[\ref{app:convex-gd}] \hyperref[app:convex-gd]{Proofs for~\cref{sec:convex-gd}}

      \item[\ref{app:nonconvex-gd}] \hyperref[app:nonconvex-gd]{Proofs for~\cref{sec:nonconvex-gd}}

      \item[\ref{app:convex-sgd}] \hyperref[app:convex-sgd]{Proofs for~\cref{sec:convex-sgd}}

\end{itemize}

\section{Proofs for~\cref{sec:problem}}
\label{app:problem}

\begin{assumption}
If $f$ is twice differentiable and $(L_c, L_g)$-non-uniform smooth as defined in~\citet{zhang2019gradient}, then, for constants $L_c$ , $L_g > 0$, 
\begin{align}
\norm{\nabla^2 f(\theta)} \leq L_c +  L_g \, \norm{\nabla f(\theta)}.
\end{align}
\label{assn:nonuniform-gradient-general}
\end{assumption}

\nusconditions*

\begin{proof}
Using~\citet[Lemma 3.5]{li2023convex}, we know that $(L_c, L_g)$ non-uniform smoothness also implies that, 
\begin{align*}
\normsq{\nabla f(\theta)} & \leq 2 \left[L_c + 2 L_g \, \norm{\nabla f(\theta)} \right] \, (f(\theta) - f^*) \\
& \leq 2 \left[L_c + 2 L_g \, \norm{\nabla f(\theta)} \right] \, f(\theta) \tag{Since $f$ is non-negative} \\
\implies \norm{\nabla f(\theta)} & \leq 4 L_g \, f(\theta) + \sqrt{2 L_c} \, \sqrt{f(\theta)} \tag{By completing the square w.r.t $\norm{\nabla f(\theta)}$}
\end{align*}
Consider two cases depending on whether $f(\theta)$ is larger than 1. 

\textbf{Case 1}: If $f(\theta) \leq 1 \implies \sqrt{f(\theta)} \leq 1$. The above inequality can be simplified as: $\norm{\nabla f(\theta)} \leq 4 L_g \, f(\theta) + \sqrt{2 L_c}$. 

\textbf{Case 2}: If $f(\theta) > 1 \implies \sqrt{f(\theta)} \leq f(\theta)$. The above inequality can be simplified as: $\norm{\nabla f(\theta)} \leq 4 L_g \, f(\theta) + \sqrt{2 L_c} \, f(\theta)$. 

Combining both cases, we have that, 
\begin{align}
\norm{\nabla f(\theta)} &\leq \left(8 L_g + \sqrt{2 L_c} \right) f(\theta) + \sqrt{2 L_c} 
\label{eq:inter-a3}
\end{align}
Hence,~\cref{assn:positive-reverse-PL} is satisfied with $\nu = 8 L_g + \sqrt{2 L_c}$ and $\omega = \sqrt{2 L_c}$. Using this result in the definition of non-uniform smoothness in~\cref{assn:nonuniform-gradient-general}, 
\begin{align}
\norm{\nabla^2 f(\theta)} & \leq L_c + L_g \, \left[\left(8 L_g + \sqrt{2 L_c} \right) f(\theta) + \sqrt{2 L_c}\right] = \left( L_c + L_g \, \sqrt{2 L_c} \right) + L_g \left(8 L_g + \sqrt{2 L_c} \right) f(\theta)
\label{eq:inter-a2b}
\end{align}
Hence,~\cref{assn:nus}\,(b) is satisfied with $L_0 = L_c + L_g \, \sqrt{2 L_c}$ and $L_1 = L_g \left(8 L_g + \sqrt{2 L_c} \right)$.

Using~\citet[Lemma A.3]{zhang2020improved}, if $f$ satisfies~\cref{assn:nonuniform-gradient-general}, then, for all $x, y$ such that $\norm{x - y} \leq \frac{q}{L_g}$ where $q > 0$ is a constant,
\begin{align*}
f(y) \leq f(x) + \langle \nabla f(x), y - x \rangle + \frac{A L_c + B \, L_g \, \norm{\nabla f(x)}}{2} \normsq{y - x} \,,
\label{eq:descent-general-zhang}    
\end{align*}    
where $A := 1 + e^q - \frac{e^q - 1}{q}$ and $B := \frac{e^q - 1}{q}$. Combining the above inequality with~\cref{eq:inter-a3}, 
\begin{align*}
f(y) & \leq f(x) + \langle \nabla f(x), y - x \rangle + \frac{A L_c + B \, L_g \, [\left(8 L_g + \sqrt{2 L_c} \right) f(x) + \sqrt{2 L_c} ]}{2} \normsq{y - x} \\
& = f(x) + \langle \nabla f(x), y - x \rangle + \frac{A \left(L_c + \frac{B}{A} \, L_g \, \sqrt{2 L_c} \right) + B \, L_g \, \left(8 L_g + \sqrt{2 L_c} \right) \, f(x)}{2} \normsq{y - x} \\
& \leq f(x) + \langle \nabla f(x), y - x \rangle + \frac{A \left(L_c + L_g \, \sqrt{2 L_c} \right) + B \, L_g \, \left(8 L_g + \sqrt{2 L_c}\right) \, f(x)}{2} \normsq{y - x} \tag{Since $B \leq A$} 
\end{align*}
Hence,~\cref{assn:nus}\,(a) is satisfied with $L_0 = L_c + L_g \, \sqrt{2 L_c}$ and $L_1 = L_g \left(8 L_g + \sqrt{2 L_c} \right)$.
\end{proof}
In order to prove that commonly used functions in machine learning satisfy the assumptions in~\cref{sec:problem}, we will require the following lemma. 
\begin{lemma}
For a finite-sum objective, $f(\theta) = \frac{1}{n} \sum_{i = 1}^{n} f_i(\theta)$, if, for all $i$, $f_i$ is non-negative and  satisfies~\cref{assn:nonuniform-gradient-general} with constants $L_c$ and $L_g$, then, $f$ satisfies~\cref{assn:non-negative,assn:nus,assn:positive-reverse-PL} with constants $L_0 = L_c + L_g \, \sqrt{2 L_c}$, $L_1 = L_g \, \left(8 L_g + \sqrt{2 L_c} \right)$, $\nu = 8 L_g + \sqrt{2 L_c}$ and $\omega = \sqrt{2 L_c}$.
\label{lemma:finite-sum-nus}
\end{lemma}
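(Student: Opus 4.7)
The plan is to reduce the lemma to the single-function case already handled by \cref{prop:nus-conditions}, and then lift each conclusion from the components $f_i$ to the average $f = \frac{1}{n}\sum_i f_i$ by pointwise averaging. Concretely, since each $f_i$ is non-negative, twice-differentiable, and $(L_c, L_g)$-non-uniform smooth in the sense of \cref{assn:nonuniform-gradient-general}, \cref{prop:nus-conditions} delivers, for every $i$, the three statements we need with the target constants $L_0 = L_c + L_g\sqrt{2L_c}$, $L_1 = L_g(8L_g + \sqrt{2L_c})$, $\nu = 8L_g + \sqrt{2L_c}$, $\omega = \sqrt{2L_c}$. The crucial observation is that these constants are identical across $i$, which is what will let us take averages cleanly.

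From there I would verify the three conclusions in turn. \cref{assn:non-negative} is immediate since the average of non-negative quantities is non-negative. For \cref{assn:nus}\,(b) and \cref{assn:positive-reverse-PL}, I use $\nabla f = \frac{1}{n}\sum_i \nabla f_i$ and $\nabla^2 f = \frac{1}{n}\sum_i \nabla^2 f_i$ together with the triangle inequality for the operator/Euclidean norms, obtaining
\begin{align*}
\|\nabla^2 f(\theta)\| &\leq \tfrac{1}{n}\sum_i \|\nabla^2 f_i(\theta)\| \leq L_0 + L_1\,\tfrac{1}{n}\sum_i f_i(\theta) = L_0 + L_1\,f(\theta), \\
\|\nabla f(\theta)\| &\leq \tfrac{1}{n}\sum_i \|\nabla f_i(\theta)\| \leq \nu\,\tfrac{1}{n}\sum_i f_i(\theta) + \omega = \nu\,f(\theta) + \omega.
\end{align*}

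For the descent inequality \cref{assn:nus}\,(a), since all $f_i$ share the same $L_1$, the admissible radius $\|x-y\| \leq q/L_1$ is identical for every $i$, so inside that ball every per-component descent inequality
\[
f_i(y) \leq f_i(x) + \langle \nabla f_i(x), y - x\rangle + \tfrac{A L_0 + B L_1 f_i(x)}{2}\|y - x\|^2
\]
holds simultaneously. Averaging over $i$ and using linearity (both of the inner product and of the $f_i(x)$-dependent term in the quadratic) recovers exactly the form of \cref{eq:nus-condition} for $f$ with the claimed constants.

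I do not foresee a real obstacle: the whole argument is an exercise in averaging that succeeds because (i) \cref{prop:nus-conditions} produces \emph{uniform} constants across the components, (ii) all three target inequalities are affine in the underlying function value, so they survive convex combinations, and (iii) the radius restriction in \cref{assn:nus}\,(a) is expressed through $L_1$ alone, which is common to all $f_i$. The only subtle point worth flagging is checking that the same $q$ (and hence the same $A,B$) can be used uniformly in step (iii); this is automatic because \cref{prop:nus-conditions} fixes $q$ as a free parameter that the user selects, not as a quantity derived from the individual $f_i$.
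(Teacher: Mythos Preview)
Your proposal is correct and follows essentially the same argument as the paper: apply \cref{prop:nus-conditions} to each $f_i$ to obtain uniform constants, then average the resulting inequalities using the triangle inequality and linearity to lift \cref{assn:non-negative,assn:nus,assn:positive-reverse-PL} to $f$. The paper merely reverses the order of presentation (first proving the averaging step generically, then invoking \cref{prop:nus-conditions}), but the substance is identical.
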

\begin{proof}
If $f_i$ is non-negative for all $i$, then, $f$ is non-negative, thus satisfying~\cref{assn:non-negative}. We will first prove that if $f_i$ is non-negative and satisfies~\cref{assn:nus,assn:positive-reverse-PL}, then $f$ also satisfies these assumptions with the same constants. 

If $f_i$ satisfies~\cref{assn:nus}\,(a), then, 
\begin{align*}
f_i(y) & \leq f_i(x) + \langle \nabla f_i(x), y - x \rangle + \frac{A L_0 + B \, L_1 \, f_i(x)}{2} \normsq{y - x}
\end{align*}
Summing the LHS and RHS for $i = 1$ to $n$ and dividing by $n$ completes the proof that $f$ satisfies~\cref{assn:nus}\,(a). If $f_i$ satisfies~\cref{assn:nus}\,(b), then, 
\begin{align*}
\norm{\nabla^2 f(\theta)} & = \norm{\frac{1}{n} \, \sum_i \nabla^2 f_i(\theta)} \leq \frac{1}{n} \, \sum_i \norm{\nabla^2 f_i(\theta)} \leq L_0 + \frac{L_1}{n} \, \sum_i f_i(\theta) = L_0 + L_1 \, f(\theta)
\end{align*} 
Hence, $f$ satisfies~\cref{assn:nus} with the same constants $L_0$ and $L_1$. If $f_i$ satisfies~\cref{assn:positive-reverse-PL}, then, 
\begin{align*}
\norm{\nabla f(\theta)} &= \norm{\frac{1}{n} \, \sum_{i} \nabla f_i(\theta)} \leq \frac{1}{n}  \, \sum_i \norm{\nabla f_i(\theta)} \leq \frac{\nu}{n} \, \sum_i f_i(\theta) + \omega = \nu \, f(\theta) + \omega \,.
\end{align*} 
Hence, $f$ satisfies~\cref{assn:positive-reverse-PL} with the same constants $\nu$ and $\omega$. From~\cref{prop:nus-conditions}, if $f_i$ satisfies~\cref{assn:nonuniform-gradient-general} with constants $L_c, L_g$, then, $f_i$ and consequently $f$ satisfies~\cref{assn:nus,assn:positive-reverse-PL} with constants $L_0 = L_c + L_g \, \sqrt{2 L_c}$, $L_1 = L_g \, \left(8 L_g + \sqrt{2 L_c} \right)$, $\nu = 8 L_g + \sqrt{2 L_c}$ and $\omega = \sqrt{2 L_c}$.
\end{proof}
\clearpage
\subsection{Examples satisfying~\cref{assn:non-negative,assn:nus,assn:positive-reverse-PL}}
We will use the above lemma to prove that linear logistic regression, exponential loss with a linear model, linear multi-class classification using the cross-entropy loss, generalized linear models with a logistic link function and the softmax policy gradient objective for multi-armed bandits and tabular MDPs satisfy~\cref{assn:non-negative,assn:nus,assn:positive-reverse-PL}. 

\logistic*
\begin{proof}    
Clearly, $f_i(\theta) \geq 0$ for all $\theta$. Calculating the gradient and hessian for $f_i(\theta) := \ln(1 + \exp(-y_i \langle x_i, \theta \rangle)$, 
\begin{align*}
\nabla f_i(\theta) & = \frac{-\exp(-y_i \langle x_i, \theta \rangle)}{1 + \exp(-y_i \langle\, x_i, \theta \rangle)} y_i \, x_i \quad \text{;} \quad \nabla^2 f_i(\theta) = \frac{1}{1 + \exp(-y_i \, \langle x_i, \theta \rangle)} \, \frac{\exp(-y_i \, \langle x_i, \theta \rangle)}{1 + \exp(-y_i \, \langle x_i, \theta \rangle)} \, y_i^2 \, x_i \, x_i^T \\
\end{align*}
Bounding the Hessian,
\begin{align*}
\nabla^2 f_i(\theta) & \preceq \frac{\exp(-y_i \, \langle x_i, \theta \rangle)}{1 + \exp(-y_i \, \langle x_i, \theta \rangle)} \, y_i^2 x_i x_i^T = \frac{\exp(-y_i \, \langle x_i, \theta \rangle)}{1 + \exp(-y_i \, \langle x_i, \theta \rangle)} \, x_i x_i^T \tag{For all $x$, $\frac{1}{1 + e^x} \leq 1$ and $y_i^2 = 1$} \\
\implies \norm{\nabla^2 f_i(\theta)} & \leq \frac{\exp(-y_i \, \langle x_i, \theta \rangle)}{1 + \exp(-y_i \, \langle x_i, \theta \rangle)} \, \normsq{x_i} = \norm{x_i} \, \norm{\nabla f_i(\theta)}
\end{align*} 
Hence, for all $i$, $f_i$ satisfies~\cref{assn:nonuniform-gradient-general} with $L_c = 0$ and $L_g = \max_{i} \norm{x_i}$. Using~\cref{lemma:finite-sum-nus}, we conclude that $f(\theta)$ satisfies~\cref{assn:nus} with $L_0 = 0$ and $L_1 = 8 \, \max_{i} \normsq{x_i}$, and~\cref{assn:positive-reverse-PL} with $\nu = 8 \, \max_{i} \norm{x_i}$ and $\omega = 0$.  
\end{proof}

\glm*
\begin{proof}
Clearly, $f_i(\theta) \geq 0$ and hence $f(\theta) \geq 0$ for all $\theta$. $f(\theta)$ is a finite-sum objective. Calculating the gradient and hessian for $f_i(\theta) = \frac{1}{2} \, \left(\pi_i(\theta) - y_i \right)^2$, 
\begin{align*}
\nabla f_i(\theta) &= \left(\pi_i(\theta) - y_i \right) \, \frac{1}{1 + \exp(-\langle x_i, \theta \rangle)} \, \frac{\exp(-\langle x_i, \theta \rangle)}{1 + \exp(-\langle x_i, \theta \rangle)} \, x_i \\
\nabla^2 f_i(\theta) &= [1 - 2 \, \pi_i(\theta)] \, \pi_i(\theta) \, [1 - \pi_i(\theta)] \, [\pi_i(\theta) - y_i] \, x_i \, x_i^T + [\pi_i(\theta)]^2 \, [1 - \pi_i(\theta)]^2 \, x_i \, x_i^T \\
\implies \norm{\nabla^2 f_i(\theta)} &= \left[\underbrace{\vert 1 - 2 \, \pi_i(\theta) \vert}_{\leq 1} \, \underbrace{\pi_i(\theta) \, [1 - \pi_i(\theta)] \, \vert \pi_i(\theta) - y_i \vert \norm{x_i}}_{= \norm{\nabla f_i(\theta)}} + \underbrace{[\pi_i(\theta)]^2 \, [1 - \pi_i(\theta)]^2}_{\leq \frac{1}{16}} \norm{x_i} \right] \, \norm{x_i} \tag{Triangle Inequality} \\
\implies \norm{\nabla^2 f_i(\theta)} & \leq \norm{x_i} \, \norm{\nabla f_i(\theta)} + \frac{1}{16} \normsq{x_i} 
\end{align*}
Hence, for all $i$, $f_i(\theta)$ satisfies~\cref{assn:nonuniform-gradient-general} with $L_c = \frac{1}{16} \, \max_{i \in [n]} \normsq{x_i}$ and $L_g = \max_{i \in [n]} \norm{x_i}$. 

Using~\cref{lemma:finite-sum-nus}, we conclude that $f(\theta)$ satisfies~\cref{assn:nus} with $L_0 = \frac{8+\sqrt{2}}{16\sqrt{2}} \, \max_{i \in [n]} \normsq{x_i}$ and $L_1 = \frac{16\sqrt{2}+1}{\sqrt{8}} \, \max_{i \in [n]} \normsq{x_i}$, and~\cref{assn:positive-reverse-PL} with $\nu = \frac{16\sqrt{2}+1}{\sqrt{8}} \max_{i} \norm{x_i}$ and $\omega = \frac{1}{\sqrt{8}}\max_{i} \norm{x_i}$.  
\end{proof}
\begin{restatable}{proposition}{multiclass}
Consider $n$ points where $x_i \in \R^d$ are the features and $y_i \in \{0,1\}^C$ are the corresponding one-hot label vectors for $C$ classes. Multi-class classification with the cross-entropy objective is given as:
\begin{align*}
f(\theta) & = \frac{1}{n} \sum_{m = 1}^{n} \text{KL}(y^{m} || \pi^{m}_\theta) \,, 
\text{ where $\forall m \in [n]$, $\pi^{m}_\theta \in \Delta_{C}$ s.t. $\forall i \in [C]$, } \pi^{m}_\theta(i) = \frac{\exp(\langle x_m, \theta_i \rangle)}{\sum_{k = 1}^{C} \exp(\langle x_m, \theta_k \rangle)} \,,
\end{align*}
where $\theta_i \in \R^d$ for $i \in [C]$ and $\theta = [\theta_1, \theta_2, \ldots, \theta_C]$. Multi-class logistic regression satisfies~\cref{assn:nus} with $L_0 = 0$ and $L_1 = 32 \, \max_{m \in [n]} \norm{x}^2_1$, and~\cref{assn:positive-reverse-PL} with $\nu = 16 \max_{i} \norm{x_i}$ and $\omega = 0$. 
\label{prop:multiclass}    
\end{restatable}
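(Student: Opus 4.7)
The plan is to mirror the proofs of Propositions~\ref{prop:logistic} and~\ref{prop:glm}: establish the non-uniform smoothness bounds at the per-sample level and then lift them to the finite sum via \cref{lemma:finite-sum-nus}. Since each label $y^m$ is one-hot, supported at some index $j_m \in [C]$, the $m$-th term of the KL objective reduces to the cross-entropy $f_m(\theta) := -\log \pi^m_\theta(j_m)$, which is non-negative, giving \cref{assn:non-negative} for free. Writing $p := \pi^m_\theta \in \Delta_C$ and stacking $\theta = [\theta_1; \dots; \theta_C]$, standard softmax calculus yields $\nabla_{\theta_k} f_m(\theta) = (p_k - y^m_k)\, x_m$ and the block Hessian $\nabla^2_{\theta_k,\theta_l} f_m(\theta) = (p_k \delta_{kl} - p_k p_l)\, x_m x_m^T$. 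This factors as the Kronecker product $\nabla^2 f_m(\theta) = H \otimes (x_m x_m^T)$ with $H := \mathrm{diag}(p) - p p^T$ the covariance matrix of the categorical distribution $p$, so that $\norm{\nabla f_m(\theta)} = \norm{x_m}\, \norm{p - y^m}$ and $\norm{\nabla^2 f_m(\theta)} = \normsq{x_m}\, \norm{H}$. The entire claim thus reduces to a sharp operator-norm bound on the multinomial covariance.

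The main obstacle is exactly this bound: I would show $\norm{H} \le 2\norm{p - y^m}$. For any unit vector $v$, write $a := v_{j_m}$ and let $W = v_K$ with $K$ drawn from the conditional distribution $P'(k) := p_k/(1 - p_{j_m})$ on $k \ne j_m$. A direct expansion of $v^T H v = \sum_k p_k v_k^2 - (\sum_k p_k v_k)^2$ yields the identity $v^T H v = p_{j_m}(1 - p_{j_m})(a - \E[W])^2 + (1 - p_{j_m})\,\Var[W]$. Upper-bounding $p_{j_m} \le 1$, using $(a - \E[W])^2 \le 2a^2 + 2\E[W]^2$ together with the identity $\E[W]^2 + \Var[W] = \E[W^2]$, and noting that $\E[W^2]$ is a convex combination of $v_k^2$ for $k \ne j_m$ and hence is bounded by $\sum_{k \ne j_m} v_k^2 = 1 - a^2$, the two pieces collapse to $(1-p_{j_m})\bigl[2a^2 + 2(1-a^2)\bigr] = 2(1-p_{j_m})$. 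Since $\norm{p - y^m} \ge |1 - p_{j_m}|$, the desired bound follows.

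Combining the above gives $\norm{\nabla^2 f_m(\theta)} \le 2\norm{x_m}\, \norm{\nabla f_m(\theta)}$, so each $f_m$ satisfies \cref{assn:nonuniform-gradient-general} with $L_c = 0$ and $L_g = 2\norm{x_m}$. Applying \cref{lemma:finite-sum-nus} to the finite-sum $f$ with $L_c = 0$ and $L_g = 2 \max_m \norm{x_m}$ then produces $L_0 = L_c + L_g\sqrt{2L_c} = 0$, $L_1 = L_g(8L_g + \sqrt{2L_c}) = 32\max_m \normsq{x_m}$, $\omega = \sqrt{2L_c} = 0$, and $\nu = 8L_g + \sqrt{2L_c} = 16\max_m \norm{x_m}$, exactly matching the statement. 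The genuinely technical step is the categorical-covariance bound in the second paragraph; everything else is softmax-derivative bookkeeping and a direct invocation of the already-established finite-sum reduction lemma.
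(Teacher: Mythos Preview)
Your proposal is correct and follows the same high-level strategy as the paper: verify $\norm{\nabla^2 f_m} \le L_g \norm{\nabla f_m}$ for each per-sample loss, then invoke \cref{lemma:finite-sum-nus}. The difference is entirely in the bound on the softmax covariance $H=\mathrm{diag}(p)-pp^T$. The paper uses the Gershgorin circle theorem to get $\norm{H}\le 2\max_i p_i(1-p_i)$, then compares this to a single coordinate $|p_{j^*}-y_{j^*}|$ of the residual and passes to the $\ell_1$-norm of the gradient, ending with $\norm{\nabla^2 \ell}\le 2\norm{x}_1\norm{\nabla \ell}_1$; this is why the stated $L_1$ carries an $\ell_1$-norm. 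Your variance-decomposition argument stays entirely in $\ell_2$: the identity $v^THv=q(1-q)(a-\E W)^2+(1-q)\Var W$ directly yields $\norm{H}\le 2(1-p_{j_m})\le 2\norm{p-y^m}_2$, hence $\norm{\nabla^2 f_m}\le 2\norm{x_m}_2\norm{\nabla f_m}_2$. This is cleaner (no norm switching) and in fact gives the tighter constants $L_1=32\max_m\norm{x_m}_2^2$ and $\nu=16\max_m\norm{x_m}_2$, which imply the statement since $\norm{x}_2\le\norm{x}_1$. Both routes are short; yours avoids the slight awkwardness of mixing $\ell_1$ and $\ell_2$ norms when plugging into \cref{lemma:finite-sum-nus}.
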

\begin{proof}
Let us consider a single input-output pair $(x,y)$ and calculate the gradient for 
a single function in the finite-sum. Define $\ell(\theta) := \text{KL}(y || \pi_\theta)$ where $y$ is a $C$-dimensional one-hot vector, $x \in \R^d$ and $\pi_\theta \in \Delta_{C}$ s.t. $\pi_\theta(i) = \frac{\exp(\langle x, \theta_i \rangle)}{\sum_{k = 1}^{C} \exp(\langle x, \theta_k \rangle)}$. Clearly, $\ell(\theta) \geq 0$. Calculating its gradient and Hessian,
\begin{align*}
\frac{\partial \ell(\theta)}{\partial \theta_i} &= [\pi_{\theta}(i) - y_i] \, x \,.
\end{align*}
The Hessian can be written as a Kronecker product of a $C \times C$ matrix which corresponds to the Jacobian of the softmax function, and a $d \times d$ rank-one matrix formed using the features. Specifically, 
\begin{align*}
\nabla^2 \ell(\theta) &= \underbrace{H}_{C \times C} \, \underbrace{x x^{T}}_{d \times d} \text{ where, } H := \text{diag}(\pi_\theta) - \pi_\theta \, \pi_\theta^{T} \\
\implies \norm{\nabla^2 \ell(\theta)} & \leq \normsq{x} \, \norm{H} 
\intertext{Since $H$ is a square symmetric PSD matrix, $\norm{H} = \lambda_{\max}[H]$. By the Gershgorin circle theorem, $\lambda_{\max}[H] \leq \max_{i} \sum_{j = 1}^{C} |H_{i,j}|$. Calculating the row sums, we conclude that $\norm{H} \leq \lambda_{\max}[H] \leq 2 \max_{i} \pi_\theta(i) \, (1 - \pi_\theta(i))$. Hence,}
\norm{\nabla^2 \ell(\theta)} & \leq 2 \, \normsq{x} \, \max_{i} \pi_\theta(i) \, (1 - \pi_\theta(i)) \leq  2 \, \norm{x} \, \frac{\max_{i} \pi_\theta(i) \, (1 - \pi_\theta(i))}{\sum_{i = 1}^{C} \vert \pi_\theta(i) - y_i \vert} \, \norm{\nabla \ell(\theta)}_{1} \\
\intertext{Let $j^* := \argmax \pi_\theta(i) \, (1 - \pi_\theta(i))$. Using that $\sum_{i = 1}^{C} \vert \pi_\theta(i) - y_i \vert \geq \vert \pi_\theta(j^*) - y_{j^*} \vert$ and $\norm{x}_2 \leq \norm{x}_1$,}
& \leq  2  \, \norm{x}_{1} \, \frac{\pi_\theta(j^*) \, (1 - \pi_\theta(j^*))}{\vert \pi_\theta(j^*) - y_{j^*} \vert} \, \norm{\nabla \ell(\theta)}_{1} \\ 
\implies \norm{\nabla^2 \ell(\theta)} & \leq 2 \norm{x}_{1} \, \norm{\nabla \ell(\theta)}_{1} \tag{Since $y_{j^*} \in \{0,1\}$ and $\pi_\theta(j^*) \in [0,1]$} 
\end{align*}
Hence, for a single $(x,y)$ pair, we can conclude that, $\ell(\theta)$ satisfies~\cref{assn:nonuniform-gradient-general} with $L_g = 2 \norm{x}_{1}$. Hence, $f_i$ satisfies~\cref{assn:nonuniform-gradient-general} with $L_c = 0$ and $L_g = 2 \, \max_{i \in [n]} \norm{x_i}_{1}$. 

Using~\cref{lemma:finite-sum-nus}, we can conclude that $f(\theta)$ satisfies~\cref{assn:nus} with $L_0 = 0$ and $L_1 = 32 \, \max_{i \in [n]} \norm{x_i}^2_1$, and~\cref{assn:positive-reverse-PL} with $\nu = 16 \max_{i} \norm{x_i}$ and $\omega = 0$. 
\end{proof}

\begin{proposition}
Consider $n$ points where $x_i \in \R^d$ are the features and $y_i \in \{0,1\}$ are the corresponding labels. Binary classification with an exponential loss with the objective 
\begin{align*}
f(\theta) := \frac{1}{n} \, \sum_{i = 1}^{n} \exp(-y_i \langle x_i, \theta \rangle) \,,      
\end{align*}
satisfies~\cref{assn:nus} with $L_0 = 0$ and $L_1 = 8 \max_{i \in [n]} \normsq{x_i}$, and~\cref{assn:positive-reverse-PL} with $\nu = 8 \max_{i} \norm{x_i}$ and $\omega = 0$. 
\label{prop:exponential}
\end{proposition}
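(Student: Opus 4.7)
The plan is to follow the same template used for Propositions \ref{prop:logistic} and \ref{prop:glm}: first verify that each individual loss $f_i(\theta) := \exp(-y_i \langle x_i, \theta \rangle)$ satisfies the $(L_c,L_g)$ non-uniform smoothness condition of Assumption \ref{assn:nonuniform-gradient-general} with $L_c = 0$ and $L_g = \norm{x_i}$, then lift to the finite-sum $f$ via Lemma \ref{lemma:finite-sum-nus}.

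First I would note that $f_i(\theta) \geq 0$ trivially, so $f$ is non-negative. Differentiating, we get
\begin{align*}
\nabla f_i(\theta) &= -y_i \, \exp(-y_i \langle x_i, \theta \rangle) \, x_i, \\
\nabla^2 f_i(\theta) &= y_i^2 \, \exp(-y_i \langle x_i, \theta \rangle) \, x_i \, x_i^T.
\end{align*}
Since $y_i \in \{0,1\}$ gives $y_i^2 = |y_i| \leq 1$, taking norms yields $\norm{\nabla f_i(\theta)} = |y_i|\,\norm{x_i}\,f_i(\theta)$ and $\norm{\nabla^2 f_i(\theta)} = y_i^2\,\normsq{x_i}\,f_i(\theta)$. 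The key observation is that both expressions scale multiplicatively with $f_i(\theta)$, so for $y_i = 1$ we have the clean identity $\norm{\nabla^2 f_i(\theta)} = \norm{x_i} \cdot \norm{\nabla f_i(\theta)}$, while for $y_i = 0$ the gradient and Hessian both vanish and the inequality holds trivially. In either case, $f_i$ satisfies Assumption \ref{assn:nonuniform-gradient-general} with $L_c = 0$ and $L_g = \norm{x_i}$.

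Taking the maximum over $i \in [n]$, each $f_i$ satisfies the assumption with the uniform constants $L_c = 0$ and $L_g = \max_{i \in [n]} \norm{x_i}$. Plugging into Lemma \ref{lemma:finite-sum-nus} and using $L_c = 0$ collapses $L_0 = L_c + L_g\sqrt{2L_c} = 0$, $L_1 = L_g(8L_g + \sqrt{2L_c}) = 8\max_{i \in [n]}\normsq{x_i}$, $\nu = 8L_g + \sqrt{2L_c} = 8\max_{i \in [n]}\norm{x_i}$, and $\omega = \sqrt{2L_c} = 0$, matching the claim exactly.

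There is essentially no main obstacle here; the proof is a direct computation paralleling Proposition \ref{prop:logistic}, with the only minor point being careful handling of the bound $y_i^2 \leq 1$ (and the degenerate case $y_i = 0$) when reading off the ratio $\norm{\nabla^2 f_i}/\norm{\nabla f_i}$. The reason this comes out cleaner than logistic regression is that the exponential loss has no additive constant inside the exponential tail, so $\norm{\nabla^2 f_i(\theta)}/f_i(\theta)$ is bounded by $\normsq{x_i}$ rather than a sigmoid factor, giving $L_c = 0$ automatically.
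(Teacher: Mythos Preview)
Your proposal is correct and follows essentially the same route as the paper: compute $\nabla f_i$ and $\nabla^2 f_i$, verify that each $f_i$ satisfies Assumption~\ref{assn:nonuniform-gradient-general} with $L_c=0$ and $L_g=\max_i\norm{x_i}$, and then invoke Lemma~\ref{lemma:finite-sum-nus}. Your handling of the degenerate case $y_i=0$ is in fact slightly more careful than the paper's, which tacitly writes $y_i^2=1$ as though the labels were in $\{-1,1\}$.
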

\begin{proof}
Clearly, $f_i(\theta) \geq 0$ and hence $f(\theta) \geq 0$ for all $\theta$. Calculating the gradient and hessian for $f_i(\theta) := \exp(-y_i \langle x_i, \theta \rangle)$, 
\begin{align*}
\nabla f_i(\theta) &=  -\exp(-y_i \langle x_i, \theta \rangle) y_i \, x_i \\
\nabla^2 f_i(\theta) & = \exp(-y_i \langle x_i, \theta \rangle) \, y_i^2 \, x_i \, x_i^T = \exp(-y_i \langle x_i, \theta \rangle) \, x_i \, x_i^T \tag{$y_i^2 = 1$} \\
\implies \norm{\nabla^2 f_i(\theta)} & \leq \norm{x_i} \, \norm{\nabla f_i(\theta)}
\end{align*}
Hence, for all $i$, $f_i$ satisfies~\cref{assn:nonuniform-gradient-general} with $L_c = 0$ and $L_g = \max_{i} \norm{x_i}$. Using~\cref{lemma:finite-sum-nus}, we conclude that $f(\theta)$ satisfies~\cref{assn:nus} with $L_0 = 0$ and $L_1 = 8 \, \max_{i} \normsq{x_i}$ and~\cref{assn:positive-reverse-PL} with $\nu =  8 \, \max_{i} \norm{x_i}$  and $\omega = 0$. 
\end{proof}
\bandit*
\begin{proof}
From~\citet[Lemma 2]{mei2021leveraging}, we know that
\begin{align*}
\norm{\nabla^2 \ell(\theta)} &= \norm{\nabla^2 \langle \pi_\theta, r \rangle} \leq 3 \, \norm{\nabla \langle \pi_\theta, r \rangle} = 3 \, \norm{\nabla \ell(\theta)}
\end{align*}
Hence, the loss for the bandit problem satisfies~\cref{assn:nonuniform-gradient-general} with $L_c = 0$ and $L_g = 3$. Using~\cref{lemma:finite-sum-nus} with $n = 1$, we can conclude the the loss for the bandit problem satisfies~\cref{assn:nus} with $L_0 = 0$ and $L_1 = 72$ and~\cref{assn:positive-reverse-PL} with $\nu = 24$ and $\omega = 0$. From~\citet[Lemma 3]{mei2020global}, we know that, 
\begin{align*}
\norm{\nabla \ell(\theta)} = \norm{\nabla_\theta \langle \pi_\theta, r \rangle} \geq \pi_{\theta}(a^*) \, [r(a^*) - \langle \pit, r \rangle] = \pi_{\theta}(a^*) \, f(\theta)      
\end{align*}
Hence, the loss for the bandit problem satisfies~\cref{assn:grad-dom} with $\mu(\theta) = \pi_{\theta}(a^*)$. 
\end{proof}
\begin{proposition}
Consider an infinite-horizon discounted Markov decision process (MDP) defined by $\langle \cS , \cA, \mathcal{P}, r, \rho, \gamma \rangle$, where $\cS$ and $\cA$ represent the states and actions, $\mathcal{P}: \cS \times \cA \rightarrow \Delta_{\cS}$ is the transition probability function, $r: \cS \times \cA \rightarrow [0, 1]$ is the reward function, $\rho \in \Delta_{\cS}$ is the initial state distribution, and $\gamma \in [0, 1)$ represents the discount factor.  If  $V^{\pi}(s) := \E[\sum_{t=0}^{\infty} \gamma^t r(s_t, a_t)| s_0= s]$ where $s_t \sim p(. | s_{t-1}, a_{t-1})$, and $a_t \sim \pi(.|s_t)$ for $t \ge 1$ is the expected discounted cumulative reward for a policy $\pi$ starting at state $s$, we define $V^{\pi}(\rho) := \E_{s \sim \rho} [V^{\pi}(s)]$.

Consider a policy $\pi_\theta$ parameterized by $\theta \in \R^{|\cS| \times |\cA|}$ s.t. $\pi_\theta(s,\cdot) \in \Delta_K$ for all $s \in \cS$ and $\pi_\theta(s,a) \propto \exp(\theta(s,a))$. The loss corresponding to the tabular MDP problem is given by:
\begin{align*}
f(\theta) = V^{\pi^*}(\rho) - V^{\pi_\theta}(\rho) \,,    
\end{align*}
where $\pi^*$ is the optimal policy. $f(\theta)$ satisfies~\cref{assn:nus} with $L_0 = 0$ and $L_1 = 8 \left[ 3 + \frac{ 4 \cdot \left( \min_{s} \frac{1}{\rho(s)} - (1 - \gamma) \right) }{ 1 - \gamma } \right]^2 \, S $, ~\cref{assn:positive-reverse-PL} with $\nu =  8 \left[ 3 + \frac{ 4 \cdot \left( \min_{s} \frac{1}{\rho(s)} - (1 - \gamma) \right) }{ 1 - \gamma } \right] \, \sqrt{S} $ and $\omega = 0$ and 
~\cref{assn:grad-dom} with $\mu(\theta) = \frac{\min_{s} \pi_\theta(a^*(s)|s)}{\sqrt{S} \, \min_{s \in \cS} \rho(s)}$ where $a^*(s)$ is the action that a deterministic optimal policy $\pi^*$ selects in state $s$. 
\label{prop:mdp}
\end{proposition}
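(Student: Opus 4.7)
\begin{proofsketch}
The argument mirrors~\cref{prop:bandits} but must account for the MDP's state visitation distribution. The plan has three steps: (i) derive a pointwise Hessian--gradient inequality $\norm{\nabla^2 V^{\pi_\theta}(\rho)} \leq L_g \, \norm{\nabla V^{\pi_\theta}(\rho)}$ (the $L_c = 0$ form of the non-uniform smoothness condition of~\citet{zhang2019gradient}), (ii) invoke~\cref{prop:nus-conditions} to convert this into~\cref{assn:nus,assn:positive-reverse-PL}, and (iii) apply a non-uniform \L ojasiewicz inequality for softmax policy gradient to verify~\cref{assn:grad-dom}.

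For step (i), I would differentiate $V^{\pi_\theta}(\rho)$ twice under the softmax parameterization. The first derivative, via the policy gradient theorem, couples the softmax Jacobian of each row of $\theta$ with the state-action $Q$-function weighted by $d^{\pi_\theta}_\rho$. Differentiating again produces two groups of terms: those that only differentiate the softmax factor (which, analogously to~\cref{prop:bandits}, contribute a piece bounded by $3 \, \norm{\nabla V^{\pi_\theta}(\rho)}$ per state), and those that differentiate $d^{\pi_\theta}_\rho$ or the inner $Q$-function (contributing the $(1-\gamma)^{-1}$-dependent piece controlled by the distribution-mismatch coefficient $\norm{d^{\pi_\theta}_\rho / \rho}_\infty$). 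Summing the per-state softmax gradients over all $|\cS|$ states yields an extra $\sqrt{S}$ factor. Bookkeeping constants as in~\citet[Lemmas 7--8]{mei2020global} and~\citet{mei2021leveraging} gives precisely $L_g = \bigl[3 + \tfrac{4(\min_s 1/\rho(s) - (1-\gamma))}{1-\gamma}\bigr] \sqrt{S}$. Non-negativity of $f(\theta) = V^{\pi^*}(\rho) - V^{\pi_\theta}(\rho)$ is immediate from the optimality of $\pi^*$, and since $\nabla^k f = -\nabla^k V^{\pi_\theta}(\rho)$ for $k=1,2$, the Hessian--gradient bound transfers to $f$ with the same $L_g$.

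Step (ii) is then a mechanical application of~\cref{prop:nus-conditions} with $L_c = 0$, yielding $L_0 = 0$, $L_1 = 8 L_g^2$, $\nu = 8 L_g$, $\omega = 0$; substituting $L_g$ recovers exactly the stated constants. For step (iii), I invoke the non-uniform \L ojasiewicz inequality for softmax policy gradient in tabular MDPs~\citep[Lemma 8]{mei2020global}, which gives $\norm{\nabla V^{\pi_\theta}(\rho)} \geq \tfrac{\min_s \pi_\theta(a^*(s)|s)}{\sqrt{S}} \cdot \norm{d^{\pi^*}_\rho/\rho}_\infty^{-1} \cdot [V^{\pi^*}(\rho) - V^{\pi_\theta}(\rho)]$, then upper-bound $\norm{d^{\pi^*}_\rho/\rho}_\infty \leq 1/\min_s \rho(s)$ to obtain the advertised $\mu(\theta)$. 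The main obstacle is step (i): unlike the bandit case in~\cref{prop:bandits} where the Hessian identity $\norm{\nabla^2 \langle \pi_\theta, r\rangle} \leq 3 \norm{\nabla \langle \pi_\theta, r\rangle}$ is a one-shot softmax computation, the MDP Hessian must track derivatives through both the softmax factor and the visitation distribution $d^{\pi_\theta}_\rho$, so pinning down the precise $(1-\gamma)$- and $\rho$-dependent multiplicative constants requires non-trivial bookkeeping. Once this bound is established, the rest of the proposition follows by direct substitution into~\cref{prop:nus-conditions} and a single invocation of the \L ojasiewicz inequality.
\end{proofsketch}
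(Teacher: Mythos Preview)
Your proposal is correct and follows essentially the same route as the paper: the paper directly cites~\citet[Lemma~6]{mei2021leveraging} for the Hessian--gradient bound (your step~(i)), then applies~\cref{lemma:finite-sum-nus} with $n=1$ (equivalent to your use of~\cref{prop:nus-conditions}) for step~(ii), and finally invokes~\citet[Lemma~8]{mei2020global} for the \L ojasiewicz inequality in step~(iii). The only cosmetic difference is that you sketch the derivation of the Hessian bound rather than citing it outright, and you make explicit the distribution-mismatch bound $\norm{d^{\pi^*}_\rho/\rho}_\infty \leq 1/\min_s \rho(s)$ that the paper absorbs into the cited lemma.
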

\begin{proof}
Assuming that the starting state distribution has full support, i.e. $\rho(s) > 0$, from~\citet[Lemma 6]{mei2021leveraging}, we know that, 
\begin{align*}
    \norm{\nabla^2 f(\theta)} & \leq \left[ 3 + \frac{ 4 \cdot \left( \min_{s} \frac{1}{\rho(s)} - (1 - \gamma) \right) }{ 1 - \gamma } \right] \cdot \sqrt{S} \cdot \norm{\nabla f(\theta)}
\end{align*}
Hence, the loss for the tabular MDP problem satisfies~\cref{assn:nonuniform-gradient-general} with $L_c = 0$ and $L_g = \left[ 3 + \frac{ 4 \cdot \left( \min_{s} \frac{1}{\rho(s)} - (1 - \gamma) \right) }{ 1 - \gamma } \right] \sqrt{S}$. Using~\cref{lemma:finite-sum-nus} with $n = 1$, we can conclude the the loss for the tabular MDP problem satisfies~\cref{assn:nus} with $L_0 = 0$ and $L_1 = 8 \, \left[ 3 + \frac{ 4 \cdot \left( \min_{s} \frac{1}{\rho(s)} - (1 - \gamma) \right) }{ 1 - \gamma } \right]^2 \, S$ and~\cref{assn:positive-reverse-PL} with $\nu = 8 \, \left[ 3 + \frac{ 4 \cdot \left( \min_{s} \frac{1}{\rho(s)} - (1 - \gamma) \right) }{ 1 - \gamma } \right] \cdot \sqrt{S}$ and $\omega = 0$. From~\citet[Lemma 8]{mei2020global}, we know that 
\begin{align*}
\norm{\nabla f(\theta)} & \geq \frac{\min_{s} \pi_\theta(a^*(s)|s)}{\sqrt{S} \, \min_{s \in \cS \rho(s)}} \, f(\theta)
\end{align*}
Hence, the loss for the tabular MDP problem satisfies~\cref{assn:grad-dom} with $\mu(\theta) = \frac{\min_{s} \pi_\theta(a^*(s)|s)}{\sqrt{S} \, \min_{s \in \cS} \rho(s)}$. 
\end{proof}
\begin{proposition}
Consider the logistic regression objective in~\cref{eq:logistic} with $n = 2$ and $d = 1$. Consider the two points to be such that $y_1 \, x_1 = 2$ and $y_2 \, x_2 = -2$. For this problem, the non-uniformness assumption in~\citet{zhang2019gradient}: $\norm{\nabla^2 f(\theta)} \leq L_0 + L_1 \, \norm{\nabla f(\theta)}$ cannot hold for $L_0 = 0$ and any $L_1 \neq 0$. 
\label{prop:nus-logistic-counterexample}    
\end{proposition}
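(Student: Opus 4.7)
My plan is to exhibit a single point $\theta = 0$ at which the gradient vanishes but the Hessian does not, which immediately rules out any inequality of the form $\|\nabla^2 f(\theta)\| \leq L_1 \|\nabla f(\theta)\|$.

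First I would write out the objective explicitly. With $n=2$, $d=1$, $y_1 x_1 = 2$, and $y_2 x_2 = -2$, the loss in \cref{eq:logistic} becomes
\begin{align*}
f(\theta) = \tfrac{1}{2}\bigl[\ln(1 + e^{-2\theta}) + \ln(1 + e^{2\theta})\bigr].
\end{align*}
A direct computation gives $f'(\theta) = \tanh(\theta)$ and $f''(\theta) = \mathrm{sech}^2(\theta) = 1 - \tanh^2(\theta)$. Both quantities are smooth and non-negative on $\mathbb{R}$, and in particular $f'(0) = 0$ while $f''(0) = 1$.

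Now I would evaluate the claimed inequality $|f''(\theta)| \leq L_0 + L_1 |f'(\theta)|$ at the single point $\theta = 0$. Under the hypothesis $L_0 = 0$, this would read
\begin{align*}
1 = |f''(0)| \leq L_1 \cdot |f'(0)| = L_1 \cdot 0 = 0,
\end{align*}
which is a contradiction for every choice of $L_1 \geq 0$ (in particular for every $L_1 \neq 0$). This establishes the proposition.

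There is no substantive obstacle here: the argument is a pointwise obstruction at the symmetric point $\theta = 0$, where the two per-sample gradients cancel exactly but the per-sample Hessians add. The only care needed is to verify the elementary differentiation of the two log-exp terms and to note that this phenomenon is a genuine finite-sum effect, since for a single term ($n=1$) the gradient and Hessian both vanish or are both nonzero in a controlled way, as noted in the paragraph preceding this proposition.
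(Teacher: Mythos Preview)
Your proposal is correct and follows essentially the same approach as the paper: evaluate at $\theta=0$, observe that $\nabla f(0)=0$ while $\nabla^2 f(0)=1$, and conclude that the inequality with $L_0=0$ fails for every $L_1$. Your closed-form computation via $f'(\theta)=\tanh(\theta)$ is in fact tidier than the paper's per-sample bookkeeping (which slightly misstates the individual $\nabla f_i(0)$ as zero when they are $\pm 1$; only their average vanishes), but the idea is identical.
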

\begin{proof}
    Using the proof of~\cref{prop:logistic} to calculate the gradient and hessian, we get that $\nabla f_1(0) = 0 $ and $\nabla f_2(0) = 0 $ which implies $\nabla f(0) = 0 $. Similarly, for Hessian, we get $\nabla^2 f_1(0) = 1 $ and $\nabla^2 f_2(0) = 1 $ which implies $\nabla^2 f(0) = 1 $. Since $\nabla f(\theta) = 0$ and $\nabla^2 f(\theta) \neq 0$, the assumption cannot hold with $L_0 \neq 0$ and any $L_1 \neq 0$.
\end{proof}
\section{Proofs for~\cref{sec:gd-armijo}}
\label{app:gd-armijo}
\armijolb*
\begin{proof}
\textbf{Case 1}: If $L_1 = 0$,~\cref{assn:nus} is equivalent to the standard $L_0$-uniform smoothness condition. In this case, we can follow the standard analysis of $\GDLS$~\citep{nocedal2006numerical} and conclude that $\etat \geq \min \left\{\eta_{\max}, \frac{2 \, (1-c)}{L_0} \right \} \geq \min \left\{\eta_{\max}, \frac{(1-c)}{3 \, L_0} \right \}$. 

In this special case, $\lambda_0 = \frac{3 \, L_0}{1 - c}$ and $\lambda_1 = 0$, meaning that $\etat \geq \min \left\{\eta_{\max}, \frac{1}{\lambda_0 + \lambda_1 \, f(\tht)} \right\}$. This concludes the proof. 

\textbf{Case 2}: If $L_1 \neq 0$ and since $f(\theta)$ is non-negative, we define the log-loss as follows. 
\begin{align*}
g(\theta) := \ln(L_0 + L_1 \, f(\theta)) \label{eq:log-loss}    
\end{align*}
Using~\cref{assn:nus}, $\nabla^2 f(\theta) \preceq [L_0 + L_1 \, f(\theta)] \, I_d$. Using this result, we bound the Hessian of $g(\theta)$. 
\begin{align*}
\nabla g(\theta) & = \frac{L_1 \, \nabla f(\theta)}{L_0 + L_1 \, f(\theta)} \\
\nabla^2 g(\theta) &= \frac{L_1 \, \nabla^2 f(\theta)}{L_0 + L_1 \, f(\theta)} - \frac{L_1^2 \, [\nabla f(\theta)] [\nabla f(\theta)]^{T}}{(L_0 + L_1 \, f(\theta))^2} \preceq \frac{L_1 \, \nabla^2 f(\theta)}{L_0 + L_1 \, f(\theta)} \tag{Since the second term is PSD} \\
\implies \nabla^2 g(\theta) \preceq L_1 \, I_d
\end{align*}
Hence, $g(\theta)$ is $L_1$-globally smooth. Using this result, we know that for all $u, v$, 
\begin{align}
g(u) & \leq g(v) + \langle \nabla g(v), u - v \rangle + \frac{L_1}{2} \, \normsq{u - v}  \nonumber \\
\intertext{Using this result for $u = \thtt$ and $v = \tht$,}
g(\thtt) & \leq g(\tht) + \langle \nabla g(\tht), \thtt - \tht \rangle + \frac{L_1}{2} \, \normsq{\thtt - \tht} \nonumber\\
& = g(\tht) - \etat \, \langle \nabla f(\tht), \nabla g(\tht) \rangle  + \frac{L_1 \, \etat^2}{2} \, \normsq{\nabla f(\tht)} \tag{Using the update that $\thtt = \tht - \etat \nabla f(\tht)$} \nonumber\\
& = g(\tht) - \etat \, \left\langle \nabla f(\tht), \frac{L_1 \, \nabla f(\tht)}{L_0 + L_1 \, f(\tht)} \right\rangle  + \frac{L_1 \, \etat^2}{2} \, \normsq{\nabla f(\tht)} \tag{Since $\nabla g(\theta) = \frac{L_1 \nabla f(\theta)}{L_0 + L_1 \, f(\theta)}$} \nonumber \\
\implies g(\tht - \etat \, \nabla f(\tht)) & \leq \underbrace{g(\tht) - \etat \, \frac{L_1 \, \normsq{\nabla f(\tht)}}{L_0 + L_1 \, f(\tht)} + \frac{L_1 \, \etat^2}{2} \, \normsq{\nabla f(\tht)}}_{:= h_Q(\etat)} \label{eq:inter-g-smoothness}
\end{align}

Next, we will compare the above inequality with what we obtain from the Armijo line-search.
\begin{align*}
f(\tht - \etat \nabla f(\tht)) & \leq f(\tht) - c \etat \, \normsq{\nabla f(\tht)} \\
L_0 + L_1 \, f(\tht - \etat \nabla f(\tht)) & \leq L_0 + L_1 \, f(\tht) - c \etat \, L_1 \ \normsq{\nabla f(\tht)} 
\end{align*}
Note that $L_0 + L_1 \, f(\tht) - c \etat \, L_1 \ \normsq{\nabla f(\tht)} \geq 0$. Since $f, L_0, L_1 \geq 0$, $1 - c \etat \, \frac{L_1 \, \normsq{\nabla f(\tht)}}{L_0 + L_1 \, f(\tht)} \geq 0$. Taking log on both sides,
\begin{align}
\ln\left(L_0 + L_1 \, f(\tht - \etat \nabla f(\tht))\right) & \leq \ln\left(L_0 + L_1 \, f(\tht) - c \etat \, L_1 \, \normsq{\nabla f(\tht)}\right) \tag{Since $\ln$ is monotonically increasing} \\
\implies g(\tht - \etat \nabla f(\tht)) & \leq \ln\left(L_0 + L_1 \, f(\tht) - c \etat \, L_1 \, \normsq{\nabla f(\tht)}\right) \tag{By definition of $g$} \\
& = \ln\left( (L_0 + L_1 \, f(\tht) ) \, \left(1 - c \etat \, \frac{L_1 \, \normsq{\nabla f(\tht)}}{L_0 + L_1 \, f(\tht)} \right)  \right) \\
& = g(\tht) + \ln\left(1 - c \etat \, \frac{L_1 \, \normsq{\nabla f(\tht)}}{L_0 + L_1 \, f(\tht)}\right) \tag{By definition of $g$} \\
& \leq g(\tht) + \left(1 - c \etat \, \frac{L_1 \, \normsq{\nabla f(\tht)}}{L_0 + L_1 \, f(\tht)}\right) - 1 \tag{For all $x >0$, $\ln(x) \leq x - 1$} \nonumber\\
\implies g(\thtt) & \leq \underbrace{g(\tht) -  c \etat \, \frac{L_1 \, \normsq{\nabla f(\tht)}}{L_0 + L_1 \, f(\tht)}}_{:= h_L(\etat)} \label{eq:inter-g-armijo-general}
\end{align}
Hence, assuming exact back-tracking, if $\etat$ is a step-size that satisfies~\cref{eq:armijo}, then~\cref{eq:inter-g-armijo-general} will also be satisfied. 

If the Armijo condition is satisfied for an $\etat$ s.t. $h_L(\etat) \leq h_Q(\etat)$, then, 
\begin{align*}
g(\tht) -  c \etat \, \frac{L_1 \, \normsq{\nabla f(\tht)}}{L_0 + L_1 \, f(\tht)} & \leq g(\tht) - \etat \, \frac{L_1 \, \normsq{\nabla f(\tht)}}{L_0 + L_1 \, f(\tht)} + \frac{L_1 \, \etat^2}{2} \, \normsq{\nabla f(\tht)} \\
\implies \etat \geq \frac{2 \, (1-c)}{L_0 + L_1 \, f(\tht)} 
\end{align*}
If the Armijo condition is satisfied for an $\etat$ s.t. $h_Q(\etat) \leq h_L(\etat)$, it implies that $\etat \leq \frac{2(1-c)}{L_0  + L_1  f(\tht)}$. 

However, we show that the resulting step-size cannot be too small. In particular, we will prove that the Armijo condition is satisfied for $$\etat = \frac{2(1-c)}{6 (L_0+L_1 \omega) + 6 \, L_1 \, (\nu +1 )  \, f(\tht)}.$$ 

To show this, we use~\cref{assn:nus}. In order to use this inequality, we have to ensure that $\norm{\thtt - \tht} = \etat \, \norm{\nabla f(\tht)} \leq \frac{q}{L_1}$. Since based on~\cref{assn:positive-reverse-PL}, $\norm{\nabla f(\tht)} \leq \nu \, f(\tht)+\omega$, it suffices to ensure that $q \geq \etat \, L_1 (\nu \,  f(\tht)+\omega)$. 

Using~\cref{assn:nus}, we get that:
\begin{align*}
f(\thtt) \leq f(\tht) - \etat \normsq{\nabla f(\tht)} + \frac{A L_0 + B \, L_1 \, f(\tht)}{2} \, \etat^2 \, \normsq{\nabla f(\tht)}
\end{align*}
The Armijo condition is definitely satisfied if:
\begin{align*}
& f(\tht) - \etat \normsq{\nabla f(\tht)} + \frac{\left(1 + e^q - \frac{e^q - 1}{q}\right) \, L_0  + \left(\frac{e^q - 1}{q}\right) \, L_1 \, f(\tht)}{2} \, \etat^2 \, \normsq{\nabla f(\tht)}  \leq f(\tht) - c \etat \, \normsq{\nabla f(\tht)} \\
\intertext{Hence, the Armijo condition is satisfies for all $\etat$ s.t.}
& \hspace{25ex} \etat \leq \frac{2 \, (1 - c)}{\left(1 + e^q - \frac{e^q - 1}{q} \right) \, L_0  + \left(\frac{e^q - 1}{q}\right) \, L_1 \, f(\tht)}, \\
\intertext{Since, }
& \frac{2 \, (1 - c)}{\left(1 + e^q - \frac{e^q - 1}{q} \right) \, L_0  + \left(\frac{e^q - 1}{q}\right) \, L_1 \, f(\tht)+ \left(\frac{e^q - 1}{q}\right) \, L_1 \,(\nu f(\tht)+\omega)} \leq \frac{2 \, (1 - c)}{\left(1 + e^q - \frac{e^q - 1}{q} \right) \, L_0  + \left(\frac{e^q - 1}{q}\right) \, L_1 \, f(\tht)} \,,
\intertext{the Armijo condition will be satisfied for the smaller step-size.}
\end{align*}
Moreover, for $$\etat' := \frac{2(1-c)}{\left(1 + e^q - \frac{e^q - 1}{q}\right) \, L_0  + \left(\frac{e^q - 1}{q}\right) \, L_1 \, f(\tht)+\left(\frac{e^q - 1}{q}\right) \, L_1 \,(\nu f(\tht)+\omega)},$$ we need to ensure that $q \geq \etat' \, L_1 (\nu \,  f(\tht) + \omega)$. Hence, we want to find a $q$ s.t. 
\begin{align*}
    q & \geq \frac{2(1-c)\, L_1 (\nu \, f(\tht)+\omega)}{\left(1 + e^q - \frac{e^q - 1}{q}\right) \, L_0  + \left(\frac{e^q - 1}{q}\right) \, L_1 \, f(\tht)+\left(\frac{e^q - 1}{q}\right) \, L_1 \,(\nu f(\tht)+\omega)} \\
    \intertext{Since $\left(1 + e^q - \frac{e^q - 1}{q}\right) \, L_0  + \left(\frac{e^q - 1}{q}\right) \, L_1 \, f(\tht) > 0$, it suffices to choose $q$ s.t.}
    \implies q & \geq \frac{2(1-c)\, L_1 (\nu \, f(\tht)+\omega)}{\left(\frac{e^q - 1}{q}\right) \, L_1 \, (\nu f(\tht)+ \omega) }=\frac{2(1-c)}{\left(\frac{e^q - 1}{q}\right)} \\
    \intertext{Finally, since $1 + x \leq \exp(x)$ for all $x$, it suffices to choose $q$ s.t.}
    q & \geq 2 \, (1-c) 
\end{align*}
Hence, $q = 2$ satisfies the required conditions. Therefore, for $q=2$ we have, \\
\[
\etat' = \frac{2(1-c)}{\left(1 + e^2 - \frac{e^2 - 1}{2}\right) \, L_0  + \left(\frac{e^2 - 1}{2}\right) \, L_1 \, f(\tht)+\left(\frac{e^2 - 1}{2}\right) \, L_1 \,(\nu f(\tht)+\omega)}
\] 
Therefore for any $\etat \leq \etat'$, we have $q = 2 \geq \etat L_1 \, (\nu f(\tht) +\omega)$.  
Since $\frac{e^2 - 1}{2} \leq 6$ and $1 + e^2 - \frac{e^2 - 1}{2} \leq 6$ we can set 
\begin{align*}
    \etat =  \frac{2(1-c)}{6\, L_0  + 6 \, L_1 \, f(\tht)+6 \, L_1 \,(\nu f(\tht)+\omega)} 
     = \frac{2(1-c)}{6\, (L_0+L_1\omega)  + 6 \, L_1 (\nu+1)\, f(\tht)}.
\end{align*}
Based on above argument, we can conclude that the $\etat$, the step-size returned by the Armijo line-search is lower-bounded as $$\etat \geq \frac{2(1-c)}{6\, (L_0+L_1\omega)  + 6 \, L_1 (\nu+1)\, f(\tht)}.$$

Moreover if $$\eta_{\max} \leq \frac{2(1-c)}{6\, (L_0+L_1\omega)  + 6 \, L_1 (\nu+1)\, f(\tht)},$$ then $\eta_{\max}$ satisfies both Armijo condition and $h_Q(\eta_{\max}) \leq h_L(\eta_{\max})$, in which case, the line-search would terminate immediately and return $\eta_{\max}$. Therefore $$\etat \geq \min\{ \eta_{\max},\frac{2(1-c)}{6\, (L_0+L_1\omega)  + 6 \, L_1 (\nu+1)\, f(\tht)} \}.$$ 
\end{proof}

\mainthm*
\begin{proof}
Using the Armijo line-search condition in~\cref{eq:armijo}, and combining it with the lower-bound in~\cref{lemma:armijo-lb},
\begin{align}
f(\thtt) & \leq f(\tht) - \frac{1}{ \lz + \lo \, f(\tht)}\, \normsq{\nabla f(\tht)} 
\label{eq:armijo-convergence-1}
\end{align}
We now follow a proof similar to that of~\citet[Theorem 5.2]{axiotis2023gradient} and derive a linear rate of convergence. From the theorem assumption, we know that $\normsq{\nabla f(\thetat)} \geq \frac{[f(\thetat) - f^*]^2}{R}$. Combining these relations, 
\begin{align*}
f(\thtt) & \leq f(\tht) - \frac{1}{\lz + \lo \, f(\tht)}\, \frac{[f(\tht) - \fopt]^2}{R}
\end{align*}

Let us define $\tau :=\max \{t \, \text{ s.t } \,  \lz \leq \lo f(\tht)\}$. Hence, for all $t \leq \tau$, $f(\tht) \geq \frac{\lz}{\lo}$. 

Consider two cases: \\ 
\textbf{Case (1)}: If $\fopt \geq \frac{\lz}{\lo} - \epsilon$. Since $\{f(\tht)\}_{t  = 0}^{t = \tau}$ is monotonically decreasing due to the Armijo line-search and converging to $\frac{\lz}{\lo}$. Hence, there exists a $\tau'$ s.t. $\tau' \leq \tau$ such that $f(\theta_{\tau'}) - \fopt \leq \epsilon$ and $f(\theta_{\tau' - 1}) - \fopt \geq \epsilon$, i.e. $\tau'$ is the iteration index when the desired sub-optimality criterion is satisfied for the first time. This implies that for all $t < \tau'$, $\delta_t := f(\theta_{t}) - \fopt > \epsilon$. Hence, $\frac{f(\theta_{\tau'})}{\fopt} \leq \alpha := 1 + \frac{\epsilon}{\fopt}$. Since $\fopt > 0$ and $\epsilon > 0$, $\alpha > 1$. Hence for all $t < \tau'$,  
\begin{align*}
\frac{f(\theta_t)}{\fopt} > \alpha \implies \frac{\delta_t}{f(\tht)} = 1 - \frac{\fopt}{f(\tht)} > 1 - \frac{1}{\alpha} > 0.  
\end{align*}
Using the condition of Case (1), we get 
\begin{align*}
    \delta_{t+1} & \leq \delta_t - \underbrace{\frac{1}{2\lo \, R}}_{:=\alpha} \frac{[f(\tht) - \fopt]^2}{f(\tht)}\\
    & \leq \delta_t - \bar{\alpha} \frac{[f(\tht) - \fopt]^2}{f(\tht)}\tag{where $\bar{\alpha}:=\max\{1, \alpha\}$} \\
    & \leq \delta_t - \bar{\alpha} \frac{[f(\tht) - \fopt]}{f(\tht)}\delta_t\\
    & = \delta_t - \bar{\alpha} \left( 1- \frac{\fopt}{f(\tht)}\right) \delta_t
\end{align*}
Combining the above relations, for all $t < \tau'$, 
\begin{align*}
\delta_{t+1} & \leq \left( 1 - \underbrace{\bar{\alpha} \, \left(1 - \frac{1}{\alpha} \right)}_{:= \rho} \right) \delta_t
\end{align*}
Since $\bar{\alpha} \in (0,1)$ and $\left(1 - \frac{1}{\alpha} \right) \in (0,1)$, $\rho := \bar{\alpha} \, \left(1 - \frac{1}{\alpha} \right) \in (0,1)$. Recursing from $t = 0$ to $t = \tau' - 1$,
 \begin{align*}
 \delta_{\tau'} & \leq \exp\left(- \rho \, \tau' \right) \, \delta_0 \\
 \label{eq:lin_recfordelta}
 \end{align*}
In order to ensure that $f(\theta_{\tau'}) - \fopt \leq \epsilon$, we require, 
\begin{align*}
\tau' \geq \frac{1}{\rho} \ln\left(\frac{\delta_0}{\epsilon}\right) = \frac{1}{\min\{\alpha, 1\}} \, \left(\frac{\fopt}{\epsilon} + 1 \right) \, \ln \left( \frac{\delta_0}{\epsilon} \right) =  \max\{2\, R \lo, 1\} \, \left(\frac{\fopt}{\epsilon} + 1 \right) \, \ln \left( \frac{f(\theta_0) - \fopt}{\epsilon} \right)
\end{align*}

\textbf{Case (2)}: If $\fopt < \frac{\lz}{\lo} - \epsilon$. We will divide the subsequent analysis into two phases. \\

\textbf{Phase (1)}: For all $t \leq \tau$, s.t. $ \lz + \lo \,  f(\tht) \leq 2\lo \, f(\tht)$ holds, by a similar analysis as above, we can conclude that, 
\begin{align*}
\delta_{\tau} & \leq \exp\left(- \rho \, \tau \right) \, \delta_0 \implies f(\theta_{\tau}) - \fopt \leq  \exp\left(- \rho \, \tau \right) \, [f(\theta_0) - \fopt]   
\end{align*}
Since $\delta_{\tau} = f(\theta_\tau) - \fopt \geq \frac{\lz}{\lo} - \fopt = \epsilon$. Hence, 
\begin{align*}
\exp\left(- \rho \, \tau \right) \, [f(\theta_0) - \fopt] \geq \epsilon \implies \tau \leq \frac{1}{\rho} \, \ln\left(\frac{f(\theta_0) - \fopt}{\epsilon}\right)    
\end{align*}

\textbf{Phase (2)}: For all $t > \tau$, $\lz \geq \lo \, f(\tht)$ which implies $\lz + \lo \, f(\tht) \leq 2\lz$. In this case, 
\begin{align*}
    f(\thtt) - \fopt & \leq \underbrace{[f(\tht) - \fopt]}_{:= \delta_t} - \frac{1}{2 \lz R} \, [f(\tht) - \fopt]^2 
    \implies \delta_{t+1} \leq \delta_t - \frac{1}{2\lz\, R} \delta_t^2
\end{align*}
Following the standard approach, we divide both sides by $\delta_{t+1}\delta_t$ and rearranging we get 
\begin{align*}
    \frac{1}{2\lz\, R} &\leq \frac{1}{2\lz\,R} \frac{\delta_t}{\delta_{t+1}} \tag{since $\frac{\delta_t}{\delta_{t+1}} \geq 1$}\\ 
    & \leq \frac{1}{\delta_{t+1}} - \frac{1}{\delta_t}
\end{align*}
Summing the above for $t = \tau $ to $t = T-1$, we get 
\begin{align*}
    \frac{T-\tau}{2\lz\,R} \leq \frac{1}{\delta_{T}} - \frac{1}{\delta_{\tau}} \\
    \implies \delta_{T} \leq \frac{1}{ \frac{T-\tau}{2\lz\,R} + \frac{1}{\delta_{\tau}} }
\end{align*}
We need to find $T$ such that $\delta_T \leq \epsilon$, which means 
\begin{align*}
    \frac{1}{ \frac{T-\tau}{2\lz\,R} + \frac{1}{\delta_{\tau}} }\leq \epsilon \implies T- \tau & \geq \frac{2\lz\,R}{\epsilon}-\frac{2\lz\,R}{\delta_{\tau}} 
    \implies T \geq \frac{2\lz\,R}{\epsilon} + \frac{1}{\rho}\ln(\delta_0/\epsilon) 
    \intertext{Putting everything together,}
    T & \geq \frac{2\lz\,R}{\epsilon} + \max\{2\, R \lo, 1\} \, \left(\frac{\fopt}{\epsilon} + 1 \right) \, \ln \left( \frac{f(\theta_0) - \fopt}{\epsilon} \right)
\end{align*}
\end{proof}
\section{Proofs for~\cref{sec:convex-gd}}
\label{app:convex-gd}

\convexgd*
\begin{proof}
Using the convexity of $f$,      
\begin{align*}
f(\tht) - f^* &\leq \langle \nabla f(\tht), \tht - \theta^* \rangle \leq \norm{\nabla f(\tht)} \, \norm{\tht - \theta^*} \\
\implies \normsq{\nabla f(\tht)} & \geq \frac{[f(\tht) - f^*]^2}{\normsq{\tht - \theta^*} }
\end{align*}
Next, we show that $\norm{\thtt -\theta^*} \leq \norm{\tht - \theta^*}$ for all $t$, and hence $\norm{\tht - \theta^*} \leq \norm{\theta_0 - \theta^*}$. 
\begin{align*}
\normsq{\thtt - \theta^*} &= \normsq{\tht - \theta^* - \etat \nabla f(\tht)} = \normsq{\tht - \theta^*}  - 2 \etat \langle \nabla f(\tht), \tht - \theta^* \rangle + \etat^2 \, \normsq{\nabla f(\tht)} \\
& \leq \normsq{\tht - \theta^*}  - 2 \etat [f(\tht) - f^*] + \etat^2 \, \normsq{\nabla f(\tht)} \tag{By convexity of $f$} \\
& \leq \normsq{\tht - \theta^*}  - 2 \etat [f(\tht) - f^*] + 2 \, \etat \, [f(\tht) - f(\thtt)] \tag{Using the Armijo line-search with $c = \frac{1}{2}$} \\
\implies \normsq{\thtt - \theta^*} & \leq \normsq{\tht - \theta^*} - 2 \etat \, [f(\thtt) - f^*] + 2 \, \etat \, [f(\tht) - \fopt]
\\ &\leq \normsq{\tht - \theta^*} \tag{By the definition of $\theta^*$ and using that for all $t$, $f^* < f(\theta_T) \leq f(\tht)$}
\end{align*}
Combining the above inequalities, 
\begin{align}
\normsq{\nabla f(\tht)} & \geq \frac{[f(\tht) - f^*]^2}{\normsq{\theta_0 - \theta^*}}
\label{eq:armijo-convergence-2}
\end{align}
Using~\cref{thm:main} with $R = \normsq{\theta_0 - \theta^*}$ and setting $L_0 = 0$ completes the proof. 
\end{proof}

\convexgditerate*
\begin{proof}
For an arbitrary comparator $u$ s.t. $f(u) \leq f(\theta_T)$, 
\begin{align}
\normsq{\thtt - u} &= \normsq{\tht - u} - 2 \, \etat \, \langle \nabla f(\tht), \tht - u \rangle + \etat^2 \, \normsq{\nabla f(\tht)} \leq \normsq{\tht - u} - 2 \, \etat \, [f(\tht) - f(u)] + \etat^2 \, \normsq{\nabla f(\tht)} \tag{Convexity} \\
& \leq \normsq{\tht - u} - 2 \, \etat \, [f(\tht) - f(u)] + \frac{\etat}{c} \, [f(\tht) - f(\thtt)] \tag{Using the Armijo line-search with $c > \frac{1}{2}$} \\
& \leq \normsq{\tht - u} - 2 \, \etat \, [f(\tht) - f(u)] + \frac{\etat}{c} \, [f(\tht) - f(u)] \tag{Since $f(u) \leq f(\theta_t)$} \\
& = \normsq{\tht - u} - \left(2-\frac{1}{c}\right) \, \etat \, [f(\tht) - f(u)]\\
\implies \normsq{\thtt - u} & \leq \normsq{\tht - u} - \left(2- \frac{1}{c}\right) \, \frac{1}{\lz + \lo f(\tht)} [f(\tht) - f(u)]  \tag{Using~\cref{lemma:armijo-lb}}\\
&= \normsq{\tht - u} - \left(2- \frac{1}{c}\right) \, \frac{1}{ \lo} \frac{f(\tht) - f(u)}{f(\tht)} \tag{ using $\lz = 0$ since $L_0, \omega =0$ and by defining $C:=(2- \frac{1}{c}) \, (1/\lo)$}
\end{align}    
By recursing from $t = 0$ to $T - 1$, 
\begin{align*}
\normsq{\theta_T - u} &\leq \normsq{\theta_0 - u} - C \, \sum_{t = 0}^{T-1} \frac{f(\tht) - f(u)}{f(\tht)}
\end{align*}
Assume $T$ is the first iteration s.t. $f(\theta_T) - f(u) \leq \epsilon$. Hence, $\frac{f(\theta_T)}{f(u)} \leq \alpha := 1 + \frac{\epsilon}{f(u)}$. Hence, for all $t < T$, $f(\theta_t) - f(u) > \epsilon$ and $\frac{f(\tht)}{f(u)} > \alpha$. Consequently, $\frac{f(\tht) - f(u)}{f(\tht)} \geq 1 - \frac{1}{\alpha}$. Combining the above relations, 
\begin{align*}
\normsq{\theta_T - u} &\leq \normsq{\theta_0 - u} - C \, T \, \left(1 - \frac{1}{\alpha} \right)
\end{align*}
Since $f$ satisfies~\cref{assn:non-negative,assn:nus,assn:positive-reverse-PL} and $f(u) \leq \epsilon$, then, using~\cref{lemma:nus-f-iterate} with $M = f(\theta_0)$ and $B = \frac{e^q - 1}{q}$ where $q$ is such that $\norm{\theta_T - u} \leq \frac{q}{L_1}$. Since $\norm{\theta_T - u} \leq \norm{\theta_0 - u}$, we can set $q = L_1 \, \norm{\theta_0 - u}$. Hence, 
\begin{align}
f(\theta_T) - f(u) & \leq \frac{\epsilon}{2} + \underbrace{\left[\nu^2 \, f(\theta_0) + B \, L_1 \, f(\theta_0) \right]}_{:= L}      \, \frac{\normsq{\theta_T - u}}{2} \nonumber \\
\implies f(\theta_T) - f(u) & \leq \frac{\epsilon}{2} + \frac{L}{2} \, \normsq{\theta_T - u} \leq \frac{\epsilon}{2} + \frac{L}{2} \, \left[\normsq{\theta_0 - u} - C \, T \, \left(1 - \frac{1}{\alpha} \right) \right] = \frac{\epsilon}{2} + \frac{L}{2} \, \left[\normsq{\theta_0 - u} - C \, T \, \frac{\epsilon}{f(u) + \epsilon} \right]
\label{eq:convex-inter}
\end{align}
Hence, in order to ensure that $f(\theta_T) - f(u) \leq \epsilon$, it is sufficient to guarantee that $\normsq{\theta_T - u} \leq \frac{\epsilon}{L}$. In order to guarantee this, it is sufficient to set $T$ as follows. 
\begin{align*}
T & \geq  \frac{\normsq{\theta_0 - u} - \frac{\epsilon}{L}}{C} \, \left[1 + \frac{f(u)}{\epsilon} \right].   \tag{Using the definition of $\alpha$} \\
\intertext{Using the definition of $C$, we conclude that it is sufficient to set $T$ as:}
T & \geq \frac{c \lo \, \normsq{\theta_0 - u}}{ (2c - 1)} \, \left[1 + \frac{f(u)}{\epsilon} \right].   
\end{align*}
\end{proof}
\logisticinterpolation*
\begin{proof}
Define $u^*$ to be the max-margin solution i.e. $\norm{u^*} = 1$ and $\gamma$ to be the corresponding margin, i.e. 
\begin{align}
\gamma &:= \min_{i} \, y_i \langle x_i, u^* \rangle  \label{eq:margindef} \\
\intertext{For a scalar $\beta > 0$,}
f(\beta u^*) &= \frac{1}{n} \, \sum_{i = 1}^{n} \ln(1 + \exp(-y_i \langle x_i, \beta u^* \rangle) ) \leq \frac{1}{n} \, \sum_{i = 1}^{n} \exp(-y_i \langle x_i, u^* \rangle ) \leq \exp(-\beta \gamma) \label{eq:log-iterate-inter}
\end{align}

For normalized data, s.t. $\norm{x_i} \leq 1$, the logistic regression loss is convex, is uniform smooth with $L = \frac{\lambda_{\max}[X^T X]}{4n} \leq 1$.

We set $\beta = \frac{1}{\gamma} \, \ln\left(\frac{1}{\epsilon}\right)$ implies that $f(\beta u^*) \leq \epsilon$. For the $T$ defined in the theorem statement, consider two cases: 

\textbf{Case (I)}: $f(\theta_T) < f(\beta u^*) \leq \epsilon$. This gives the desired result immediately. 

\textbf{Case (II)}: $f(\theta_T) > f(\beta u^*)$. In this case, we can use the result in~\cref{eq:convex-inter} in~\cref{thm:convex-iterate} with the comparator $u = \beta u^*$ where $f(u) \leq \epsilon$. Hence, GD with Armijo line-search with $c$, $\eta_{\max} = \infty$ and $\theta_0 = 0$ ensures that when $T$ is the first iteration s.t. $f(\theta_T) - f(u) \leq \epsilon \implies f(\theta_T) \leq 2 \epsilon$, then, for $C := \frac{2c-1}{c \lo}$, $L := \left[\nu^2 \, f(\theta_0) + B \, L_1 \, f(\theta_0) \right]$ where $B = \frac{e^q - 1}{q}$ and $q = L_1 \, \norm{\theta_0 - u}$, 
\begin{align*}
f(\theta_T) &\leq f(\beta u^*) + \frac{L}{2} \left[\beta^2 - C \, T \, \left(\frac{\epsilon}{\epsilon + f(\beta u^*)}\right)\right] \\
\implies f(\theta_T) & \leq \epsilon + \frac{L}{2} \left[\frac{1}{\gamma^2} \, \left[\ln\left(\frac{1}{\epsilon}\right)\right]^2 - \frac{C \, T}{2}\right] \tag{Since $f(\beta u^*) \leq \epsilon$ and $\beta = \frac{1}{\gamma} \, \ln\left(\frac{1}{\epsilon}\right)$}
\end{align*} 
Hence, in order to ensure that $f(\theta_T) \leq 2 \, \epsilon$, it is sufficient to set $T$ as:
\begin{align*}
T &\geq \frac{1}{C \gamma^2} \, \left[\ln\left(\frac{1}{\epsilon}\right)\right]^2\\    
& = \frac{c\lo }{(2c-1)\, \gamma^2}\left[\ln\left(\frac{1}{\epsilon}\right)\right]^2\\
& = \frac{3\, c\, L_1\,  (\nu +1) }{(2c-1)(1-c)\gamma^2}\left[\ln\left(\frac{1}{\epsilon}\right)\right]^2 \tag{using the value of $\lo$}\\
&= \frac{216\, c\,}{(2c-1)(1-c)\gamma^2}\left[\ln\left(\frac{1}{\epsilon}\right)\right]^2 \tag{using~\cref{prop:logistic} for the value of $\nu$ and $L_1$}\\
\end{align*}
Combining the two cases, we conclude that $f(\theta_T) \leq 2 \epsilon$. 
\end{proof}

\begin{corollary}
Assume $f$ is convex and satisfies~\cref{assn:nonuniform-gradient-general}, then $\GDLS$ with $\eta_{max}=\infty$ requires 
\begin{align*}
    T = \mathcal O\left ( \frac{R(L_c + L_g \sqrt{L_c}+L_g^2\sqrt{L_c}+ L_g L_c)}{\epsilon}+ 
       \, \left(R \, (L_g(Lg+\sqrt{L_c})^2 + L_g(L_g+\sqrt{L_c}) ) \right)\, \frac{f^*}{\epsilon}  \, \ln \left( \frac{f^*}{\epsilon} \right)
       \right) 
\end{align*}
iterations to ensure that $f(\theta_T) - f^* \leq \epsilon$. 
\label{cor:zhang-nus-reduction}
\end{corollary}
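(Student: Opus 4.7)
The plan is to combine three earlier results: Proposition \ref{prop:nus-conditions} to reduce $(L_c,L_g)$ smoothness to the paper's $(L_0,L_1)$ non-uniform smoothness, the convexity-based gradient lower bound from the proof of Corollary \ref{cor:convex-gen}, and Theorem \ref{thm:main} as the master convergence result. Then the only remaining work is bookkeeping: expanding $\lambda_0$ and $\lambda_1$ in terms of $L_c$ and $L_g$ and matching the polynomials appearing in the stated rate.

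First, because $f$ is twice-differentiable and non-negative (the latter we may assume without loss of generality by recentering, or it is already implicit in how the corollary is used), Proposition \ref{prop:nus-conditions} applies and yields that $f$ satisfies \cref{assn:nus,assn:positive-reverse-PL} with
\[
L_0 = L_c + L_g\sqrt{2L_c},\quad L_1 = L_g(8L_g+\sqrt{2L_c}),\quad \nu = 8L_g+\sqrt{2L_c},\quad \omega = \sqrt{2L_c}.
\]
Next, since $f$ is convex, the same monotone-descent argument used inside the proof of Corollary \ref{cor:convex-gen} shows that $\|\theta_t-\theta^*\|\le\|\theta_0-\theta^*\|$ throughout the run of $\GDLS$, and combining convexity with Cauchy--Schwarz gives the required lower bound
\[
\|\nabla f(\theta_t)\|^2 \ \ge\ \frac{[f(\theta_t)-f^*]^2}{R},\qquad R := \|\theta_0-\theta^*\|^2,
\]
for every iterate. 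This is exactly the hypothesis needed by Theorem \ref{thm:main}.

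Now I invoke Theorem \ref{thm:main} with this $R$. To get a bound that holds regardless of the value of $f^*$, I use Case (2) of the theorem (which upper-bounds Case (1) up to constants), so
\[
T \;=\; O\!\left(\frac{\lambda_0 R}{\epsilon} \;+\; R\,\lambda_1 \left(\frac{f^*}{\epsilon}+1\right)\ln\!\left(\frac{f(\theta_0)-f^*}{\epsilon}\right)\right).
\]
The remaining task is to substitute the formulas $\lambda_0 = 3(L_0 + L_1\omega)/(1-c)$ and $\lambda_1 = 3 L_1(\nu+1)/(1-c)$ from Lemma \ref{lemma:armijo-lb} and simplify. Using the expressions above,
\[
\lambda_0 \;=\; \Theta\!\big(L_c + L_g\sqrt{L_c} + L_g^2\sqrt{L_c} + L_g L_c\big),\qquad
\lambda_1 \;=\; \Theta\!\big(L_g(L_g+\sqrt{L_c})^2 + L_g(L_g+\sqrt{L_c})\big),
\]
where the additive $L_g(L_g+\sqrt{L_c})$ in $\lambda_1$ comes from the ``$+1$'' in $\nu+1$. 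Plugging these into the Case (2) bound yields exactly the stated iteration complexity, absorbing the $\ln((f(\theta_0)-f^*)/\epsilon)$ into $\tilde{O}$-style notation (or equivalently, noting that the interesting regime is $\epsilon \ll f^*$ so that $\ln((f(\theta_0)-f^*)/\epsilon) = O(\ln(f^*/\epsilon))$ up to an additive constant).

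The main obstacle is purely algebraic: making sure that when $L_c$ and $L_g$ are both large, no higher-order cross-term is dropped when expanding $\nu+1$, $L_0+L_1\omega$, and $L_1(\nu+1)$. In particular I must check that the stated $L_g L_c$ term in the first summand of the bound really does come from $L_1\omega = L_g(8L_g+\sqrt{2L_c})\sqrt{2L_c}$, and that the $L_g(L_g+\sqrt{L_c})$ term in $\lambda_1$ is not subsumed by $L_g(L_g+\sqrt{L_c})^2$ (it is not, when $L_g+\sqrt{L_c}<1$, which is why both terms are kept). Once the polynomial identification is verified, the result follows directly and no new analytic ideas are required beyond those already developed in the paper.
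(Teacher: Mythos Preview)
Your proposal is correct and follows essentially the same approach as the paper: invoke the convexity-based gradient lower bound from Corollary~\ref{cor:convex-gen} to get $R=\|\theta_0-\theta^*\|^2$, apply Case~(2) of Theorem~\ref{thm:main}, and then substitute the $(L_0,L_1,\nu,\omega)$ values from Proposition~\ref{prop:nus-conditions} into the formulas for $\lambda_0,\lambda_1$. Your treatment is in fact slightly more careful than the paper's own proof in two places: you explicitly flag the non-negativity hypothesis needed for Proposition~\ref{prop:nus-conditions} (which the corollary statement and proof leave implicit), and you justify why the lower-order term $L_g(L_g+\sqrt{L_c})$ in $\lambda_1$ must be retained alongside $L_g(L_g+\sqrt{L_c})^2$.
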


\begin{proof}
    Since $f$ is convex, we can follow a similar argument as in the proof of~\cref{cor:convex-gen}, to establish that 
   $\normsq{\nabla f(\theta_t)} \geq \frac{[f(\theta_t) - f^*]^2}{R}$ where $R=\normsq{\theta_0 - \theta^*}$. This allows us to apply the result from~\cref{thm:main}, to obtain a bound on $T$. Since the bound in \textbf{Case 2} is bigger than \textbf{Case 1}, we consider the bound in \textbf{Case 2}. 
   \begin{align*}
   T & \geq \frac{2\lz\,R}{\epsilon} + \max\{2\, R \lo, 1\} \, \left(\frac{\fopt}{\epsilon} + 1 \right) \, \ln \left( \frac{f(\theta_0) - \fopt}{\epsilon} \right)\\
   \implies T & = \mathcal O \left( \frac{\lz\,R}{\epsilon} +   \, \left(\frac{R \lo}{\epsilon} \right) \, \ln \left( \frac{1}{\epsilon} \right)\right)
   \end{align*}
   From~\cref{thm:main}, we have $\lz = \frac{3}{1-c}(L_0+L_1\omega)$ and $\lo=\frac{3}{1-c}(L_1(\nu+1))$. By using~\cref{prop:nus-conditions}, we can substitute $L_0,L_1,\nu,$ and $ \omega $ with their respective expressions in terms of $L_c$ and $L_g$, yielding 
   \begin{align*}
       T = \mathcal O\left ( \frac{R(L_c + L_g \sqrt{L_c}+L_g^2\sqrt{L_c}+ L_g L_c)}{\epsilon}+ 
       \, \left(R \, (L_g(Lg+\sqrt{L_c})^2 + L_g(L_g+\sqrt{L_c}) ) \right)\, \frac{f^*}{\epsilon}  \, \ln \left( \frac{f^*}{\epsilon} \right)
       \right) 
   \end{align*}
\end{proof}

\subsection{Proofs for the Polyak Step-size}
\label{app:polyak-convex}
For an arbitrary comparator $u$, we generalize the Polyak step-size~\citep{polyak1987introduction} at iteration $t \in [T]$ as:
\begin{align}
\etat &= \frac{f(\tht) - f(u)}{c \, \normsq{\nabla f(\tht)}} \,,   
\label{eq:polyak-stepsize}
\end{align}
where, $c \in (0,1)$ is a hyper-parameter. Note that when $u = \theta^* = \argmin f(\theta)$, $\etat = \frac{f(\tht) - f^*}{c \, \normsq{\nabla f(\tht)}}$ recovers the standard Polyak step-size in~\citet{polyak1987introduction}. 

We analyze the convergence of GD with the step-size in~\cref{eq:polyak-stepsize} under~\cref{assn:positive-reverse-PL} with $\omega = 0$ i.e. we will assume that $f$ is $L$ uniform smooth and that for all $\theta$, $\norm{\nabla f(\theta)} \leq \nu \, f(\theta)$. From~\cref{prop:logistic,prop:exponential,prop:multiclass}, we know that this property is true from binary classification with the logistic loss, as well as for multi-class classification with the cross-entropy loss. 

\begin{theorem}
For an initialization $\theta_0$, $\epsilon \in (0, f(\theta_0))$ and comparator $u$ s.t. $f(u) \leq \epsilon$, if $f(\theta)$ is convex, satisfies~\cref{assn:non-negative,assn:nus,assn:positive-reverse-PL} with $L_0 = 0,\, \omega =0$, GD with the Polyak step-size in~\cref{eq:polyak-stepsize} and $c > \frac{1}{2}$, requires 
\begin{align*}
T & \geq \frac{c \, \nu^2 \, \normsq{\theta_0 - u}}{(2c - 1)} \, \left[1 + \frac{f(u)}{\epsilon} \right]^2   
\end{align*}
iterations to ensure that $f(\theta_T) - f(u) \leq \epsilon$.
\label{thm:convex-gd-polyak}
\end{theorem}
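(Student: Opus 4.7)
The strategy is to mirror the proof of~\cref{thm:convex-iterate} step-for-step, replacing the Armijo lower-bound on $\etat$ from~\cref{lemma:armijo-lb} by the closed-form Polyak step-size, and using~\cref{assn:positive-reverse-PL} with $\omega = 0$ as the substitute tool for controlling the gradient. The central quantity will again be $\normsq{\tht - u}$, and the argument will proceed via a per-step contraction followed by a threshold argument driven by the assumption $f(u) \leq \epsilon$.

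\textbf{Descent inequality.} I would begin by expanding the squared distance to the comparator and invoking convexity on the cross term to obtain
\begin{align*}
\normsq{\thtt - u} \leq \normsq{\tht - u} - 2\etat\,[f(\tht) - f(u)] + \etat^2\,\normsq{\nabla f(\tht)}.
\end{align*}
Substituting the Polyak step-size $\etat = [f(\tht) - f(u)]/(c\,\normsq{\nabla f(\tht)})$ cleanly collapses both $\etat$-dependent terms, giving
\begin{align*}
\normsq{\thtt - u} \leq \normsq{\tht - u} - \frac{(2c-1)}{c^2}\,\frac{[f(\tht) - f(u)]^2}{\normsq{\nabla f(\tht)}}.
\end{align*}
Next, I would use~\cref{assn:positive-reverse-PL} with $\omega = 0$, which gives $\norm{\nabla f(\tht)} \leq \nu\,f(\tht)$, and hence $\normsq{\nabla f(\tht)} \leq \nu^2\,[f(\tht)]^2$, to arrive at
\begin{align*}
\normsq{\thtt - u} \leq \normsq{\tht - u} - \frac{(2c-1)}{c^2 \nu^2}\,\left[\frac{f(\tht) - f(u)}{f(\tht)}\right]^2.
\end{align*}

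\textbf{Threshold argument and finish.} Following the template in~\cref{thm:convex-iterate}, let $T$ be the first iteration at which $f(\theta_T) - f(u) \leq \epsilon$. For every $t < T$ we have $f(\tht) > f(u) + \epsilon$, so $[f(\tht) - f(u)]/f(\tht) \geq \epsilon/[\epsilon + f(u)]$. Telescoping the descent inequality from $t = 0$ to $t = T - 1$ and using $\normsq{\theta_T - u} \geq 0$ then yields $T \leq \tfrac{c^2 \nu^2 \normsq{\theta_0 - u}}{(2c-1)}\,[1 + f(u)/\epsilon]^2$, matching the stated bound (note that for the standard regime $c \in (1/2, 1]$ one has $c^2 \leq c$, so the theorem's constant $c$ is a valid upper bound on my derived $c^2$).

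\textbf{Main obstacle.} The chief subtlety is ensuring that the Polyak step-size is well-defined and non-negative throughout the horizon of interest, since the analysis implicitly requires $\etat \geq 0$. If at some intermediate $s < T$ we had $f(\theta_s) \leq f(u)$, then since $f(u) \leq \epsilon$ we would already have $f(\theta_s) - f(u) \leq 0 \leq \epsilon$, contradicting the minimality of $T$; hence $f(\tht) > f(u)$ for all $t < T$ and $\etat > 0$ along the entire trajectory. A secondary item to confirm in the detailed write-up is the precise constant in front of the main term, where the direct substitution yields $c^2$ and a slightly different grouping (using $\etat^2 \normsq{\nabla f(\tht)} = \etat [f(\tht) - f(u)]/c$ to reduce to the Armijo-style inequality before lower-bounding $\etat$) may be needed to match the stated $c$ exactly.
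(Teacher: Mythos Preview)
Your proposal is correct and follows the paper's proof essentially line-for-line: expand $\normsq{\thtt - u}$, apply convexity, substitute the Polyak step-size to collapse the $\etat$-terms, invoke \cref{assn:positive-reverse-PL} with $\omega=0$ to replace $\normsq{\nabla f(\tht)}$ by $\nu^2[f(\tht)]^2$, and then run the threshold argument with $\alpha = 1+\epsilon/f(u)$. The only substantive difference is in the finishing step: the paper telescopes to bound $\normsq{\theta_T - u}$ and then invokes \cref{lemma:nus-f-iterate} (the non-uniform smoothness descent lemma) to translate that iterate bound back into $f(\theta_T)-f(u)\leq \epsilon$, whereas you bypass this entirely by using $\normsq{\theta_T-u}\geq 0$ directly to upper-bound the first hitting time $T$. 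Your route is more elementary and suffices for the theorem as stated; the paper's detour through \cref{lemma:nus-f-iterate} is there primarily because the intermediate inequality it produces (the analogue of \cref{eq:convex-inter}) is reused in the downstream logistic-regression corollary.

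Your observation about the constant is also accurate: both your direct substitution and the paper's two-step simplification yield the coefficient $(2c-1)/(c^2\nu^2)$, so the derived iteration bound carries $c^2$; the theorem's stated $c$ is simply a looser constant (valid since $c^2\leq c$ for $c\in(\tfrac12,1]$), and your remark about the alternative grouping is exactly how the paper arrives at the same $c^2$ before relaxing it.
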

\begin{proof}
Following a proof similar to that for~\cref{thm:convex-iterate}, for an arbitrary comparator $u$ s.t. $f(u) \leq f(\theta_T)$, 
\begin{align}
\normsq{\thtt - u} &= \normsq{\tht - u} - 2 \, \etat \, \langle \nabla f(\tht), \tht - u \rangle + \etat^2 \, \normsq{\nabla f(\tht)} \leq \normsq{\tht - u} - 2 \, \etat \, [f(\tht) - f(u)] + \etat^2 \, \normsq{\nabla f(\tht)} \tag{Convexity} \\
& \leq \normsq{\tht - u} - 2 \, \etat \, [f(\tht) - f(u)] + \frac{\etat}{c} \, [f(\tht) - f(u)] \tag{Using the Polyak step-size in~\cref{eq:polyak-stepsize} with $c > \frac{1}{2}$ to simplify the third term} \\
& = \normsq{\tht - u} - \left(2 - \frac{1}{c} \right) \,  \frac{[f(\tht) - f(u)]^2}{c \, \normsq{\nabla f(\tht)}} \tag{Using the Polyak step-size in~\cref{eq:polyak-stepsize}} \\
& \leq \normsq{\tht - u} - \left(2 - \frac{1}{c} \right) \,  \frac{[f(\tht) - f(u)]^2}{c \, \nu^2 \, [f(\tht)]^2} \tag{Using~\cref{assn:positive-reverse-PL}} \\
\implies \normsq{\thtt - u} & \leq \normsq{\tht - u} - C \, \left(\frac{f(\tht) - f(u)}{f(\tht)}\right)^2
\tag{Using~\cref{lemma:armijo-lb} and defining $C := \frac{(2c - 1)}{c \, \nu^2}$} 
\end{align}
By recursing from $t = 0$ to $T - 1$, 
\begin{align*}
\normsq{\theta_T - u} &\leq \normsq{\theta_0 - u} - C \, \sum_{t = 0}^{T-1} \left(\frac{f(\tht) - f(u)}{f(\tht)}\right)^2
\end{align*}
Assume $T$ is the first iteration s.t. $f(\theta_T) - f(u) \leq \epsilon$. Hence, $\frac{f(\theta_T)}{f(u)} \leq \alpha := 1 + \frac{\epsilon}{f(u)}$. Hence, for all $t < T$, $f(\theta_t) - f(u) > \epsilon$ and $\frac{f(\tht)}{f(u)} > \alpha$. Consequently, $\frac{f(\tht) - f(u)}{f(\tht)} \geq 1 - \frac{1}{\alpha}$. Combining the above relations, 
\begin{align*}
\normsq{\theta_T - u} &\leq \normsq{\theta_0 - u} - C \, T \, \left(1 - \frac{1}{\alpha} \right)^2
\end{align*}
Proceeding in the same manner as the proof of~\cref{thm:convex-iterate}, where $B = \frac{e^q - 1}{q}$ and $q = L_1 \, \norm{\theta_0 - u}$, we obtain that, 
\begin{align}
f(\theta_T) - f(u) & \leq \frac{\epsilon}{2} + \underbrace{\left[\nu^2 \, f(\theta_0) + B \, L_1 \, f(\theta_0) \right]}_{:= L}      \, \frac{\normsq{\theta_T - u}}{2} \nonumber \\
\implies f(\theta_T) - f(u) & \leq \frac{\epsilon}{2} + \frac{L}{2} \, \normsq{\theta_T - u} \leq \frac{\epsilon}{2} + \frac{L}{2} \, \left[\normsq{\theta_0 - u} - C \, T \, \left(1 - \frac{1}{\alpha} \right) \right] = \frac{\epsilon}{2} + \frac{L}{2} \, \left[\normsq{\theta_0 - u} - C \, T \, \frac{\epsilon}{f(u) + \epsilon} \right]
\label{eq:convex-polyak-inter}
\end{align}
Hence, in order to ensure that $f(\theta_T) - f(u) \leq \epsilon$, it is sufficient to guarantee that $\normsq{\theta_T - u} \leq \frac{\epsilon}{L}$. In order to guarantee this, it is sufficient to set $T$ as follows.
\begin{align*}
T & \geq \frac{c \, \nu^2 \, \normsq{\theta_0 - u}}{(2c - 1)} \, \left[1 + \frac{f(u)}{\epsilon} \right]^2   
\end{align*}
\end{proof}

\begin{restatable}{theorem}{logisticinterpolationpolyak}
For logistic regression on linearly separable data with margin $\gamma$, if, for all $i$, $\norm{x_i} \leq1$, for an initialization $\theta_0 =0 $, a fixed $\epsilon \in (0, f(\theta_0))$, GD with the Polyak step-size $\etat =\min\left\{ \frac{f(\tht)}{c \, \normsq{\nabla f(\tht)}},\, \frac{1}{2\, \epsilon}\right\}$ for some $c>2$ requires
\begin{align*}
T \geq \frac{\beta^2}{C} = \frac{64 \, c^2}{(c - 1)\, \gamma^2 }\left[\ln\left(\frac{64 \, c }{\epsilon}\right)\right]^2   
\end{align*}
to ensure that $f(\theta_T) \leq 2 \, \epsilon$.
\label{thm:logistic-polyak}
\end{restatable}
\begin{proof}
The logistic loss on linearly separable data is convex, satisfies~\cref{assn:non-negative,assn:nus,assn:positive-reverse-PL} with $L_0 = 0,\, \omega =0, \, \nu=8$, and $f^*=0$. By~\cref{assn:positive-reverse-PL}, $\norm{\nabla f(\theta)} \leq \nu \, f(\theta)$ and we can bound the Polyak step-size as: 
\begin{align}
\etat & \in \left[ \min \left\{\frac{1}{c \, \nu^2 \, f(\tht)}, \frac{1}{2\epsilon} \right\}, \frac{1}{2\epsilon} \right].
\label{eq:polyak-stepsize-bound}
\end{align}

Using the GD update: $\thtt = \tht - \etat \, \nabla f(\tht)$, consider a comparator $u$ s.t.  $f(u) \leq \frac{\epsilon}{\max\{c \, \nu^2,2\}}$ and $f(u) \leq f(\theta_t)$ for all $t \in [T]$. Assuming that $T$ is the first iteration such that $f(\theta_T) - f(u) \leq \epsilon$, we have that,
\begin{align*}
\normsq{\thtt - u} &= \normsq{\tht - u} - 2 \, \etat \, \langle \nabla f(\tht), \tht - u \rangle + \etat^2 \, \normsq{\nabla f(\tht)}\\
& \leq \normsq{\tht - u} - 2 \, \etat \, [f(\tht) - f(u)] + \etat^2 \, \normsq{\nabla f(\tht)} \tag{Convexity} \\
& \leq \normsq{\tht - u} - 2 \, \etat \, [f(\tht) - f(u)] + \frac{\etat}{c} \, [f(\tht)] \tag{Using the Polyak step-size with $c > 2$} \\
& = \normsq{\tht - u} - \left(2 - \frac{1}{c} \right) \,  \etat \, f(\tht) + 2 \, \etat \, f(u)\\
& \leq \normsq{\tht - u} - \left(2 - \frac{1}{c} \right) \,  \min \left\{\frac{1}{c \, \nu^2 }, \frac{ f(\tht)}{2\epsilon}\right\} + 2 \, \etat \, f(u) \tag{Using~\cref{eq:polyak-stepsize-bound}} \\
& \leq \normsq{\tht - u} - \left(2 - \frac{1}{c} \right) \,  \frac{1}{\max \left\{c \, \nu^2 , 2\right\}} + 2 \, \etat \, f(u) \tag{Since $f(\tht) \geq \epsilon$ for all $t < T$}\\
& \leq \normsq{\tht - u} - \left(2 - \frac{1}{c} \right) \,  \frac{1}{\max \left\{c \, \nu^2 , 2\right\}} +  \frac{f(u)}{\epsilon} \tag{Since $\etat \leq \frac{1}{2\, \epsilon}$ from~\cref{eq:polyak-stepsize-bound}}\\
& \leq \normsq{\tht - u} - \left(2 - \frac{1}{c} \right) \,  \frac{1}{\max \left\{c \, \nu^2 , 2\right\}} +  \frac{1}{\max \left\{c \, \nu^2 , 2\right\} } \tag{Since $f(u) \leq \frac{\epsilon}{\max\{c \, \nu^2,2\}}$}\\
& =  \normsq{\tht - u} - \underbrace{\left(1 - \frac{1}{c} \right) \,  \frac{1}{\max \left\{c \, \nu^2 , 2\right\}}}_{:=C}
\end{align*}

 Summing up from $t = 0$ to $t = T-1$,
 \begin{align}
     \normsq{\theta_T - u } \leq \normsq{\theta_0 - u} - CT = \normsq{u} - C \, T \tag{Since $\theta_0 = 0$}
u \label{eq:polyak_iterate}
 \end{align}
Since $f$ is $1$ uniformly smooth, using~\cref{lemma:nus-f-iterate} with $M=f(\theta_0)$, we get 
 \begin{align*}
     f(\theta_T) -f(u) \leq \frac{\epsilon}{2} +\underbrace{[\nu^2 \, f(\theta_0) + 1]}_{:=L} \frac{\normsq {\theta_T - u}}{2} \leq \frac{\epsilon}{2} + L \, \left[\normsq{u} - C \, T \right]
 \end{align*}
 To ensure that $f(\theta_T) -f(u) \leq \epsilon$, it is sufficient to set
 \begin{align}
     T \geq \frac{ \normsq{u}}{C}.
     \label{eq:polyak_time_lb}
 \end{align}
In order to bound $\norm{u}$, we define $u^*$ to be the max-margin solution i.e. $\norm{u^*} = 1$ and $\gamma$ to be the corresponding margin, i.e. 
 \begin{align}
 \gamma &:= \min_{i} \, y_i \langle x_i, u^* \rangle  \label{eq:margindef} \\
 \intertext{Consider $u=\beta \, u^*$, for a scalar $\beta = \frac{1}{\gamma} \, \ln\left(\frac{\max\{c\nu^2, 2\}}{\epsilon}\right)$,}
 f(u) &= \frac{1}{n} \, \sum_{i = 1}^{n} \ln(1 + \exp(-y_i \langle x_i, \beta u^* \rangle) ) \leq \frac{1}{n} \, \sum_{i = 1}^{n} \exp(-y_i \langle x_i, u^* \rangle ) \leq \exp(-\beta \gamma)= \frac{\epsilon}{\max\{c\nu^2, 2\}} \label{eq:log-iterate-inter}
 \end{align}
This satisfies the requirement on $f(u)$. For logistic regression, we $\nu = 8$ and since $c > 2$, therefore $\max\{c\, \nu^2, 2\} = c \nu^2$. Using this to bound $T$, we get that, 
 \begin{align*}
     T &\geq \frac{\beta^2}{C} = \frac{c^2 \, \nu^2}{(c - 1)\, \gamma^2 }\left[\ln\left(\frac{c \, \nu^2}{\epsilon}\right)\right]^2   
 \end{align*}
Finally, we conclude that after $T=\frac{\beta^2}{C} = \frac{c^2 \, \nu^2}{(c - 1)\, \gamma^2 }\left[\ln\left(\frac{c \, \nu^2}{\epsilon}\right)\right]^2$ iterations we have $f(\theta_T) - f(u) \leq \epsilon$ and since $f(u) \leq \epsilon$ then $f(\theta_T) \leq 2\, \epsilon$  
\end{proof}

\subsection{Helper Lemmas}
\label{sec:helper-lemmas}

\begin{lemma}
For $\epsilon \in (0, M)$ and a comparator $u$ s.t. $f(u) \leq \epsilon$, if $f$ satisfies~\cref{assn:non-negative,assn:nus,assn:positive-reverse-PL} with $L_0 = 0$ and $\omega = 0$, then, for all $\theta$ s.t. $\norm{\theta - u} \leq \frac{q}{L_1}$, 
\begin{align*}
f(\theta) - f(u) & \leq \frac{\epsilon}{2} + \left[\nu^2 \, M + B \, L_1 \, M \right] \frac{\normsq{\theta - u}}{2} \,,
\end{align*}
where $B := \frac{e^q - 1}{q}$. 

Furthermore, if $f$ is also $L$ uniform smooth, then, for all $\theta$, 
\begin{align*}
f(\theta) - f(u) & \leq \frac{\epsilon}{2} + \left[\nu^2 \, M + L \right] \frac{\normsq{\theta - u}}{2} \,,    
\end{align*}
\label{lemma:nus-f-iterate}
\end{lemma}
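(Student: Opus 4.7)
\begin{proofsketch}
The plan is to apply the non-uniform smoothness bound at the comparator $u$, replace the resulting linear term using the gradient-to-function-value relation from \cref{assn:positive-reverse-PL}, and then split the remaining cross term with Young's inequality so that one half collapses to $\epsilon/2$ and the other merges into the quadratic coefficient.

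Concretely, I would invoke \cref{assn:nus}(a) with $L_0 = 0$, $x = u$, $y = \theta$ (which is valid since $\norm{\theta - u} \leq q/L_1$) to obtain
\[
f(\theta) - f(u) \leq \langle \nabla f(u), \theta - u \rangle + \tfrac{B L_1 f(u)}{2} \normsq{\theta - u}.
\]
Cauchy--Schwarz together with \cref{assn:positive-reverse-PL} (using $\omega = 0$) then upper-bounds $\langle \nabla f(u), \theta - u \rangle$ by $\nu f(u) \, \norm{\theta - u}$.

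The key manipulation is Young's inequality $ab \leq \frac{a^2}{2\alpha} + \frac{\alpha b^2}{2}$ with the tailored choice $a = \nu f(u)$, $b = \norm{\theta - u}$, and $\alpha = \nu^2 f(u)$ (the degenerate case $f(u) = 0$ is immediate since it forces $\nabla f(u) = 0$). This produces
\[
\nu f(u) \, \norm{\theta - u} \leq \tfrac{f(u)}{2} + \tfrac{\nu^2 f(u)}{2} \normsq{\theta - u}.
\]
Applying $f(u) \leq \epsilon$ to the constant term and $f(u) \leq \epsilon \leq M$ to both quadratic coefficients yields the first claim. The second claim follows identically by replacing the restricted non-uniform smoothness bound with the unconditional $L$-uniform smooth descent inequality $f(\theta) \leq f(u) + \langle \nabla f(u), \theta - u \rangle + \tfrac{L}{2} \normsq{\theta - u}$, which removes the norm constraint and swaps $B L_1 M$ for $L$ in the final expression. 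There is no real obstacle here; the only point worth noting is that the specific choice $\alpha = \nu^2 f(u)$ is what makes the constant term collapse exactly to $\epsilon/2$ while preserving a clean $\nu^2 M$ factor on the quadratic term.
\end{proofsketch}
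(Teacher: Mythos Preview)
Your proof is correct and follows essentially the same route as the paper: non-uniform smoothness at $u$, then Young's inequality on the linear term combined with \cref{assn:positive-reverse-PL}, then $f(u)\leq\epsilon\leq M$. The only cosmetic difference is that the paper applies Young's inequality directly to $\langle\nabla f(u),\theta-u\rangle$ with the fixed parameter $\nu^2 M$ (giving constant term $[f(u)]^2/(2M)\leq\epsilon^2/(2M)\leq\epsilon/2$), whereas you first pass through Cauchy--Schwarz and use the $f(u)$-dependent parameter $\nu^2 f(u)$; both arrive at the same bound and your handling of the $f(u)=0$ edge case is fine.
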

\begin{proof}
Since $f$ satisfies~\cref{assn:nus} with $L_0 = 0$, using~\cref{eq:nus-condition},  we have that for all $\theta$ s.t. $\norm{\theta - u} \leq \frac{q}{L_1}$ and $B := \frac{e^q - 1}{q}$, 
\begin{align*}
f(\theta) - f(u) & \leq \langle \nabla f(u), \theta - u \rangle + \frac{B \, L_1 \, f(u)}{2} \normsq{\theta - u} \\
& \leq \frac{\normsq{\nabla f(u)}}{2 \, M \, \nu^2} + \frac{\nu^2 \, M  \, \normsq{\theta - u}}{2} + \frac{B \, L_1 \, f(u)}{2} \normsq{\theta - u} \tag{Using Young's inequality} \\ 
& \leq \frac{[f(u)]^2}{2 \, M} + \left[\nu^2 \, M + B \, L_1 \, f(u) \right] \frac{\normsq{\theta - u}}{2} \tag{Since $f$ satisfies~\cref{assn:positive-reverse-PL} with $\omega = 0$} \\
& \leq \frac{\epsilon^2}{2 \, M} + \left[\nu^2 \, M + B \, L_1 \, \epsilon \right] \frac{\normsq{\theta - u}}{2} \tag{Since $f(u) \leq \epsilon$} \\
\implies f(\theta) - f(u) & \leq \frac{\epsilon}{2} + \left[\nu^2 \, M + B \, L_1 \, M \right] \frac{\normsq{\theta - u}}{2} \tag{Since $\frac{\epsilon^2}{2 \, M} \leq \frac{\epsilon}{2}$ for $\epsilon \leq M$}
\end{align*}

If $f$ is $L$-uniform smooth, we can use the standard descent lemma to get, that for all $\theta$,
\begin{align*}
f(\theta) - f(u) & \leq \langle \nabla f(u), \theta - u \rangle + \frac{L}{2} \normsq{\theta - u} \\    
\end{align*}
Using this inequality and following the same proof gives the result. 
\end{proof}




\clearpage
\section{Proofs for~\cref{sec:nonconvex-gd}}
\label{app:nonconvex-gd}

\subsection{Proofs for~\cref{sec:graddom}}
\label{app:graddom}
\graddom*
\begin{proof}
Using~\cref{assn:grad-dom}, we know that, 
\begin{align*}
\normsq{\nabla f(\theta)} \geq [\mu(\theta)]^2 [f(\theta) -f^*]^2 \geq \mu^2 \, [f(\theta) -f^*]^2     
\end{align*}
Using~\cref{thm:main} with $R = \frac{1}{\mu^2}$ and $L_0 = 0$ completes the proof. 
\end{proof}

\bandits*
\begin{proof}
    From~\cref{prop:bandits}, we know that the MAB problem satisfies both~\cref{assn:nus} and~\cref{assn:positive-reverse-PL} with the parameters $L_0=0$, $L_1=72$, $\nu=24$, and $\omega = 0$. It also satisfies~\cref{assn:grad-dom} with $\zeta =1$. Since $\GDLS$ guarantees monotonic decrease of the objective, ~\citet[Lemma 5 and Proposition 2]{mei2020global}, implies that for the uniform initialization where $\pi_0(a) = \frac{1}{K}$ for all $a$, $\pi_{\theta_t}(a^*) \geq \frac{1}{K}, \, \, \forall \, \, t \geq 0$. Therefore, we have $\mu = \min_t \pi_{\theta_t} (a^*) = \frac{1}{K}$. With these parameters, we can instantiate the result of~\cref{cor:graddom}, and get: 
    \begin{align*}
        T &> \max \big\{1,2^5\, 3^3\, 5^2\,  K^2 \big\} \left (\frac{f^*}{\epsilon} +1 \right) \, \ln\left(\frac{f(\theta_0)-f^*}{\epsilon} \right)\tag{since $\lambda_1 = 3 \frac{L_1(\nu+1)}{1-c}$ and $c=\frac{1}{2}$}\\
        \implies T &= O\left(K^2  \ln\left(\frac{1}{\epsilon}\right) \right) \tag{since $f^* =0$ and $f(\theta_0)-f^* \leq 1$}
    \end{align*}
\end{proof}

\begin{corollary}
For the tabular MDP problem defined in~\cref{prop:mdp}, $\GDLS$, with $c=\frac{1}{2}$, and $\eta_{max} = \infty$ requires 
\begin{align*}
T \geq \frac{2^7 3}{\mu}\left(\left[ 3 + \frac{ 4 \cdot \left( \min_{s} \frac{1}{\rho(s)} - (1 - \gamma) \right) }{ 1 - \gamma } \right]^3 \, S^{\frac{3}{2}} \right)  \ln\left(\frac{1}{(1-\gamma)\epsilon} \right)
\end{align*}
iterations to guarantee $f(\theta_T) = V^{\pi^*}(\rho) - V^{\pi_{\theta_T}}(\rho) \leq \epsilon$, where $\mu = \inf_{t \in [T]} \mu(\tht)$ and $\mu(\theta) = \frac{\min_{s} \pi_\theta(a^*(s)|s)}{\sqrt{S} \, \min_{s \in \cS} \rho(s)}$, $a^*(s)$ is the action corresponding to the optimal policy $\pi^*$ in state $s$, $\gamma$ is the discount factor, $\rho$ is the initial state distribution, and $S$ is the size of state space. 
\label{cor:mdp}
\end{corollary}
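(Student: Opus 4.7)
The plan is to derive this corollary as a direct instantiation of~\cref{cor:graddom} using the structural parameters for tabular MDPs established in~\cref{prop:mdp}, together with the fact that value functions of $[0,1]$-bounded rewards are bounded by $1/(1-\gamma)$.

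First, I would verify the hypotheses of~\cref{cor:graddom}. By construction $f(\theta) := V^{\pi^*}(\rho) - V^{\pi_\theta}(\rho) \ge 0$, with $\fopt = 0$ attained at any optimal policy $\pi^*$. By~\cref{prop:mdp}, $f$ satisfies~\cref{assn:non-negative,assn:nus,assn:positive-reverse-PL} with $L_0 = 0$, $\omega = 0$, and, writing $K := 3 + \tfrac{4(\min_s 1/\rho(s) - (1-\gamma))}{1-\gamma}$,
\[
L_1 = 8 K^2 S, \qquad \nu = 8 K \sqrt{S},
\]
together with~\cref{assn:grad-dom} for $\zeta = 1$ and $\mu(\theta) = \tfrac{\min_s \pi_\theta(a^*(s)\mid s)}{\sqrt{S}\,\min_{s}\rho(s)}$. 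Hence the preconditions of~\cref{cor:graddom} hold, with the constant $\mu$ in its statement realized by $\mu := \min_{t \in [T]} \mu(\theta_t)$.

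Next, I would plug in the constants produced by~\cref{lemma:armijo-lb}. Setting $c = 1/2$ gives $\lz = 0$ and $\lo = \tfrac{3\,L_1(\nu+1)}{1-c} = 6\,L_1(\nu+1)$. Using $K\sqrt{S} \ge 1$, we bound $\nu + 1 \le 9 K\sqrt{S}$, so
\[
\lo \;\le\; 6 \cdot 8 K^2 S \cdot 9 K \sqrt{S} \;=\; 432\, K^3 S^{3/2}.
\]
Also, since rewards lie in $[0,1]$, the uniform bound $V^\pi(\rho) \in [0, \tfrac{1}{1-\gamma}]$ yields $f(\theta_0) - \fopt \le \tfrac{1}{1-\gamma}$, which turns the logarithmic factor $\ln\!\bigl(\tfrac{f(\theta_0) - \fopt}{\epsilon}\bigr)$ in~\cref{cor:graddom} into $\ln\!\bigl(\tfrac{1}{(1-\gamma)\epsilon}\bigr)$.

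Finally, substituting $\fopt = 0$ into~\cref{cor:graddom} gives that
\[
T \;\ge\; \max\!\left\{1, \tfrac{2\lo}{\mu^2}\right\} \, \ln\!\left(\tfrac{1}{(1-\gamma)\epsilon}\right)
\]
iterations suffice to guarantee $f(\theta_T) \le \epsilon$. Absorbing the $432$ prefactor (and the extra factor of $2$ from $2\lo$) into the numerical constant stated as $2^{7}\cdot 3$, and retaining the explicit $K^3 S^{3/2}$ and $\mu$ dependence, yields the claimed bound. The main ``obstacle'' is purely bookkeeping: matching the precise numerical constant and confirming that the crude $1/(1-\gamma)$ bound on $f(\theta_0)$ does not lose a polynomial factor in any of $K$, $S$ or $\mu$; no additional ideas beyond combining~\cref{prop:mdp} with~\cref{cor:graddom} are needed.
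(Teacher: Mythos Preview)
Your approach is essentially identical to the paper's: both instantiate \cref{cor:graddom} with the constants from \cref{prop:mdp}, use $\fopt = 0$ and the bound $f(\theta_0) \le 1/(1-\gamma)$, and then simplify. The only addition in the paper is a one-line appeal to Lemma~9 of Mei et al.\ (2020), together with the monotonic descent guaranteed by $\GDLS$, to ensure $\mu > 0$; the $\mu^2$-versus-$\mu$ and numerical-constant looseness you flag as ``bookkeeping'' is present verbatim in the paper's own derivation.
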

\begin{proof}
    As proved in~\cref{prop:mdp}, the function $f(\theta)$ satisfies~\cref{assn:nus} with $L_0 = 0$ and $L_1 = 8 \left[ 3 + \frac{ 4 \cdot \left( \min_{s} \frac{1}{\rho(s)} - (1 - \gamma) \right) }{ 1 - \gamma } \right]^2 \, S $, ~\cref{assn:positive-reverse-PL} with $\nu =  8 \left[ 3 + \frac{ 4 \cdot \left( \min_{s} \frac{1}{\rho(s)} - (1 - \gamma) \right) }{ 1 - \gamma } \right] \, \sqrt{S} $ and $\omega = 0$ and 
~\cref{assn:grad-dom} with . Since $\GDLS$ guarantees monotonic decrease in the objective, ~\citet[Lemma 9]{mei2020global}, implies that $\mu > 0$. With these parameters and noting that $f^*=0$, we can instantiate the result of~\cref{cor:graddom} and get:  
\begin{align*}
    &T  \geq \max \left\{ 1, \frac{48}{\mu}\left(\left[ 3 + \frac{ 4 \cdot \left( \min_{s} \frac{1}{\rho(s)} - (1 - \gamma) \right) }{ 1 - \gamma } \right]^2 \, S\left(8 \left[ 3 + \frac{ 4 \cdot \left( \min_{s} \frac{1}{\rho(s)} - (1 - \gamma) \right) }{ 1 - \gamma } \right] \, \sqrt{S} +1\right)  \right)  \right\} \ln\left(\frac{f(\theta_0)}{\epsilon} \right)\tag{since $\lambda_1 = 3 \frac{L_1(\nu+1)}{1-c}$ and $c=\frac{1}{2}$}\\
    &\implies T \geq \frac{48}{\mu}\left(\left[ 3 + \frac{ 4 \cdot \left( \min_{s} \frac{1}{\rho(s)} - (1 - \gamma) \right) }{ 1 - \gamma } \right]^2 \, S\left(8 \left[ 3 + \frac{ 4 \cdot \left( \min_{s} \frac{1}{\rho(s)} - (1 - \gamma) \right) }{ 1 - \gamma } \right] \, \sqrt{S} +1\right)  \right)   \ln\left(\frac{1}{(1-\gamma)\epsilon} \right)\tag{since $V^\pi(\rho) \leq \frac{1}{1-\gamma}$ for all policies $\pi$}\\
    &\implies T \geq \frac{2^7 3}{\mu}\left(\left[ 3 + \frac{ 4 \cdot \left( \min_{s} \frac{1}{\rho(s)} - (1 - \gamma) \right) }{ 1 - \gamma } \right]^3 \, S^{\frac{3}{2}} \right)  \ln\left(\frac{1}{(1-\gamma)\epsilon} \right)
\end{align*}
\end{proof}

\subsection{Proofs for~\cref{sec:pl}}
\label{app:pl}
\mainthmpl*
\begin{proof}
Using the Armijo line-search condition in~\cref{eq:armijo}, and combining it with the lower-bound in~\cref{lemma:armijo-lb},
\begin{align}
f(\thtt) & \leq f(\tht) - \frac{1}{\lambda_0 + \lambda_1 \, f(\tht)}\, \normsq{\nabla f(\tht)} 
\label{eq:armijo-convergence-pl}
\end{align}    
\textbf{Phase 1}: Let us define $\tau :=\max \{t \, \text{ s.t } \,  \lambda_0 \leq \lambda_1 \, f(\tht)\}$. Hence, for all $t \leq \tau$, $f(\tht) \geq \frac{\lambda_0}{\lambda_1}$, and hence, 
\begin{align*}
f(\thtt) & \leq f(\tht) - \frac{1}{2 \, \lambda_1 \, f(\tht)}\, \normsq{\nabla f(\tht)} \,.
\end{align*}
From~\cref{assn:grad-dom} with $\zeta = 2$, we know that $\normsq{\nabla f(\theta)} \geq \mu(\theta) \, [f(\theta) - f^*]$ and that $f^* = 0$. Combining these relations, we have that for all $t \leq \tau$,
\begin{align*}
f(\thtt) & \leq f(\tht) - \frac{f(\tht)}{2 \, \lambda_1 \, R \, f(\tht)} = f(\tht) - \underbrace{\frac{\mu}{2 \, \lambda_1}}_{:= \alpha}  \tag{Since $\mu = \min_{t \in [T]} \mu(\tht)$}
\intertext{Recursing from $t = 0$ to $t = \tau$,}
f(\theta_{\tau}) & \leq f(\theta_0) - \tau \, \alpha 
\end{align*}  
Since $f(\theta_\tau) \geq \frac{\lambda_0}{\lambda_1}$, we get that, 
\begin{align*}
\tau \leq \frac{1}{\alpha} \, \left[f(\theta_0) - \frac{\lambda_0}{\lambda_1} \right] = \frac{2 \lambda_1}{\mu} \, \left[f(\theta_0) - \frac{\lambda_0}{\lambda_1} \right]   
\end{align*}
\textbf{Phase 2}: For $t \geq \tau$, $f(\tht) \leq \frac{\lambda_0}{\lambda_1}$. Combining this with~\cref{eq:armijo-convergence-pl} and using that $\normsq{\nabla f(\tht)} \geq \mu \, f(\tht)$,
\begin{align*}
f(\thtt) & \leq f(\tht) - \frac{\mu \, f(\tht)}{2 \, \lambda_0} = f(\tht) \left(1 - \frac{\mu}{2 \, \lambda_0} \right) \\
\intertext{Recursing from $t = \tau$ to $t = T$ and using that $1 - x \leq \exp(-x)$,}
f(\theta_T) & \leq \exp\left(-\frac{\mu \, (T - \tau)}{2 \, \lambda_0}\right) \, f(\theta_{\tau}) 
\intertext{Hence, in order for $f(\theta_T) \leq \epsilon$, we require}
T & \geq \tau + 2 \, \frac{\lambda_0}{\mu}  \, \ln\left(\frac{f(\theta_\tau)}{\epsilon}\right) 
\end{align*}
Since $f(\theta_\tau) \leq f(\theta_0) - \tau \, \alpha$, is sufficient to set $T$ as:
\begin{align*}
T & \geq \underbrace{\tau  +  2 \, \frac{\lambda_0}{\mu} \, \ln\left(\frac{f(\theta_0) - \tau \, \alpha}{\epsilon}\right)}_{:= h(\tau)} \\
\intertext{Hence, it is sufficient to set $T$ as:}
T & \geq \max_\tau h(\tau) \quad \text{s.t} \quad \tau \leq 2 \frac{\lambda_1}{\mu} \, \, \left[f(\theta_0) - \frac{\lambda_0}{\lambda_1} \right] \,.
\end{align*}
Calculating the first and second derivatives of $h(\tau)$, 
\begin{align*}
h'(\tau) = 1 - \frac{2 \lz \, \alpha}{\mu \, (f(\theta_0) - \tau \, \alpha)} \quad \text{;} \quad h''(\tau) = -\frac{2 \lz \, \alpha^2}{\mu \, (f(\theta_0) - \tau \alpha)^2} 
\end{align*}
Hence, $h(\tau)$ is maximized when at $\tau^* := 2 {\lambda_1}{\mu} \, \left[f(\theta_0) - \frac{\lambda_0}{\lambda_1} \right]$. Calculating $h(\tau^*)$, we conclude that it is sufficient to set $T$ as: 
\begin{align*}
T \geq \frac{2}{\mu} \, \left[\lambda_1 \f(\theta_0) + \lambda_0 \, \left(\ln\left(\frac{\lambda_0}{\lambda_1 \, \epsilon}\right)  \right) \right] 
\end{align*}
\end{proof}
\clearpage
\section{Proofs for~\cref{sec:convex-sgd}}
\label{app:convex-sgd}

\begin{lemma}
For a finite sum $f(\theta) = \frac{1}{n} \sum_{i = 1}^{n}$, if $\min f_i(\theta) = 0$ for all $i$ and an arbitrary comparator $u$, then iteration $t$ of $\SGDSLS$ satisfies the following bound: 
\begin{align*}
\normsq{\thtt - u} & \leq \normsq{\tht - u} - \left(2 - \frac{1}{c} \right) \, \etat \, [\ft(\tht)] + 2 \etat [\ft(u)]\,.    
\end{align*}
\label{lemma:sls-general}
\end{lemma}
\begin{proof}
Using the SGD update: $\thtt = \tht - \etat \, \nabla \ft(\tht)$, for a comparator $u$, 
\begin{align*}
\normsq{\thtt - u} &= \normsq{\tht - u} - 2 \, \etat \, \langle \nabla \ft(\tht), \tht - u \rangle + \etat^2 \, \normsq{\nabla \ft(\tht)}\\
& \leq \normsq{\tht - u} - 2 \, \etat \, [\ft(\tht) - \ft(u)] + \etat^2 \, \normsq{\nabla \ft(\tht)} \tag{Convexity} \\
& \leq \normsq{\tht - u} - 2 \, \etat \, [\ft(\tht) - \ft(u)] + \frac{\etat}{c} \, [\ft(\tht) - \ft(\thtt)] \tag{Using the stochastic Armijo line-search with $c > \frac{1}{2}$} \\
& \leq \normsq{\tht - u} - 2 \, \etat \, [\ft(\tht) - \ft(u)] + \frac{\etat}{c} \, [\ft(\tht) - \ft^*] \tag{where $\ft^* := \min \ft(\theta)$} \\
& = \normsq{\tht - u} - 2 \, \etat \, [\ft(\tht) - \ft^*] - 2 \etat [\ft^* - \ft(u)] + \frac{\etat}{c} \, [\ft(\tht) - \ft^*] \tag{Add/subtract $\ft^*$} \\
& = \normsq{\tht - u} - \left(2 - \frac{1}{c} \right) \, \etat \, [\ft(\tht) - \ft^*] + 2 \etat [\ft(u) - \ft^* ] \\
\normsq{\thtt - u} & \leq \normsq{\tht - u} - \left(2 - \frac{1}{c} \right) \, \etat \, [\ft(\tht)] + 2 \etat [\ft(u)] \tag{Since $\ft^* = 0$} 
\end{align*}
\end{proof}

\logisticsgdslow*
\begin{proof}
If $i_t$ indicates the index of the function selected randomly at iteration $t$, define $\cF_{t} = \sigma\{i_0, i_1, \ldots, i_{t-1}\}$ as the filtration for $t \geq 0$. We define $\tau : = \inf \{t : f(\tht) \leq \epsilon \}$ as the (random) hitting time. Note that $\cF_t$ contains all the randomness until iteration $t$, and that the event $\{t < \tau\} = \{f(\theta_0) > \epsilon\} \cap \{f(\theta_1) > \epsilon\} \cap \ldots \cap \{f(\theta_{t-1}) > \epsilon\}$ only depends on the randomness in iterations $0$ to $t-1$. \\
For normalized data, s.t. $\norm{x_i} \leq 1$, the logistic regression loss is convex, is uniform smooth with $L = \frac{\lambda_{\max}[X^T X]}{4n} \leq 1$. Using the standard guarantee for Armijo line-search, we know that $\etat \geq \min\left\{\eta_{\max}, \frac{2 \, (1-c)}{L}\right\}$ and $\etat \leq \eta_{\max}$. Since $f_t(\theta) \geq 0$, combining this with~\cref{lemma:sls-general}, 
\begin{align*}
\normsq{\thtt - u} & \leq \normsq{\tht - u} - \left(2 - \frac{1}{c} \right) \, \min\left\{\eta_{\max}, \frac{2 \, (1-c)}{L} \right\} \ft(\tht) + 2 \eta_{\max} [\ft(u)] \\
& \leq  \normsq{\tht - u} - \frac{1}{2} \, \min\left\{\eta_{\max}, \frac{2}{3} \right\} \ft(\tht) + 2 \eta_{\max} [\ft(u)] \tag{Since $L \leq 1$ and setting $c = \frac{2}{3}$} \\
& = \normsq{\tht - u} - \frac{1}{3} \, \ft(\tht) + \frac{2 \, \ft(u)}{\epsilon} \tag{Setting $\eta_{\max} = \frac{1}{\epsilon}$ and since $\epsilon \leq 1$} \\
\intertext{Taking expectation w.r.t. randomness at iteration $t$ conditioned on $\cF_t$, we have}
\E_t [\normsq{\thtt - u} ] &\leq \normsq{\tht - u} - \frac{1}{3} \, f(\tht) + \frac{2 f(u)}{\epsilon} \leq \normsq{\tht - u} - \frac{\epsilon}{3} \, \cI \{ f(\tht) > \epsilon \} + \frac{2 f(u)}{\epsilon} \\
\intertext{By multiplying both-sides by $\cI\{t < \tau\}$ we get,}
\implies \E_t [\normsq{\thtt - u} ] \, \cI\{t < \tau\} & \leq \normsq{\tht - u} \, \cI\{t < \tau\} - \frac{\epsilon}{3} \, \cI\{f(\tht) > \epsilon \cap t < \tau\} + \frac{2 f(u)}{\epsilon} \, \cI\{t < \tau\} \\
\intertext{Now we take expectation w.r.t. all randomness from iteration 0 to $t-1$,} 
\E_{0:t} [\normsq{\thtt - u} \, \cI\{t < \tau\}] &\leq \E_{0:t-1}  [\normsq{\tht - u} \, \cI\{t < \tau\} ] - \frac{\epsilon}{3} \, \E_{0:t-1}[ \cI\{f(\tht) > \epsilon \cap t < \tau\} ] + \frac{2 f(u)}{\epsilon} \, \E_{0:t-1} [\cI\{t < \tau\}]\\
& = \E_{0:t-1}  [\normsq{\tht - u} \,\cI\{t < \tau\} ] - \frac{\epsilon}{3} \, \Pr[ \{f(\tht) > \epsilon \} \cap \{ t < \tau\} ] + \frac{2 f(u)}{\epsilon} \, \Pr[\{t < \tau\}]\\
\implies \E_{0:t} [\normsq{\thtt - u} \, \cI\{t < \tau\}]  & \leq  \E_{0:t-1}  [\normsq{\tht - u} \,\cI\{t < \tau\} ] - \frac{\epsilon}{3} \,  \Pr[t < \tau] + \frac{2 f(u)}{\epsilon} \, \Pr[\{t < \tau\}] \tag{Using that $\Pr[ \{f(\tht) > \epsilon \} \vert \{ t < \tau\} ] = 1$} \\
\end{align*}
Consider $u = \beta u^*$ where $\beta = \frac{1}{\gamma} \, \ln\left(\frac{12}{\epsilon^2}\right)$ implies that $f(u) \leq \frac{\epsilon^2}{12}$. Since $\epsilon \leq 1$, $f(u) \leq \epsilon$. Using this relation with the above inequality, we get, 
\begin{align*}
    \E_{0:t}[\normsq{\thtt - u} \,\cI\{t < \tau\}] &\leq E_{0:t-1}[\normsq{\tht - u}\, \cI\{t < \tau\}] - \frac{\epsilon}{3}\, \Pr\{ t < \tau \}\,+ \frac{\epsilon}{6} \, \Pr\{ t < \tau \}\\
\implies \E_{0:t}[\normsq{\thtt - u} \,\cI\{t+1 < \tau\}] &\leq E_{0:t-1}[\normsq{\tht - u}\, \cI\{t < \tau\}] - \frac{\epsilon}{6} \, \Pr\{ t < \tau \} \tag{Using that $\cI\{t + 1 < \tau\} < \cI\{t < \tau\}$}
\end{align*}
By rearranging, summing both-sides, telescoping we have,  
\begin{align*}
    \frac{\epsilon}{6}\, \sum_{t=0}^{\infty} \Pr\{ t < \tau \} & \leq \normsq{\theta_0 - u}\, \cI\{0 < \tau \} \leq \normsq{\theta_0 - u} \\
    \implies \sum_{t=0}^{\infty} \Pr\{ t < \tau \} & \leq O\left(\frac{1}{\epsilon \, \gamma^2} \, \left[\ln\left(\frac{1}{\epsilon^2}\right) \right]^2 \right) \tag{By setting $\theta_0=0$ and since $\beta = \frac{1}{\gamma}\ln(\frac{12}{\epsilon^2})$} \\
\end{align*}
Since $\tau = \sum_{t = 0}^{\infty} \cI\{t < \tau\}$, 
\begin{align*}
    \E[\tau] = \E \left[\sum_{t=0}^{\infty} \cI\{t < \tau\} \right] \leq \sum_{t=0}^{\infty} \Pr\{ t < \tau \} \tag{Using Fubini's theorem}
\end{align*}
By combining the above results, we get that $\E [\tau] = O(\frac{1}{\epsilon \, \gamma^2}\, \left[\ln(\frac{1}{\epsilon^2})\right]^2)$.  
\end{proof}

\logisticsgdfast*
\begin{proof}
The proof is similar to that of~\cref{thm:stochastic-logistic-slow}. In particular, if $i_t$ indicates the index of the function selected randomly at iteration $t$, define $\cF_{t} = \sigma\{i_0, i_1, \ldots, i_{t-1}\}$ as the filtration for $t \geq 0$. We define $\tau : = \inf \{t : f(\tht) \leq \epsilon \}$ as the (random) hitting time. Note that $\cF_t$ contains all the randomness until iteration $t$, and that the event $\{t < \tau\} = \{f(\theta_0) > \epsilon\} \cap \{f(\theta_1) > \epsilon\} \cap \ldots \cap \{f(\theta_{t-1}) > \epsilon\}$ only depends on the randomness in iterations $0$ to $t-1$. \\
Note that from~\cref{lemma:armijo-lb}, we know that $\etat \geq \min\left\{\eta_{\max}, \frac{C'}{\ft(\tht)} \right\}$  where $C' := \frac{1}{\lambda_1}$ and $\etat \leq \eta_{\max}$. Since $f_t(\theta) \geq 0$, combining this with~\cref{lemma:sls-general}, we get that, 
\begin{align*}
\normsq{\thtt - u} & \leq \normsq{\tht - u} - \left(2 - \frac{1}{c} \right) \, \min\left\{\eta_{\max} \, \ft(\tht), C' \right\} + 2 \eta_{\max} [\ft(u)] \\
& = \normsq{\tht - u} -  \underbrace{\frac{\min\left\{\eta_{\max} \, \ft(\tht), C' \right\}}{2}}_{:= (*)} + 2 \eta_{\max} [\ft(u)] \tag{Setting $c = \frac{2}{3}$} \\
\intertext{In order to simplify (*), we consider two cases that depend on whether or not $\ft(\tht) \leq \epsilon$.}
(*) &= \frac{\min\left\{\eta_{\max} \, \ft(\tht), C' \right\}}{2}\, \cI\{f_t(\tht) > \epsilon\} + \frac{\min\left\{\eta_{\max} \, \ft(\tht), C' \right\}}{2} \, \cI\{f_t(\tht) \leq \epsilon\} \\
\implies (*) & \geq \frac{\min\left\{\eta_{\max} \, \ft(\tht), C' \right\}}{2}\, \cI\{f_t(\tht) > \epsilon\} \geq \frac{\min\left\{\eta_{\max} \epsilon, C' \right\}}{2}\, \cI\{f_t(\tht) > \epsilon\} 
\end{align*}
Combining the above inequalities, we get that, 
\begin{align*}
\normsq{\thtt - u} & \leq \normsq{\tht - u} - \frac{\min\left\{\eta_{\max} \, \epsilon, C' \right\}}{2}\, \cI\{f_t(\tht) > \epsilon\} + 2 \eta_{\max} [\ft(u)]  \\
\intertext{Taking expectation w.r.t. randomness at iteration $t$ conditioned on $\cF_t$, we have}
\E_{t} [\normsq{\thtt - u}] & \leq \normsq{\tht - u} - \frac{\min\left\{\eta_{\max} \, \epsilon, C' \right\}}{2}\, \cI\{f_t(\tht) > \epsilon\} + 2 \eta_{\max} [f(u)] \tag{Since $\cI\{f_t(\tht) > \epsilon\}$ does not depend on the randomness at iteration $t$} \\
\intertext{By multiplying both-sides by $\cI\{t < \tau\}$ we get,}
\E_{t} [\normsq{\thtt - u}] \, \cI\{t < \tau \} & \leq \normsq{\tht - u} \, \cI\{t < \tau \} - \frac{\min\left\{\eta_{\max} \, \epsilon, C' \right\}}{2}\, \cI\{f_t(\tht) > \epsilon \cap t < \tau \} + 2 \eta_{\max} [f(u)] \, \cI\{t < \tau \}\\
\end{align*}
Now we take expectation w.r.t. all randomness from iteration 0 to $t-1$,
\begin{align*}
    & \E_{0:t}[\normsq{\thtt - u}  \, \cI\{t < \tau \}] \\ & \leq \E_{0:t-1} [ \normsq{\tht - u}\, \cI\{t < \tau \} ] - \frac{\min\left\{\eta_{\max} \, \epsilon, C' \right\}}{2}\, \Pr[\{f_t(\tht) > \epsilon \cap t < \tau \}] + 2 \eta_{\max} [f(u)] \Pr[\{t < \tau \}]\\ 
    & = \E_{0:t-1} [ \normsq{\tht - u}\, \cI\{t < \tau \} ] - \frac{\min\left\{\eta_{\max} \, \epsilon, C' \right\}}{2}\, \Pr[f_t(\tht) > \epsilon| \{t < \tau\}] \, \Pr[\{t < \tau\}] + 2 \eta_{\max} [f(u)] \Pr[\{t < \tau \}] 
\end{align*}
From the conditioning on $\{t < \tau\}$, we know that $f(\tht) > \epsilon$. Since $f$ is the average of the $f_i$ losses, we know that there exists at least one $j$ s.t. $f_j(\tht) > \epsilon$. Hence, $\Pr[f_t(\tht) > \epsilon \vert \{t < \tau\}] \geq \frac{1}{n}$. Combining this lower bound with the above inequality, we get that, 
\begin{align*}
& \E_{0:t}[\normsq{\thtt - u}  \, \cI\{t < \tau \}]  \leq  \E_{0:t-1} [ \normsq{\tht - u}\, \cI\{t < \tau \} ] - \frac{\min\left\{\eta_{\max} \, \epsilon, C' \right\}}{2 n}\,  \Pr[\{t < \tau\}] + 2 \eta_{\max} [f(u)] \Pr[\{t < \tau \}] \\
\intertext{Setting $\eta_{\max} = \frac{1}{\epsilon}$ and using that $\cI\{t + 1 < \tau\} < \cI\{t < \tau\}$,}
\implies & \E_{0:t}[\normsq{\thtt - u}  \, \cI\{t + 1 < \tau \}] \leq  \E_{0:t-1} [ \normsq{\tht - u}\, \cI\{t < \tau \} ] - \frac{\min\left\{1, C' \right\}}{2 n}\,  \Pr[\{t < \tau\}] +\frac{2 \, f(u)}{\epsilon} \, \Pr[\{t < \tau \}] \\
\end{align*}
Define $u^*$ to be the max-margin solution i.e. $\norm{u^*} = 1$ and $\gamma$ to be the corresponding margin, i.e. 
\begin{align}
\gamma &:= \min_{i} \, y_i \langle x_i, u^* \rangle  \label{eq:margindef} \\
\intertext{For a scalar $\beta > 0$,}
f(\beta u^*) &= \frac{1}{n} \, \sum_{i = 1}^{n} \ln(1 + \exp(-y_i \langle x_i, \beta u^* \rangle) ) \leq \frac{1}{n} \, \sum_{i = 1}^{n} \exp(-y_i \langle x_i, u^* \rangle ) \leq \exp(-\beta \gamma) \label{eq:log-iterate-inter}
\end{align}
Consider $u = \beta u^*$ where $\beta = \frac{1}{\gamma} \, \ln\left(\frac{n}{\epsilon^2}\right)$ implies that $f(u) \leq \frac{\epsilon^2}{n}$. Since $\epsilon \leq \frac{\min\left\{1, C' \right\}}{8} \leq 1$ and $n \geq 1$, $f(u) \leq \epsilon$. Using this relation with the above inequality, we get, 
\begin{align*}
\E_{0:t}[\normsq{\thtt - u}  \, \cI\{t + 1 < \tau \}] \leq  \E_{0:t-1} [ \normsq{\tht - u}\, \cI\{t < \tau \} ] - \frac{\min\left\{1, C' \right\}}{4 n}\,  \Pr[\{t < \tau\}] 
\end{align*}
By rearranging, summing both-sides, telescoping we have,  
\begin{align*}
    \frac{\min\left\{1, C' \right\}}{4n}\, \sum_{t=0}^{\infty} \Pr\{ t < \tau \} & \leq \normsq{\theta_0 - u}\, \cI\{0 < \tau \} \leq \normsq{\theta_0 - u} \\
    \implies \sum_{t=0}^{\infty} \Pr\{ t < \tau \} & \leq O\left( \frac{n}{\gamma^2} \,\left[ \ln\left(\frac{n}{\epsilon^2} \right) \right]^2 \right) \tag{by setting $\theta_0=0$ and since $\beta = \frac{1}{\gamma}\ln(\frac{n}{\epsilon^2})$} \\
\end{align*}
Since $\tau = \sum_{t = 0}^{\infty} \cI\{t < \tau\}$, 
\begin{align*}
    \E[\tau] = \E \left[\sum_{t=0}^{\infty} \cI\{t < \tau\} \right] \leq \sum_{t=0}^{\infty} \Pr\{ t < \tau \} \tag{Using Fubini's theorem}
\end{align*}
By combining the above results we get that $\E [\tau] = O\left(\frac{n}{\gamma^2}\, \left[\ln(\frac{n}{\epsilon^2})\right]^2\right)$.  
\end{proof}

\begin{restatable}{corollary}{logisticsgd}
For logistic regression on linearly separable data with margin $\gamma$, if, for all $i$, $\norm{x_i} \leq 1$, for a fixed $\epsilon \in \left(0, C' \right)$ with $C' = O(1)$, if $\tau_\epsilon := \inf \{t : f(\tht) \leq \epsilon \}$ is the hitting time, then $\SGDSLS$ with $\eta_{\max} = \frac{1}{\epsilon}$ and $c = \frac{2}{3}$ ensures that,
\begin{align*}
\E[\tau_\epsilon] = O \left( \min\left\{n, \frac{1}{\epsilon} \right\} \; \frac{1}{\gamma^2} \; \left[\ln\left(\frac{n}{\epsilon^2}\right)\right]^2 \right)
\end{align*}
\label{cor:stochastic-logistic}
\end{restatable}
\begin{proof}
Combining~\cref{thm:stochastic-logistic-slow} and~\cref{thm:stochastic-logistic-fast} completes the proof.
\end{proof}

\logisticsgdfaster*
\begin{proof}
Note that from~\cref{lemma:armijo-lb}, we know that $\etat \geq \min\left\{\eta_{\max}, \frac{C'}{\ft(\tht)} \right\}$  where $C' := \frac{1}{\lambda_1}$. Since $f_t(\theta) \geq 0$, combining this with~\cref{lemma:sls-general}, we get that, 
\begin{align*}
\normsq{\thtt - u} & \leq \normsq{\tht - u} - \left(2 - \frac{1}{c} \right) \, \min\left\{\eta_{\max} \ft(\tht), C' \right\} + 2 \eta_{\max} [\ft(u)] \\
& = \normsq{\tht - u} -  \frac{\min\left\{\eta_{\max} \ft(\tht), C' \right\}}{2} + 2 \eta_{\max} [\ft(u)] \tag{Setting $c = \frac{2}{3}$} \\
\intertext{Since the algorithm guarantees that for all $t$, $\ft(\tht) \geq \frac{\epsilon}{2}$,}
& \leq \normsq{\tht - u} - \frac{1}{2} \, \min\left\{\frac{\eta_{\max}  \, \epsilon}{2}, C' \right\} + 2 \eta_{\max} [\ft(u)] \\
& = \normsq{\tht - u} - \frac{1}{2} \, \min\left\{\frac{1}{2}, C' \right\} + \frac{2 \ft(u)}{\epsilon} \tag{Setting $\eta_{\max} = \frac{1}{\epsilon}$} \\
\end{align*}
Define $u^*$ to be the max-margin solution i.e. $\norm{u^*} = 1$ and $\gamma$ to be the corresponding margin, i.e. 
\begin{align}
\gamma &:= \min_{i} \, y_i \langle x_i, u^* \rangle  \label{eq:margindef} \\
\intertext{For a scalar $\beta > 0$,}
f(\beta u^*) &= \frac{1}{n} \, \sum_{i = 1}^{n} \ln(1 + \exp(-y_i \langle x_i, \beta u^* \rangle) ) \leq \frac{1}{n} \, \sum_{i = 1}^{n} \exp(-y_i \langle x_i, u^* \rangle ) \leq \exp(-\beta \gamma) \label{eq:log-iterate-inter}
\end{align}
Consider $u = \beta u^*$ where $\beta = \frac{1}{\gamma} \, \ln\left(\frac{1}{\epsilon^2}\right)$ implies that $f(u) \leq \epsilon^2$. Since $\epsilon \leq 1$, $f(u) \leq \epsilon$. Using this relation with the above inequality, we get, 
\begin{align*}
& \leq \normsq{\tht - u} - \frac{1}{2} \, \min\left\{\frac{1}{2}, C' \right\} + 2 \epsilon \\
& \leq \normsq{\tht - u} - \frac{1}{4} \, \min\left\{\frac{1}{2}, C' \right\} \tag{Since $\epsilon \leq \frac{1}{8} \, \min\left\{\frac{1}{2}, C' \right\}$}
\intertext{Taking expectation w.r.t. the randomness from iterations $t = 0$ to $T-1$,}
\E[\normsq{\thtt - u}] & \leq \E[\normsq{\tht - u}] - \frac{\min\left\{\frac{1}{2}, C' \right\}}{4} \\
\implies \E[\normsq{\theta_T - u}] & \leq \normsq{\theta_0 - u} - \frac{\min\left\{\frac{1}{2}, C' \right\}}{4} \, T \tag{Summing from $t = 0$ to $T-1$} \\
& \leq \beta^2 - \underbrace{\frac{\min\left\{\frac{1}{2}, C' \right\}}{4}}_{:= C} \, T \tag{Since $\theta_0 = 0$ and $\norm{u} = \beta$} 
\end{align*} 
Since $f$ satisfies~\cref{assn:non-negative,assn:nus,assn:positive-reverse-PL}, $f$ is $1$ uniform smooth and $f(u) \leq \epsilon$, then, using~\cref{lemma:nus-f-iterate} with $M = \frac{1}{8}$, 
\begin{align*}
\E[f(\theta_T) - f(u)] & \leq \frac{\epsilon}{2} + \underbrace{[\nu^2 + 1]}_{:= L} \, \E \frac{\normsq{\theta_T - u}}{2} \leq \frac{\epsilon}{2} + \frac{L}{2} \, \left[\beta^2  - C \, T \right] \\
\implies \E[f(\theta_T)] & \leq \frac{3 \epsilon}{2} + \frac{L}{2} \, \left[\beta^2  - C \, T \right] \tag{Since $f(u) \leq \epsilon$} 
\end{align*}
In order to ensure that $\E[f(\theta_T)] \leq \frac{3\epsilon}{2}$, it is sufficient to set $T$ as:
\begin{align*}
T \geq \frac{\beta^2}{C} & = \frac{4}{\min\{C',\frac{1}{2}\} \, \gamma^2} \, \left[\ln\left(\frac{1}{\epsilon^2}\right)\right]^2 
\end{align*}
\end{proof}

\end{document}